\pgfplotsset{compat=newest,unit code/.code={\si{#1}},plot coordinates/math parser=false,grid style={lightgray}, ylabel right/.style={
        after end axis/.append code={
            \node [rotate=90, anchor=north] at (rel axis cs:1,0.5) {#1};
        }   
    }}
\tikzstyle{block} = [draw, rectangle, minimum height=2em, minimum width=5em]
\tikzstyle{addon} = [draw, rectangle, rounded corners]
\tikzstyle{pinstyle} = [pin edge={<-,thin,black}]
\tikzstyle{pinstyle2} = [pin edge={->,thin,black}]
\tikzstyle{mult} = [draw, isosceles triangle]
\tikzstyle{circ} = [draw, circle]
\tikzstyle{coord} = [coordinate]
\tikzstyle{circ2} = [draw, circle,minimum width=3pt, inner sep=0]
\tikzset{>=latex}
\tikzset{radiation/.style={{decorate,decoration={expanding
waves,angle=90,segment length=4pt}}}}
\newcommand{\asymcloud}[2][.1]{%
\begin{scope}[#2]
\pgftransformscale{#1}%
\pgfpathmoveto{\pgfpoint{261 pt}{115 pt}} 
  \pgfpathcurveto{\pgfqpoint{70 pt}{107 pt}}
                 {\pgfqpoint{137 pt}{291 pt}}
                 {\pgfqpoint{260 pt}{273 pt}} 
  \pgfpathcurveto{\pgfqpoint{78 pt}{382 pt}}
                 {\pgfqpoint{381 pt}{445 pt}}
                 {\pgfqpoint{412 pt}{410 pt}}
  \pgfpathcurveto{\pgfqpoint{577 pt}{587 pt}}
                 {\pgfqpoint{698 pt}{488 pt}}
                 {\pgfqpoint{685 pt}{366 pt}}
  \pgfpathcurveto{\pgfqpoint{840 pt}{192 pt}}
                 {\pgfqpoint{610 pt}{157 pt}}
                 {\pgfqpoint{610 pt}{157 pt}}
  \pgfpathcurveto{\pgfqpoint{531 pt}{39 pt}}
                 {\pgfqpoint{298 pt}{51 pt}}
                 {\pgfqpoint{261 pt}{115 pt}}
\pgfusepath{fill}         
\end{scope}}    
\newenvironment{renum}
  {\begin{enumerate}}
  {\end{enumerate}}
\newcommand{\myitem}[1]{%
\item[#1]\protected@edef\@currentlabel{#1}%
}
\newtheorem{theo}{Theorem}
\newtheorem{lem}{Lemma}
\newtheorem{cor}{Corollary}
\newtheorem{remark}{Remark}
\newtheorem{assume}{Assumption}
\newcommand{\fakepar}[1]{\vspace{1mm}\noindent\textbf{#1.}}
\DeclareSIUnit{\belmilliwatt}{Bm}
\DeclareSIUnit{\dBm}{\deci\belmilliwatt}
\newcommand{\norm}[1]{\left\lVert#1\right\rVert_1}
\newcommand{\abs}[1]{\left\lvert#1\right\rvert}
\newcommand*\diff{\mathop{}\!\mathrm{d}}
\DeclareMathOperator*{\R}{\mathbb{R}}
\newcommand{\transp}{\text{T}}
\DeclareMathOperator*{\argmax}{arg\,max}
\newcommand{\traj}[3]{\xi_{(#1,#2,#3)}}
\newcommand{\x}[1]{\tilde{x}_{#1}}
\DeclareMathOperator{\Rcon}{\ensuremath{R_\epsilon^\mathrm{c}}}
\DeclareMathOperator{\Rglob}{\ensuremath{R_\epsilon}}
\let\originalleft\left
\let\originalright\right
\renewcommand{\left}{\mathopen{}\mathclose\bgroup\originalleft}
\renewcommand{\right}{\aftergroup\egroup\originalright}
\newcommand\figref[1]{Fig.~\ref{#1}}
\newcommand\tabref[1]{Table~\ref{#1}}
\newcommand\secref[1]{Sec.~\ref{#1}}
\newcommand{\eg}{e.g.,\xspace}
\newcommand{\ie}{i.e.,\xspace}
\newcommand{\cf}{cf.\@\xspace}
\newcommand{\capt}[1]{\mdseries{\emph{#1}}}
\newcommand{\am}[1]{\footnote{{\bf\color{blue!70!black} Alon: #1}}}
\newcommand{\db}[1]{\footnote{{\bf\color{green!50!black} Dominik: #1}}}
\newcommand{\st}[1]{\footnote{{\bf\color{purple!90!black} Sebastian: #1}}}
\newcommand{\mt}[1]{\footnote{{\bf\color{orange!50!black} Matteo: #1}}}
\newcommand{\am}[1]{}
\newcommand{\db}[1]{}
\newcommand{\st}[1]{}
\newcommand{\mt}[1]{}
\newcommand{\safeopt}{\textsc{SafeOpt}\xspace}
\newcommand{\ourmethod}{\textsc{GoSafe}\xspace}
\title{\LARGE \bf
GoSafe: Globally Optimal Safe Robot Learning
}
\author{Dominik Baumann$^{1,2}$, Alonso Marco$^{2}$, Matteo Turchetta$^{3}$, and Sebastian Trimpe$^{1,2}$
\thanks{This work was supported in part by the German Research Foundation (DFG) within SPP 1914 (grant TR 1433/1-1), the Cyber Valley Initiative, and the Max Planck Society.}
\thanks{$^{1}$Institute for Data Science in Mechanical Engineering, RWTH Aachen University, Germany
        {\tt\small \{dominik.baumann,trimpe\}@dsme.rwth-aachen.de}}%
\thanks{$^{2}$Max Planck Institute for Intelligent Systems, Stuttgart, Germany
        {\tt\small amarco@tuebingen.mpg.de}}%
\thanks{$^{3}$Department of Computer Science, ETH Z\"urich, Switzerland
        {\tt\small matteo.turchetta@inf.ethz.ch}}%
}
\newcommand{\mytitle}{\textbf{Accepted final version.}
To appear in \textit{Proc.\ of the International Conference on Robotics and Automation}. The main body is identical with the accepted final version (except for a minor correction and a minor clarification) while the appendix of this online version contains extended proofs of the main theorems.\\
\copyright 2021 IEEE. Personal use of this material is permitted. Permission
from IEEE must be obtained for all other uses, in any current or future
media, including reprinting/republishing this material for advertising or
promotional purposes, creating new collective works, for resale or
redistribution to servers or lists, or reuse of any copyrighted component of
this work in other works.}
\begin{document}
\bstctlcite{IEEEexample:BSTcontrol}

\maketitle
\thispagestyle{fancy}	
\pagestyle{empty}


\begin{abstract}
    When learning policies for robotic systems from data, safety is a major concern, as violation of safety constraints may cause hardware damage.
    \safeopt is an efficient Bayesian optimization (BO) algorithm that can learn policies while guaranteeing safety with high probability.
    However, its search space is limited to an initially given safe region.
    We extend this method by exploring outside the initial safe area while still guaranteeing safety with high probability.
    This is achieved by learning a set of initial conditions from which we can recover safely using a learned backup controller in case of a potential failure.
We derive conditions for guaranteed convergence to the global optimum and validate \ourmethod in hardware experiments.
\end{abstract}


\section{Introduction}
\label{sec:intro}
Algorithms that enable robotic systems to follow a trajectory or balance typically rely on a mathematical description of their dynamics through a model. 
Obtaining such a model is getting harder as robotic systems become more complex.
To mitigate the need for a dynamics model, model-free machine learning methods aim to directly learn policies from data.
However, such approaches require sufficient exploration, which can, due to the unknown dynamics, lead to failures, \ie violation of safety constraints.
In robotic systems, this should be avoided as it may cause hardware damage.

\safeopt ~\cite{berkenkamp2016safe} is a model-free algorithm that,
starting from a safe, albeit sub-optimal policy, explores new ones to improve the robot's performance while avoiding failures with high probability.
Crucially, it can only explore safe regions of the policy space connected to the initial policy.
Thus, if multiple disjoint safe areas exist, \safeopt cannot detect them and may miss the globally optimal safe policy (see \figref{fig:ill_example}).
Disjoint safe areas occur, \eg in parameterized linear systems~\cite{gryazina2006stability} and local optima can also be encountered when learning gaits of a bipedal robot~\cite{calandra2016bayesian}.

\looseness=-1
\safeopt approaches the policy search problem as a black-box optimization: it suggests a policy and receives information about its reward and safety without considering their generative process. 
However, when learning policies on dynamical systems like robots, we can monitor  the evolution of the system's state, and intervene in the experiment execution if there is an imminent danger. 
In this case, we can trigger a safe backup policy to ensure that all safety constraints are met. 
This allows us to evaluate potentially \enquote{unsafe} policies without violating any safety constraint. 
While \safeopt forgoes this possibility, \ourmethod exploits it to enable the exploration of disconnected safe regions in policy space, which, in turn, allows us to find globally optimal safe policies. 
To this end, we extend the definition of the safe set, which in the original \safeopt work is limited to the policy space, to a new set that joins policy and state space.
The projection of this set onto the policy space determines a set of provably safe backup policies, while its projection onto the state space determines the corresponding states from which a backup policy should be triggered. 
Thus, this set enables us to search globally for the optimal policy while preserving safety.



\begin{figure}
\centering
         
\begin{tikzpicture}
\begin{axis}
[
width=0.45\textwidth,
height=0.2\textheight,
enlargelimits=false,
ylabel={\textcolor{blue}{Reward}},
ylabel right={\textcolor{red}{Constraint}},
xlabel=Policy parameter,
ticks=none,
ymax=1, ymin=0
]
\addplot[domain=-1:1, samples=100, name path=reward, blue]{-2.26666667*x^4 + -0.26666667*x^3 + 1.76666667*x^2 + 0.26666667*x + 0.5};
\addplot[domain=-1:1, samples=100, name path=constraint, red]{19.3137*x^10 - 1.75115*x^9 - 34.2007*x^8 + 4.18593*x^7 + 19.9015*x^6 - 2.74682*x^5 - 7.90079*x^4 + 0.320741*x^3 + 2.78633*x^2 - 0.00869814*x + 0.3};
\addplot[domain=-1:1, dashed, very thick, red, name path=threshold]{0.6};
\path[name path=middle] (0,0) -- (0,1);
\draw[stealth-,thick](axis cs: -0.6,0.65)--(axis
cs:-0.4,0.3)node[below]{Initial safe area};
\draw[stealth-,thick](axis cs: 0,0.62)--(axis
cs:0,0.8)node[above]{Safety threshold};
\draw[stealth-,thick](axis cs: 0.55,0.65)--(axis
cs:0.35,0.15)node[below]{Additional safe area};

\path [name intersections={of=constraint and threshold}]; 
\coordinate (OP1) at (intersection-1);
\coordinate (OP2) at (intersection-2);
\coordinate (OP3) at (intersection-3);
\coordinate (OP4) at (intersection-4);

\begin{scope}[on background layer]
\draw[pattern=north west lines, pattern color=green!80!black, draw=none] (OP1|-0,0.6) rectangle (OP2|-0, 1);
\draw[pattern=north west lines, pattern color=green!80!black, draw=none] (OP3|-0,0.6) rectangle (OP4|-0, 1);
\end{scope}

\addplot+[blue,
  mark=x,
  only marks,
  mark size=5pt,
  mark options={line width=2pt},
  mark color=blue
] 
  coordinates
  {(0.617788, 0.94596)};


\addplot+[blue,
  mark=x,
  only marks,
  mark size=5pt,
  mark options={line width=2pt},
  mark color=blue
] 
  coordinates
  {(-0.63112, 0.742809)};

        
        
        
\end{axis}
\end{tikzpicture}
\caption{Illustrative example. \capt{If \safeopt is initialized in the left region, it will only be able to find the local optimum in this region but miss the global optimum.}}
\label{fig:ill_example}
\end{figure}
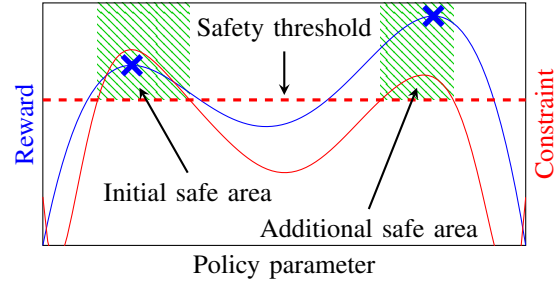

\fakepar{Contributions}
\looseness=-1
We propose \ourmethod, an extension of \safeopt that finds globally optimal safe policies for dynamical systems while fulfilling safety constraints at all times. 
Central to our approach is learning of a set of safe backup policies and states from which these should be triggered, \ie we essentially learn a safely operable region, which is useful in its own right.
We then leverage the obtained knowledge of this safely operable region to evaluate potentially unsafe policies and, therefore, discover better policies than those \safeopt is limited to.
Under appropriate assumptions on the system dynamics, we can guarantee convergence to the global optimum while avoiding failures with high probability.
The performance of \ourmethod is demonstrated in experiments on a Furuta pendulum\footnote{Video available at \url{https://youtu.be/YgTEFE_ZOkc}}.

\fakepar{Related work}
Learning control policies directly on robotic hardware requires efficient algorithms as data collection is time-consuming and causes hardware deterioration.
Bayesian optimization (BO)~\cite{mockus1978application} denotes a class of sample-efficient, black-box optimization algorithms that have, among others, been successfully applied to policy search on hardware platforms~\cite{antonova2017deep,calandra2016bayesian,marco2016automatic,turchetta2019robust}.
Based on BO, several algorithms have been proposed, which, apart from learning a control policy, also guarantee safety with high probability~\cite{sui2015safe,berkenkamp2016safe,berkenkamp2016bayesian,schreiter2015safe,schillinger2018safe}.
As discussed above, those methods are restricted to an initially given safe region and may miss the global optimum in case of disjoint safe regions (\cf \figref{fig:ill_example}).
Herein, we propose an extension of this line of work to also explore outside the initial safe area in hope of finding better optima, but without incurring in extra failures with high probability.
The above methods learn exclusively in parameter space.
Methods that also consider the state space, as we do to enable global exploration, have also been proposed.
However, they are based on a system model (given or learned from data)~\cite{berkenkamp2017safe,berkenkamp2016roa,akametalu2014reachability,turchetta2016safe,turchetta2019safe}.
In contrast, we present a \emph{model-free} approach.

Other methods for learning optimal policies while avoiding failures have been proposed in~\cite{heim2019learnable} leveraging viability theory and in~\cite{achiam2017constrained,chow2018lyapunov,garcia2015comprehensive}, which are based on reinforcement learning (RL).
While~\cite{heim2019learnable} cannot guarantee safety during exploration, RL methods are generally difficult to apply to real systems due to their sample inefficiency.
Another related line of work is Bayesian optimization under unknown constraints (BOC)~\cite{hernandez2016general,Gelbart2014,gardner2014bayesian,gramacy2011opti,schonlau1998global,picheny2014stepwise,marco2021robot}, which seeks to optimize an objective function subject to multiple constraints.
However, BOC assumes that failures come at no cost, thus incurring many failures during the search.
Recently,~\cite{marco2020excursion} has proposed a middle-ground solution in which exploration outside the initial safe area is enabled, but only under a pre-established limited number of failures. 
Contrary to these works, we consider a setting where zero failures are allowed.

\section{Problem Setting}
\label{sec:problem}

Assume an unknown, Lipschitz-continuous system
\begin{equation}
\label{eqn:sysdyn}
\diff x(t) = z(x(t),u(t))\diff t,
\end{equation}
where the control input $u(t) \in \R^{m}$ is supposed to drive the system state $x(t) \in\mathcal{X}\subset \R^{\ell}$ to some desired, possibly time-varying, state $x_\mathrm{des}(t)$ 
from a predefined initial state $x(0) = x_0$.
The control inputs are computed according to a parameterized policy $u(t) = \pi(x(t);a)$, where $a$ are the policy parameters $a\in\mathcal{A}\subseteq \R^d$. 
The quality of a policy $a$ when applied from the initial condition $x_0$ is quantified through the unknown reward function $f:\,\mathcal{A} \times \mathcal{X} \mapsto \R$.

Our goal is to learn the optimal policy parameters $a$ for a given initial condition $x_0$ while ensuring safety throughout the learning process. We assume that safety is encoded by a set of constraints on the current state of the system, \eg on the distance to an obstacle. Such constraint functions are expressed through the immediate constraint function $\bar{g}_i:\,\mathcal{X}\mapsto \R,\, i\in\{1,\ldots,q\}$.
The state trajectory is uniquely determined by $(a,x_0)$ through~\eqref{eqn:sysdyn}.
Thus, we further define $g_i:\,\mathcal{A}\times\mathcal{X}\mapsto\R,\, i\in\{1,\ldots,q\}$ as $g_i(a,x_0) = \min_{t\geq 0} \bar{g}_i(x(t))$, \ie the minimum value observed throughout a trajectory starting from $x_0$ with parameters $a$.
Also $\bar{g}_i$ and $g_i$ are unknown a priori but can be estimated through experiments on the real system.

The constrained optimization problem is
\begin{align}
\max_{a\in\mathcal{A}}f(a,x_0) \quad \text{subject to} \quad g_i(a,x_0)\ge 0 \;\forall i\in\{1,\ldots,q\},\nonumber
\end{align}
whose constraints must be satisfied at each time step.
Generally, we could also optimize over $\mathcal{X}$ to find initial conditions that yield a higher reward.
However, here we assume that the goal is to learn a policy from predefined initial conditions $x_0$, as is typically the case when letting a robot learn a certain behavior.
To guarantee safety throughout the learning process, we assume that an initial set $S_0$ of safe but possibly sub-optimal parameters for the nominal $x_0$ are given.
In robotics, such initial parameters can often be obtained from a simulation or domain knowledge.

The objective $f$ and the constraints $g_i$ are unknown a priori.
For estimating them, we assume that both have a bounded norm in a reproducing kernel Hilbert space (RKHS)~\cite{schlkopf2018learning} induced by a kernel $k$.
For this class of functions,~\cite{srinivas2012information,Chowdhury2017OnKM} have shown that Gaussian processes (GPs)~\cite{williams2006gaussian} can be used to obtain well-calibrated confidence intervals.
Informally, a GP is a Bayesian non-parametric regression tool that places a probabilistic prior over the unknown functions $f$ and $g_i$. 
This allows us to provide confidence intervals and to make statements about their Lipschitz continuity.

\section{Preliminaries}
\label{sec:background}

In this section, we introduce the necessary background on GPs, BO, and the \safeopt framework.

\fakepar{Gaussian processes}
A GP is a non-parametric model for a function $f$, fully defined by a \emph{kernel} and a mean function~\cite{williams2006gaussian}. 
The model is updated with new observations of $f$, which are assumed to be corrupted by white i.i.d.\ Gaussian noise, \ie $\hat{f}(\zeta) = f(\zeta) + v,\; v \sim \mathcal{N}(0,\sigma^2)$. 
We can then express the posterior distribution conditioned on those observations in closed form,
$\mu_n(\zeta^*)=k_n(\zeta^*)(K_n+I_n\sigma^2)^{-1}\hat{f}_n$ and
$\sigma^2_n(\zeta^*)=k(\zeta^*,\zeta^*)-k_n(\zeta^*)(K_n+I_n\sigma^2)^{-1}k_n^\transp(\zeta^*)$, respectively.
Here, $\hat{f}_n$ is the vector of observed function values, the entry $i,j$ of the covariance matrix $K_n\in\R^{n\times n}$ is $k(\zeta_i,\zeta_j)$, $k_n(\zeta^*)=[k(\zeta^*,\zeta_1),\ldots,k(\zeta^*,\zeta_n)]$ represents the covariance between the current evaluation $\zeta^*$ and the observed data points, and $I_n$ the $n\times n$ identity matrix.

This allows learning an approximation of the (scalar) reward function $f$.
To include the constraints $g_i$, we extend the GP framework with a surrogate selector function as in~\cite{berkenkamp2016bayesian}, 
\begin{align}
\label{eqn:gp_surrogate}
h(a,x_0,i) = \begin{cases}
f(a,x_0) & \text{ if } i=0\\
g_i(a,x_0) & \text{ if }i\in\mathcal{I}_g,
\end{cases}
\end{align}
where $i\in\mathcal{I}$ with $\mathcal{I} = \{0,\ldots,q\}$ and $\mathcal{I}_g=\{1,\ldots,q\}\subset\mathcal{I}$ (\ie the indices of the constraints) denotes whether the reward function or one of the constraints is returned.
When indexed for a particular $i$, the surrogate $h(\cdot)$ is also a GP that can be used to predict expectations and uncertainties of the reward and constraint functions.

\fakepar{Bayesian optimization}
BO adopts GP models to capture the belief about the objective function based on the available data.
The GP model's mean and variance are then used to suggest the next sample location in search of the objective function's optimum.
One such algorithm is GP-upper confidence bound (GP-UCB)~\cite{srinivas2012information}, which we will use in later sections, and which trades off exploration and exploitation by suggesting the next sample as
$
a_n = \argmax_{a\in\mathcal{A}}(\mu_{n-1}(a) + \beta_n^{\sfrac{1}{2}}\sigma_{n-1}(a)),$
where the scalar $\beta_n$ is iteration-dependent and reflects the confidence interval of the GP.
By iteratively evaluating the function at locations proposed by GP-UCB and updating the GP, GP-UCB provides theoretical bounds on the cumulative regret, which are ensured by choosing $\beta_n$ appropriately at each iteration~\cite{srinivas2012information}.

\fakepar{\safeopt}
\safeopt~\cite{sui2015safe} is a variant of GP-UCB that keeps track of a \emph{safe set}, i.e., a set of parameters that satisfy safety constraints with high probability, given the GP model. 
After each experiment, the safe set is updated.
During learning, \safeopt is trading off finding the optimum inside the currently known safe set with expanding the safe set.
Existing applications of \safeopt to the policy search problem~\cite{berkenkamp2016bayesian} only consider the parameter space $\mathcal{A}$ but ignore the state space $\mathcal{X}$.
In parameter space, \safeopt can guarantee to find the optimum connected to an initial safe set $S_0$ (up to $\epsilon$-precision) while guaranteeing safety with high probability.

\section{Globally Safe Optimization}

While \safeopt can provide guarantees on finding the optimum within an initial safe area, it cannot reach better optima that are not connected to this area.
Thus, the quality of the solution strongly depends on its initialization.
This is due to two main factors: (\textit{i}) \safeopt models safety as a function of the policy parameters only, rather than policy parameters \emph{and} initial conditions; (\textit{ii}) \safeopt assumes that the violation of a given constraint is instantaneous and there is no possibility to intervene once a controller is chosen.
In the following, we present \ourmethod, an extension of \safeopt  that exploits the additional structure of the policy search problem in robotics to enable exploration outside the initial safe area.
In particular, we extend the safe set by also including initial conditions and specifically search for initial conditions $x(0)=\tilde{x}_0$ from which we can still guarantee safety using some $a\in\mathcal{A}$.
This allows for intervening during experiments and switching to a safe backup policy in case of potential constraint violation.
We proceed by giving an intuition for \ourmethod, formally defining the new update rules, and stating its theoretical properties regarding safety and optimality.
Finally, we discuss some practical considerations.

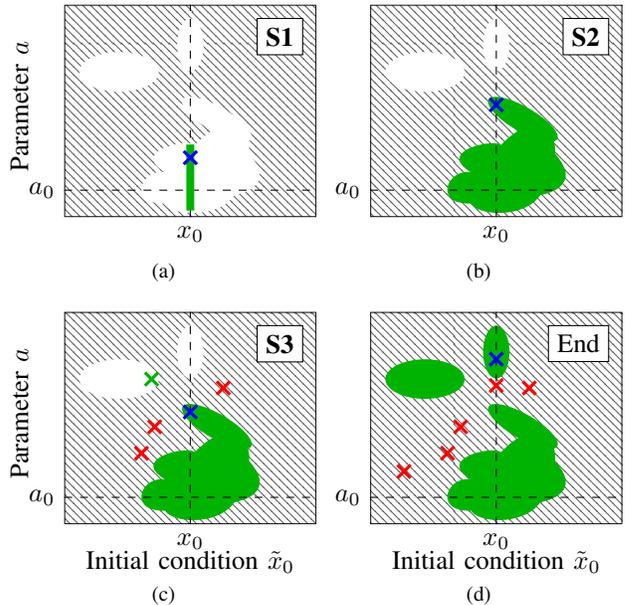
\begin{figure}
    \centering
    \tikzset{cross/.style={cross out, draw=blue, very thick, minimum size=0.5em, inner sep=0pt, outer sep=0pt},
cross/.default={1pt}}
\tikzset{cross2/.style={cross out, draw=red, very thick, minimum size=0.5em, inner sep=0pt, outer sep=0pt},
cross/.default={1pt}}
\tikzset{cross3/.style={cross out, draw=green!70!black, very thick, minimum size=0.5em, inner sep=0pt, outer sep=0pt},
cross/.default={1pt}}
\subfloat[\label{sfig:stage1}]{%
\begin{tikzpicture}
\node[draw, rectangle, minimum width=9.5em, minimum height=8em, pattern=north west lines, pattern color=darkgray, opacity=0.9](plot){};
\node at($(plot.south)+(0.5em, 2em)$)(safe1) {\tikz\asymcloud[.075]{fill=white};}; 
\node[draw=none, ellipse, minimum width=3em, minimum height=1.5em, below right = 2em and 1em of plot.north west, fill=white](safe2){};
\node[draw=none, ellipse, minimum width=3em, minimum height=0.5em, below right = 3.5em and 0.25em of plot.north, fill=white, rotate=-30](safe5){};
\node[draw=none, ellipse, minimum width=1em, minimum height=2em, below = 0.5em of plot.north, fill=white](safe6){};
\draw[dashed] (plot.south)node[below]{$x_0$} -- (plot.north);
\draw[dashed] ($(plot.west)+(0em, -3em)$)node[left]{$a_0$} -- ($(plot.east)+(0em, -3em)$);
\draw[green!70!black, line width=3pt]($(plot.south)+(0em,0.25em)$) -- ($(plot.south)+(0em,2.75em)$);
\node[draw, cross, above=2em of plot.south](opt){};
\node[above left=3em and 1em of plot.west, rotate=90]{Parameter $a$};
\node[draw, rectangle, fill=white, below left=0.5em and 0.5em of plot.north east]{\ref{item:S1}};
\end{tikzpicture}
}
\subfloat[\label{sfig:stage2}]{%
\begin{tikzpicture}
\node[draw, rectangle, minimum width=9.5em, minimum height=8em, pattern=north west lines, pattern color=darkgray, opacity=0.9](plot){};
\node at($(plot.south)+(0.5em, 2em)$)(safe1) {\tikz\asymcloud[.075]{fill=green!70!black};}; 
\node[draw=none, ellipse, minimum width=3em, minimum height=1.5em, below right = 2em and 1em of plot.north west, fill=white](safe2){};
\node[draw=none, ellipse, minimum width=3em, minimum height=0.5em, below right = 3.5em and 0.25em of plot.north, fill=green!70!black, rotate=-30](safe5){};
\node[draw=none, ellipse, minimum width=1em, minimum height=2em, below = 0.5em of plot.north, fill=white](safe6){};
\draw[dashed] (plot.south)node[below]{$x_0$} -- (plot.north);
\draw[dashed] ($(plot.west)+(0em, -3em)$)node[left]{$a_0$} -- ($(plot.east)+(0em, -3em)$);
\node[draw, cross, above=4em of plot.south](opt){};
\node[draw, rectangle, fill=white, below left=0.5em and 0.5em of plot.north east]{\ref{item:S2}};
\end{tikzpicture}
}

\subfloat[\label{sfig:stage3}]{%
\begin{tikzpicture}
\node[draw, rectangle, minimum width=9.5em, minimum height=8em, pattern=north west lines, pattern color=darkgray, opacity=0.9](plot){};
\node at($(plot.south)+(0.5em, 2em)$)(safe1) {\tikz\asymcloud[.075]{fill=green!70!black};}; 
\node[draw=none, ellipse, minimum width=3em, minimum height=1.5em, below right = 2em and 1em of plot.north west, fill=white](safe2){};
\node[draw=none, ellipse, minimum width=3em, minimum height=0.5em, below right = 3.5em and 0.25em of plot.north, fill=green!70!black, rotate=-30](safe5){};
\node[draw=none, ellipse, minimum width=1em, minimum height=2em, below = 0.5em of plot.north, fill=white](safe6){};
\draw[dashed] (plot.south)node[below]{$x_0$} -- (plot.north);
\draw[dashed] ($(plot.west)+(0em, -3em)$)node[left]{$a_0$} -- ($(plot.east)+(0em, -3em)$);
%
\node[draw, cross2, below left=0em and 1em of safe5](opt){};
\node[draw, cross2, below left = 1em and 1.5em of safe5]{};
\node[draw, cross2, above right = 0.5em and 0.5em of safe1.north]{};
\node[draw, cross, above=4em of plot.south](opt){};
\node[draw, cross3, right=2.5em of safe2.west](curr){};
\node[above left=3em and 1em of plot.west, rotate=90]{Parameter $a$};
\node[below=1em of plot, inner sep=0]{Initial condition $\tilde{x}_0$};
\node[draw, rectangle, fill=white, below left=0.5em and 0.5em of plot.north east]{\ref{item:S3}};
\end{tikzpicture}
}
\subfloat[\label{sfig:stage4}]{%
\begin{tikzpicture}
\node[draw, rectangle, minimum width=9.5em, minimum height=8em, pattern=north west lines, pattern color=darkgray, opacity=0.9](plot){};
\node at($(plot.south)+(0.5em, 2em)$)(safe1) {\tikz\asymcloud[.075]{fill=green!70!black};}; 
\node[draw=none, ellipse, minimum width=3em, minimum height=1.5em, below right = 2em and 1em of plot.north west, fill=green!70!black](safe2){};
\node[draw=none, ellipse, minimum width=3em, minimum height=0.5em, below right = 3.5em and 0.25em of plot.north, fill=green!70!black, rotate=-30](safe5){};
\node[draw=none, ellipse, minimum width=1em, minimum height=2em, below = 0.5em of plot.north, fill=green!70!black](safe6){};
\draw[dashed] (plot.south)node[below]{$x_0$} -- (plot.north);
\draw[dashed] ($(plot.west)+(0em, -3em)$)node[left]{$a_0$} -- ($(plot.east)+(0em, -3em)$);
%
\node[draw, cross2, left=0.5em of safe1](opt){};
\node[draw, cross2, below left=0em and 1em of safe5](opt){};
\node[draw, cross2, above=5em of plot.south]{};
\node[draw, cross2, below left = 1em and 1.5em of safe5]{};
\node[draw, cross2, above right = 0.5em and 0.5em of safe1.north]{};
\node[draw, cross, above=6em of plot.south](opt){};
\node[below=1em of plot, inner sep=0]{Initial condition $\tilde{x}_0$};
\node[draw, rectangle, fill=white, below left=0.5em and 0.5em of plot.north east]{End};
\end{tikzpicture}
}
    \caption{The different stages of \ourmethod. \capt{Gray hatched areas are unsafe, green areas are already explored, red crosses mark experiments where we needed to intervene, and the blue cross the current best guess. We first explore the parameter space of the initial safe area with the fixed nominal initial condition $x_0$ (\ref{item:S1}), then explore also in the initial condition space (\ref{item:S2}). Afterward, we search globally for new safe areas (\ref{item:S3}). In each new area, we revisit \ref{item:S1} and \ref{item:S2} such that in the end, all areas are fully explored. 
    }}
    \label{fig:stages_framework}
\end{figure}

\subsection{Global Optimization Framework}
\looseness=-1
We start with an intuitive explanation of \ourmethod and then make the required concepts precise.
\ourmethod proceeds in three stages (\cf \figref{fig:stages_framework}) of increasing complexity.
At each iteration, we check the stages' conditions in the order that follows below and execute the first whose condition is satisfied.
\begin{compactitem}
    \myitem{\textbf{S1}}\label{item:S1} Given the fixed nominal initial condition $x_0$, if we can expand the set of safe parameters or have a candidate for the optimal parameters, we do a \safeopt step in parameter space $\mathcal{A}$ as in~\cite{berkenkamp2016bayesian}.
    \myitem{\textbf{S2}}\label{item:S2} If we can further expand the safe set by considering pairs of initial conditions $\tilde{x}_0$ and parameters $a$ that are guaranteed to be safe, we do a safe exploration in this joint space.
    \myitem{\textbf{S3}}\label{item:S3} Else, we further expand the safe set by sampling $a$ from $\mathcal{A}$, \ie by doing a global search in parameter space, and only sample $\tilde{x}_0$ from the safe set.
    Here, we may need to intervene, \ie switch to a backup policy, during the experiment to preserve safety.
\end{compactitem}
\ref{item:S1} is the simplest as it involves searching over a reduced space (\ie only the vertical axis in \figref{sfig:stage1}) and only considers safe parameters. 
Its purpose is to find safe, optimal parameters for the nominal initial condition of the system. 
\ref{item:S2} is more complex as we enlarge the search space to include the initial condition. 
This stage is needed to determine a set of backup policies and a set of initial conditions from which these backup policies can guarantee safety. 
As in \figref{sfig:stage2}, this can reveal safe areas that were disconnected in parameter space and, thus, potentially better optima.
\ref{item:S3} is even more involved as the search space is enlarged further by considering potentially unsafe policy parameters $a\in\mathcal{A}$, which can be safely evaluated due to the backup policies estimated in \ref{item:S2}. 
Note that if we executed \ref{item:S3} before \ref{item:S2}, we would need to interrupt many experiments as the safe set would be very conservative.
The incentive of \ref{item:S3} is to find disconnected safe areas (\cf \figref{sfig:stage3}), which can then be further explored by revisiting \ref{item:S1} and \ref{item:S2}.
That way, we thoroughly explore all safe areas and, in the end, find the global optimum in \figref{sfig:stage4}.
In the following, we provide definitions of the required sets and detail the acquisition function for each of the three stages.
The whole framework is summarized in Alg.~\ref{alg:global_safeopt}.

\fakepar{Set definitions}
For defining confidence intervals and the required sets, we extend the definitions from~\cite{berkenkamp2016bayesian} to include the initial condition.
Following~\cite{berkenkamp2016bayesian}, we consider a finite set, $\mathcal{A}$, bounded $\mathcal{X}$, and introduce a quantized version of $\mathcal{X}$, $\mathcal{X}_\mu$ such that $\norm{x(t)-[x(t)]_\mu}\le\mu\,\forall x\in\mathcal{X}$ and $[x(t)]_\mu$ the element in $\mathcal{X}_\mu$ with the smallest $L_1$-distance to $x(t)$. 
Naturally, we then need that if $(a,x_0)\in S_0\subseteq\mathcal{A}\times\mathcal{X}_\mu$ also all $x$ for which $\norm{x-x_0}\le\mu$ are safe, \ie we assume $\forall (a, x_0)\in S_0: g_i(a, x_0)>L_\mathrm{x}\mu\,~\forall i\in\mathcal{I}_\mathrm{g}$.
Further, we assume that $f$ and $g_i$ are Lipschitz-continuous with Lipschitz constants $L_\mathrm{a}$ and $L_\mathrm{x}$.
We construct confidence intervals for the surrogate function~\eqref{eqn:gp_surrogate} as
$
Q_n(a,\tilde{x}_0,i)=[\mu_{n-1}(a,\tilde{x}_0,i)\pm \beta_n^{\sfrac{1}{2}}\sigma_{n-1}(a,\tilde{x}_0,i)].
$
\safeopt requires that the safe set does not shrink.
This can be ensured by defining the contained set as $C_n(a,\tilde{x}_0,i) = C_{n-1}(a,\tilde{x}_0,i)\cap Q_n(a,\tilde{x}_0,i)$, where $C_0(a,\tilde{x}_0,i)$ is $[L_\mathrm{x}\mu,\infty)\,\forall (a,\tilde{x}_0)\in S_0$ and $\R$ otherwise.
Then, upper and lower bounds can be defined as $l_n(a,\tilde{x}_0,i)\coloneqq\min C_n(a,\tilde{x}_0,i)$ and $u_n(a,\tilde{x}_0,i)\coloneqq\max C_n(a,\tilde{x}_0,i)$.
Finally, we update the safe set with\footnote{\label{note:changes}\fakepar{Corrections} Different from the ICRA 2021 paper, we here have a union with $S_{n-1}$. This is to ensure that points with $0\le l_n(a_n,\x0,i)\le L_\mathrm{x}\mu$ are not removed from the safe set.
Further, we do not set $l_n(a,\tilde{x}_0,i) = \max\{L_\mathrm{x}\mu,l_n(a,\tilde{x}_0,i)\}$ for points added during \textbf{S3} anymore, correcting a minor error in the ICRA version.}
\begin{align}
\label{eqn:safeset_update}
\begin{split}
S_n &= \bigcap\limits_{i\in\mathcal{I}_g}\bigcup\limits_{(a,\tilde{x}_0) \in S_{n-1}} \{a'\in\mathcal{A},\tilde{x}_0'\in\mathcal{X}_\mu\mid 
l_n(a,\tilde{x}_0,i)-
\\&L_\mathrm{a}\norm{a-a'}-L_\mathrm{x}(\norm{\tilde{x}_0-\tilde{x}_0'}+\mu)\ge 0\} \cup S_{n-1}.
\end{split}
\end{align}
As in~\cite{berkenkamp2016bayesian}, we define two subsets of $S_n$ to find a trade-off between expanding the safe set and finding the optimal parameters inside the current safe set.
These subsets contain parameters that are likely to improve the current estimate of the maximum (potential maximizers, $M_n$) or are likely to expand the safe set (potential expanders, $G_n$) and are formally defined as
$M_n\coloneqq\{(a,x_0)\in S_n\mid u_n(a,x_0,0)\ge\max_{(a',\x0=x_0)\in S_n}l_n(a',x_0,0)\}$ and 
$G_n\coloneqq\{(a,\tilde{x}_0)\in S_n\mid e_n(a,\tilde{x}_0)>0\},$
with $e_n(a,\tilde{x}_0) = \lvert\{(a',\tilde{x}_0')\in\mathcal{A}\times\mathcal{X}_\mu\setminus S_n\mid\exists i\in\mathcal{I}_g:u_n(a,\tilde{x}_0,i)-L_\mathrm{a}\norm{a-a'}-L_\mathrm{x}(\norm{\tilde{x}_0-\tilde{x}_0'}+\mu)\ge 0\}\rvert.$
Here, $e_n(a,\tilde{x}_0)$ denotes the number of $(a, \tilde{x}_0)$ pairs which are guaranteed to satisfy at least one constraint given an optimistic observation.
Note that for $M_n$, we consider $x_0$ instead of $\tilde{x}_0$, as we assume that the optimal policy parameters are to be found for the nominal $x_0$.
For knowing when to intervene in an experiment, we further need to define the border of the safe set, 
$\partial S_n\coloneqq \{\x0\in\mathcal{X}_\mu\mid \exists a\in\mathcal{A}: (a,\x0)\in S_n \land (\exists x\in\mathcal{X}:\norm{\x0-x}< 2\mu \land \nexists a\in\mathcal{A}:(a,[x]_\mu)\in S_n)\}$.
Intuitively, it defines the set of discrete states for which at least one neighbour does not belong to the safe set.
Lastly, we define a set $\mathcal{E_\mathrm{f}}$ containing those $(a,\tilde{x}_0)$ for which we needed to stop the experiment prematurely during \textbf{S3}.

\fakepar{S1}
The original \safeopt algorithm is obtained by choosing the next sample location
\begin{align}
\label{eqn:safeopt_next_point}
a_n = \argmax_{(a,\tilde{x}_0=x_0)\in G_n\cup M_n}\max_{i\in\mathcal{I}}w_n(a,x_0,i),
\end{align}
where $w_n(a,x_0,i) = u_n(a,x_0,i) - l_n(a,x_0,i)$.
This corresponds to \ref{item:S1}, where we solely explore in parameter space while fixing the initial condition to $x_0$.

\fakepar{S2}
Next, we jointly explore the parameter and the initial condition space, \ie we consider $\tilde{x}_0$.
However, we only seek to expand the safe set and not to find a new optimum.
Thus, we only consider the set of expanders $G_n$:
\begin{align}
\label{eqn:safeinit_next_point}
(a_n,\tilde{x}_0) &= \argmax_{(a,\tilde{x}_0)\in G_n}\max_{i\in\mathcal{I}_\mathrm{g}}w_n(a,\tilde{x}_0,i).
\end{align}

\fakepar{S3} 
We then globally sample parameters $a$ from $\mathcal{A}$ to find new safe areas, but still choose $\tilde{x}_0$ from $S_n$,
\begin{align}
\label{eqn:global_init_next_point}
(a_n,\tilde{x}_0) &= \argmax_{(a,\tilde{x}_0)\notin\mathcal{E_\mathrm{f}}\cup S_n: a\in \mathcal{A} \land \exists a': (a', \tilde{x}_0) \in S_n}\max_{i\in\mathcal{I}}w_n(a,\tilde{x}_0,i).
\end{align}
During \textbf{S3}, we exclude pairs $(a,\x0)$ that we already now are safe or for which we needed to interrupt the experiment in previous iterations.

\fakepar{Experiments}
The routine for carrying out an experiment is described in the \enquote{rollout} procedure in Alg.~\ref{alg:global_safeopt}.
During \ref{item:S3}, we need to monitor the state's evolution to intervene in case of potential constraint violation, \ie in case $x(t)$ hits the border of the safe set.
If $\exists t: [x(t)]_\mu\in\partial S_n$, we switch to a safe backup policy.
That is, we choose parameters $a'$ such that $(a', [x(t)]_\mu)\in S_n$. 
Further, we add $(a,\tilde{x}_0)$ of the experiment to $\mathcal{E_\mathrm{f}}$.
After successful experiments (\ie experiments during which we did not need to intervene), we 
update the safe set  
\begin{align}
    \label{eqn:update_safe_set_global} 
    S_n = S_{n-1} \cup (a_n, \tilde{x}_{0,n}).
\end{align}


\fakepar{Revisiting parameters}
Prematurely stopping an experiment does not imply that the experiment would have failed.
Thus, if the initial condition space of the safe set is increased, we check for all prematurely stopped experiments whether they would still be interrupted given the new safe set.
If this is not the case for some experiments, we delete those $(a,\x0)$ from $\mathcal{E_\mathrm{f}}$.
The acquisitions function~\eqref{eqn:global_init_next_point} may then suggest revisiting these points during subsequent steps.

\begin{algorithm}
\small 
\caption{Pseudocode of \ourmethod.}
\label{alg:global_safeopt}
\begin{algorithmic}[1]
\State \textbf{Input:} Safe seed $S_0$, initial condition $x_0$, tolerance $\epsilon$
\State $\mathcal{X}_\mathrm{fail} \gets \{\}$, $\mathcal{E_\mathrm{f}}\gets\{\}$
\For {$n=1,2,\ldots$} 
\If{$\max_{(a,\tilde{x}_0=x_0)\in G_{n-1}\cup M_{n-1},i\in\mathcal{I}}(w_n(a,x_0,i)) > \epsilon$}
\State Compute $a_n$ with~\eqref{eqn:safeopt_next_point}
\State safe, $x_\mathrm{fail}\gets\texttt{rollout}(a_n,x_0,S_{n-1})$
\ElsIf{$\max_{(a,\tilde{x}_0)\in G_{n-1},i\in\mathcal{I}_\mathrm{g}}(w_n(a,\tilde{x}_0,i))\! >\! \epsilon$}
\State Compute $(a_n, \tilde{x}_{0,n})$ with~\eqref{eqn:safeinit_next_point}
\State safe, $x_\mathrm{fail}\gets\texttt{rollout}(a_n,\tilde{x}_{0,n},S_{n-1})$
\Else
\State Compute $(a_n, \tilde{x}_{0,n})$ with~\eqref{eqn:global_init_next_point}
\State $\text{safe}, x_\mathrm{fail}\gets \texttt{rollout}(a_n,\tilde{x}_{0,n},S_{n-1})$
\EndIf
\If{safe}
\State Update safe set with~\eqref{eqn:safeset_update} or~\eqref{eqn:update_safe_set_global}
\State Update GPs
\Else 
\State $\mathcal{X}_\mathrm{fail}\gets\mathcal{X}_\mathrm{fail}\cup x_\mathrm{fail}$, $\mathcal{E_\mathrm{f}}\gets \mathcal{E_\mathrm{f}}\cup (a_n,\tilde{x}_{0,n})$
\EndIf
\For{$x\in\mathcal{X}_\mathrm{fail}$}
\If{$x\notin \partial{S}_n$}
\State $\mathcal{X}_\mathrm{fail}\gets\mathcal{X}_\mathrm{fail}\setminus x$, $\mathcal{E_\mathrm{f}}\gets\mathcal{E_\mathrm{f}}\setminus (a,\x0)$
\EndIf
\EndFor
\EndFor
\Return Best guess: $\hat{a}\gets \argmax_{(a, \tilde{x}_0=x_0)\in S_n}l_n(a, x_0,0)$
\Procedure{rollout}{$a$, $x_0$, $S$}
\State safe $\gets$ True, $x_\mathrm{fail}\gets$ None
    \While {$t\le T$}
    \State {$x(t)\gets x_0+\int_0^tz(x(\tau),\pi(x(\tau);a))\diff\tau$}
    \If {$(a, x_0)\notin S \text{ and } [x(t)]_\mu\in\partial S$}
    \State $a\gets a',$ such that $(a', [x(t)]_\mu) \in S$
    \State safe $\gets$ False, $x_\mathrm{fail}\gets x(t)$
    \EndIf
    \EndWhile
\Return safe, $x_\mathrm{fail}$
\EndProcedure
\end{algorithmic}
\end{algorithm}

\subsection{Theoretical Guarantees}
In this section, we show that \ourmethod enjoys the same safety guarantees as \safeopt, while we state conditions under which we can recover the global optimum\footnote{Detailed proofs can be found in the appendix.}.

\begin{theo}
\label{thm:glob_safety_init_cond}
Assume that $h(a,\tilde{x}_0,i)$ with $i\in\mathcal{I}_g$ has a norm bounded by $B$ in an RKHS and that the measurement noise is $\sigma$-sub Gaussian. 
Further, assume that $S_0\neq\varnothing$.
Choose $\beta_n^{\sfrac{1}{2}}=B+4\sigma\sqrt{\gamma_{(n-1)\abs{\mathcal{I}}}+1+\ln(\sfrac{1}{\delta})}$, where $\gamma_n$ describes the \emph{information capacity} associated with a kernel (see~\cite{berkenkamp2016bayesian} for more details). 
Then, given the proposed algorithm, with probability at least $1-\delta$, where $\delta\in(0,1)$, we have $\forall n\ge 1, \forall t, \forall i\in\mathcal{I}_g:\bar{g}_i(x_n(t))\ge 0.$
\end{theo}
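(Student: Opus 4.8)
The plan is to isolate all randomness into a single high-probability event and then argue deterministically on it. Concretely, I would first establish that, with the stated choice of $\beta_n$, the event $\mathcal{C}$ on which the surrogate obeys $h(a,\x0,i)\in C_n(a,\x0,i)$ for all $(a,\x0)$, all $i\in\mathcal{I}$, and all $n\ge1$ \emph{simultaneously} has probability at least $1-\delta$. This is the standard well-calibrated-confidence-interval result for RKHS functions under $\sigma$-sub-Gaussian noise~\cite{srinivas2012information,Chowdhury2017OnKM,berkenkamp2016bayesian}; the index $(n-1)\abs{\mathcal{I}}$ inside $\gamma$ simply reflects that each iteration contributes $\abs{\mathcal{I}}$ observations to the joint surrogate~\eqref{eqn:gp_surrogate}, and the fact that $\mathcal{C}$ holds for all $n$ at once is exactly what removes the need for any further union bound over iterations. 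On $\mathcal{C}$ one has in particular $l_n(a,\x0,i)\le g_i(a,\x0)\le u_n(a,\x0,i)$ for every $i\in\mathcal{I}_g$, and everything below is carried out on $\mathcal{C}$.

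Next I would show by induction on $n$ that $S_n$ contains only genuinely safe pairs: for every $(a,\x0)\in S_n$ and every $i\in\mathcal{I}_g$ we have $g_i(a,\x0)\ge0$, with the stronger margin $g_i(a,\x0)\ge L_\mathrm{x}\mu$ for all pairs except those appended through the \ref{item:S3} update~\eqref{eqn:update_safe_set_global}. The base case is the assumption $g_i(a,x_0)>L_\mathrm{x}\mu$ on $S_0$. For the Lipschitz expansion~\eqref{eqn:safeset_update}, a newly admitted $(a',\x0')$ is certified, for each constraint $i\in\mathcal{I}_g$ separately through the outer intersection, by some (possibly $i$-dependent) witness $(a,\x0)\in S_{n-1}$ with $l_n(a,\x0,i)-L_\mathrm{a}\norm{a-a'}-L_\mathrm{x}(\norm{\x0-\x0'}+\mu)\ge0$; combining $g_i(a,\x0)\ge l_n(a,\x0,i)$ (valid on $\mathcal{C}$) with the Lipschitz continuity of $g_i$ then yields $g_i(a',\x0')\ge L_\mathrm{x}\mu$. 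The pairs added by~\eqref{eqn:update_safe_set_global} come from rollouts that terminated without intervention, so their realized trajectory keeps $\bar{g}_i\ge0$ throughout and hence $g_i\ge0$; the correction in footnote~\ref{note:changes} is precisely what lets such margin-deficient-but-safe pairs remain in $S_n$ without being clipped.

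It then remains to certify every executed rollout. For \ref{item:S1} and \ref{item:S2} the proposed pair lies in $S_n$ (since $G_n,M_n\subseteq S_n$), so the previous paragraph, together with the $L_\mathrm{x}\mu$ margin that absorbs the $\mu$-quantization of the initial condition, gives $\bar{g}_i(x_n(t))\ge g_i\ge0$ along the whole trajectory by routine constant-chasing. The substantive case is \ref{item:S3}, where $a$ may be unsafe. Here I would combine continuity of the trajectory with the definition of the border $\partial S_n$: the rollout starts from a state whose projection lies in $S_n$, the quantized state $[x(t)]_\mu$ can only migrate to neighbouring cells, and the $2\mu$ slack in the definition of $\partial S_n$ forces $[x(t)]_\mu$ to enter a border cell \emph{before} it could reach a cell outside the safe projection. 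The instant $[x(t)]_\mu\in\partial S_n$, the rollout switches to a backup $a'$ with $(a',[x(t)]_\mu)\in S_n$; if that cell carries a margin-$L_\mathrm{x}\mu$ backup, Lipschitz continuity in the initial condition absorbs the gap $\norm{x(t)-[x(t)]_\mu}\le\mu$ and gives $g_i(a',x(t))\ge0$, so the continuation stays safe.

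I expect \ref{item:S3} to be the main obstacle, for two reasons. First, one must rule out that the continuous state crosses the border and reaches an unsafe cell between two monitoring instants; this is where the $2\mu$ buffer and the Lipschitz constant $L_\mathrm{x}$ of the dynamics have to be balanced against the integration/monitoring resolution. Second, one must guarantee that a backup with the full $L_\mathrm{x}\mu$ margin is actually available at every border cell that can be hit — which is exactly why the \ref{item:S3} update retains the newly added pairs but does not artificially inflate their lower confidence bounds. Once these points are settled, the per-stage arguments on the event $\mathcal{C}$ (which already holds for all $n$ at once) combine to give $\bar{g}_i(x_n(t))\ge0$ for all $n\ge1$, all $t$, and all $i\in\mathcal{I}_g$ simultaneously, which is the claim.
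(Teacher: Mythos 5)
Your overall architecture is the same as the paper's: a single calibration event on which all confidence intervals hold jointly (the paper's Lemma~\ref{lem:uncertainty_bound} and Corollary~\ref{cor:uncertainty_bound}), an induction showing every safe-set update preserves a safety invariant with an $L_\mathrm{x}\mu$ margin for Lipschitz-expanded pairs and mere nonnegativity for \ref{item:S3}-added pairs (the paper's Assumption~\ref{ass:safe_set} and Lemma~\ref{lem:safety_update}), and a per-stage rollout certification in which \ref{item:S1}/\ref{item:S2} follow from membership in $S_n$ while \ref{item:S3} uses trajectory continuity plus the $2\mu$ slack in $\partial S_n$ to guarantee a border cell is hit before the safe projection is left (the paper's Lemma~\ref{lem:safety_switching}), followed by a margin-absorbs-quantization argument for the backup switch (Lemmas~\ref{lem:safety_safe_set_mu} and~\ref{lem:safety_S3}). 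Your first worry (state escaping between monitoring instants) is not an issue in the theorem's setting, since continuous monitoring is assumed there and deferred to the practical-implementation section; the Picard--Lindel\"of continuity argument closes it.

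There is, however, one genuine gap, and it is exactly at the point you flag as your second obstacle: you need that \emph{every} backup pair $(a',[x(t')]_\mu)\in S_n$ available at a border cell carries the full $L_\mathrm{x}\mu$ margin (the rollout procedure picks an arbitrary such $a'$, so existence of \emph{some} margin pair at that $\tilde{x}_0$ is not enough). Your proposed resolution --- that this is ensured because the \ref{item:S3} update does not clip lower confidence bounds (the footnote correction) --- is not the right mechanism; that correction only keeps the confidence intervals honest and has nothing to do with which pairs sit at border cells. The paper's actual argument (Assumption~\ref{ass:safe_set}(iii), maintained in Lemma~\ref{lem:safety_update}) is structural: a pair added by \ref{item:S3} comes from a successful rollout whose trajectory, including its initial state, never touches $\partial S_{n-1}$, so its $\tilde{x}_0$ is an \emph{interior} cell of the safe projection; since the safe set only grows, the projection only grows, so interior cells remain interior forever, and consequently every pair whose initial condition ever lies on the border was necessarily added through $S_0$ or the Lipschitz expansion~\eqref{eqn:safeset_update} and hence carries the $L_\mathrm{x}\mu$ margin. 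Without this persistence argument your induction invariant (``margin for all pairs except \ref{item:S3}-added ones'') does not by itself rule out a margin-less \ref{item:S3} pair sitting at a border cell and being chosen as the backup, in which case the Lipschitz absorption of the gap $\norm{x(t')-[x(t')]_\mu}\le\mu$ would fail.
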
%
\begin{proof}
During \ref{item:S1} and \ref{item:S2}, we always have $(a_n, \tilde{x}_{0,n})\in S_n$, thus, safety follows directly from Theorem~1 in~\cite{berkenkamp2016bayesian}.
For \ref{item:S3}, we switch to $(a',[x(t)]_\mu)\in S_n$ if $\exists t:[ x(t)]_\mu\in\partial{S}_n$.
Then, safety also follows from Theorem~1 in~\cite{berkenkamp2016bayesian}.
If $\nexists t: [x(t)]_\mu\in\partial{S}_n$, safety follows from the safe set definition.
\end{proof}
We further provide conditions under which we can find the global optimum.
For a general system~\eqref{eqn:sysdyn}, the globally optimal parameters may drive the system state outside of the safe area during the transient.
In this case, we would prematurely stop the experiment, switch to a safe backup policy, and mark the experiment as failed.
To guarantee optimality, we thus need to assume that the trajectory for the globally optimal parameters $a$ lies within the largest safe area in state space that can be learned without risking a failure.
To formally define this, we first adapt the one-step reachability operator from~\cite{sui2015safe} to also include the initial condition space, $\Rcon(S) \coloneqq S \,\cup\, \{a \in \mathcal{A}, \tilde{x}_0\in\mathcal{X}_\mu\mid \exists (a', \tilde{x}_0')\in S\text{ such that } g_i(a', \tilde{x}_0)-\epsilon-L_\mathrm{a}\norm{a'-a}-L_\mathrm{x}(\norm{\tilde{x}_0'-\tilde{x}_0}+\mu)\ge 0\,\forall i\in\mathcal{I}_\mathrm{g}\}$. 
This denotes the set of points that can be established as safe using safe explorations (\ie during \ref{item:S1} and \ref{item:S2}), when $g_i$ can be measured with $\epsilon$-precision.
We now extend this reachability operator to account for the fact that we can also expand globally.
The global reachability operator $\Rglob$, thus, in addition, comprises those $(a, \tilde{x}_0)$ pairs for which a safe backup policy with parameters $a'$ exists and whose trajectory does not hit the boundary of the safe set.
This global reachability operator is defined as $\Rglob(S) \coloneqq \Rcon(S) \cup \{a\in\mathcal{A},\tilde{x}_0\in\mathcal{X}_\mu\mid\exists a'\in\mathcal{A}\text{ such that }(a',\tilde{x}_0)\in S\text{ and } [\tilde{x}_0+\int_0^{t}z(x(\tau),\pi(x(\tau);a)\diff \tau]_\mu\notin \partial{\Rcon(S)}\,\forall t\}$.
By defining $R^n_\epsilon(S)$ as the repeated composition of $\Rglob(S)$ with itself, we can further obtain its closure as $\bar{R}_\epsilon(S_0)\coloneqq\lim_{n\to\infty}\R_\epsilon^n(S_0)$.

\begin{theo}
\label{thm:glob_optimality}
Assuming the same as in Theorem~\ref{thm:glob_safety_init_cond}, \ourmethod converges to the optimum within $\bar{R}_\epsilon(S_0)$ with $\epsilon$-precision with probability at least $1-\delta$.
\end{theo}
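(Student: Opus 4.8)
The plan is to mirror the optimality argument for \safeopt (Theorem~2 in~\cite{berkenkamp2016bayesian}) on the joint parameter/initial-condition space and then extend it to account for the global exploration performed in \ref{item:S3}. Throughout, I would condition on the high-probability event underlying Theorem~\ref{thm:glob_safety_init_cond}, on which the confidence intervals are valid, i.e.\ $l_n(a,\tilde{x}_0,i)\le h(a,\tilde{x}_0,i)\le u_n(a,\tilde{x}_0,i)$ for all arguments and all $n$; the chosen $\beta_n$ guarantees this with probability at least $1-\delta$ via the RKHS results of~\cite{Chowdhury2017OnKM,srinivas2012information}. On this event the remaining analysis is deterministic.

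First I would assemble the standard building blocks. By construction~\eqref{eqn:safeset_update} the safe set is non-decreasing, $S_{n-1}\subseteq S_n$, so it converges to a limit set. Next, the choice of $\beta_n$ together with the information-capacity bound controls the cumulative posterior width, $\sum_{n=1}^{N}w_n^2=\mathcal{O}(\beta_N\gamma_{N\abs{\mathcal{I}}})$; exactly as in~\cite{berkenkamp2016bayesian}, this forces the maximal width over the active candidate set below $\epsilon$ after finitely many iterations, which is the engine driving each stage to completion.

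Second, I would chain the three stages to the reachability operators. While \ref{item:S1} or \ref{item:S2} is active, the rules~\eqref{eqn:safeopt_next_point}--\eqref{eqn:safeinit_next_point} coincide with \safeopt on $\mathcal{A}\times\mathcal{X}_\mu$, so the \safeopt expansion argument shows that once these stages can no longer shrink a width by more than $\epsilon$, the safe set has grown to the connected reachable set $\Rcon(S_n)$, and the maximizer set $M_n$ certifies that $\hat{a}=\argmax_{(a,x_0)\in S_n}l_n(a,x_0,0)$ is $\epsilon$-optimal within the current connected component. I would then show that when \ref{item:S1} and \ref{item:S2} stall, \ref{item:S3} samples globally over $\mathcal{A}$ with backup conditions taken from $S_n$, so every pair admitted by $\Rglob$---those possessing a safe backup whose trajectory never reaches $\partial\Rcon(S_n)$---is eventually evaluated, found safe, and added via~\eqref{eqn:update_safe_set_global}. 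Cycling the three stages thus expands $S_n$ through successive applications of $\Rglob$, and by the closure definition $\bar{R}_\epsilon(S_0)=\lim_{n\to\infty}R_\epsilon^n(S_0)$ the limit of $S_n$ contains $\bar{R}_\epsilon(S_0)$; applying the connected-region optimality argument within this fully expanded set delivers the $\epsilon$-precise global optimum.

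The hard part will be the bookkeeping around premature stops in \ref{item:S3}. A pair placed in $\mathcal{E_\mathrm{f}}$ is excluded from the acquisition~\eqref{eqn:global_init_next_point}, so I must argue that the revisiting mechanism---which removes a point from $\mathcal{E_\mathrm{f}}$ once its failure state no longer lies on $\partial S_n$---guarantees that no element of $\bar{R}_\epsilon(S_0)$ is permanently blocked. Concretely, every pair admitted by a later application of $\Rglob$ corresponds to a backup whose trajectory avoids the now-enlarged boundary, so its exclusion is lifted and it re-enters the candidate set. Making this liveness argument precise---that the interleaving of expansion and revisiting cannot leave the algorithm stranded short of the closure---and checking that the finite-step width bound applies uniformly across the alternating stages is where the main technical care is required.
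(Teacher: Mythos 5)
Your proposal follows essentially the same route as the paper's extended proof: monotonicity of $S_n$, finite-time reduction of the confidence widths via the information capacity, expansion to $\Rcon(S_n)$ during \textbf{S1}/\textbf{S2}, exhaustion of the finite candidate set during \textbf{S3} to capture all of $\Rglob(S_n)$, iteration of these steps until $\bar{R}_\epsilon(S_0)\subseteq S_n$, and the standard lower-/upper-confidence-bound contradiction for $\epsilon$-optimality of $\hat{a}$. The one point you flag as hard --- that $\mathcal{E}_\mathrm{f}$ cannot permanently block points of $\bar{R}_\epsilon(S_0)$ --- is resolved in the paper exactly along the lines you anticipate: since \textbf{S3} only runs once $\Rcon(S_n)=S_n$ (hence $\partial\Rcon(S_n)=\partial S_n$), a premature stop certifies $(\bar{a},\bar{x}_0)\notin\Rglob(S_n)$, so excluding it is harmless, while points excluded in earlier epochs are restored by the revisiting mechanism once the enlarged boundary no longer intersects their trajectories.
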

\begin{proof}
During \textbf{S1} and \textbf{S2}, the convergence of the safe set to the maximum safely reachable region connected to it  with $\epsilon$-precision follows from~\cite{berkenkamp2016bayesian}.
If there are further, disconnected safe points $(a,\x0)\in\bar{R}_\epsilon(S_n)\setminus S_n$, those will eventually be explored during \textbf{S3} since  $\mathcal{X}_\mu$ and $\mathcal{A}$ are finite.
The alternation of these steps guarantees convergence to $\bar{R}_\epsilon(S_0)$. Then, finding the optimum within $\bar{R}_\epsilon(S_0)$ with $\epsilon$-precision follows from~\cite{berkenkamp2016bayesian}.
\end{proof}
\begin{cor}
\label{cor:glob_optimality}
Let $a^*=\argmax_{a\in\mathcal{A}}f(a,x_0)$ subject to $g(a,x_0)\ge 0$.
If we have $[x_0+\int_0^t z(x(\tau), \pi(x(\tau);a^*)\diff \tau]_\mu\notin \partial \bar{R}_\epsilon(S_0)$ for all $t$, then $(a^*,x_0)\in\bar{R}_\epsilon(S_0)$ and \ourmethod is guaranteed to find an $\epsilon$-optimal solution.
\end{cor}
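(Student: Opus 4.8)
The plan is to show directly that the hypothesis forces $(a^*, x_0)$ to lie inside the fixed point $\bar{R}_\epsilon(S_0)$, after which $\epsilon$-optimality follows at once from Theorem~\ref{thm:glob_optimality}. The whole argument hinges on the fact that $\bar{R}_\epsilon(S_0)$ is a fixed point of the global reachability operator $\Rglob$.

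First I would record the relevant fixed-point identities. Because $\Rglob$ is extensive, $S \subseteq \Rglob(S)$, the iteration $R_\epsilon^n(S_0)$ is increasing, and since it acts on the finite lattice of subsets of $\mathcal{A} \times \mathcal{X}_\mu$ it stabilizes after finitely many steps at a genuine fixed point; hence $\Rglob(\bar{R}_\epsilon(S_0)) = \bar{R}_\epsilon(S_0)$. Combined with the chain $\bar{R}_\epsilon(S_0) \subseteq \Rcon(\bar{R}_\epsilon(S_0)) \subseteq \Rglob(\bar{R}_\epsilon(S_0))$, this squeezes $\Rcon(\bar{R}_\epsilon(S_0)) = \bar{R}_\epsilon(S_0)$, and therefore $\partial \Rcon(\bar{R}_\epsilon(S_0)) = \partial \bar{R}_\epsilon(S_0)$. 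This boundary identity is the crucial link, since the corollary's hypothesis is phrased with $\partial \bar{R}_\epsilon(S_0)$ whereas the defining condition of $\Rglob$ literally uses $\partial \Rcon(\cdot)$.

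Next I would verify the precondition in the global-expansion branch of $\Rglob$ that a safe backup policy exists at the nominal initial condition: I need some $a'$ with $(a', x_0) \in \bar{R}_\epsilon(S_0)$. This holds because $S_0 \neq \varnothing$ consists of safe parameters for the nominal $x_0$, so $(a', x_0) \in S_0 \subseteq \bar{R}_\epsilon(S_0)$ for some $a'$. With this precondition met and, by hypothesis, $[x_0 + \int_0^t z(x(\tau), \pi(x(\tau); a^*)) \diff \tau]_\mu \notin \partial \bar{R}_\epsilon(S_0) = \partial \Rcon(\bar{R}_\epsilon(S_0))$ for all $t$, the global-expansion condition of $\Rglob$ is satisfied with $S = \bar{R}_\epsilon(S_0)$ and candidate parameters $a^*$. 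Thus $(a^*, x_0) \in \Rglob(\bar{R}_\epsilon(S_0)) = \bar{R}_\epsilon(S_0)$, proving the first claim of the corollary.

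Finally I would invoke Theorem~\ref{thm:glob_optimality}: with probability at least $1-\delta$, \ourmethod converges to the optimum of $f(\cdot, x_0)$ over $\bar{R}_\epsilon(S_0)$ with $\epsilon$-precision. Since $(a^*, x_0) \in \bar{R}_\epsilon(S_0)$ and $a^*$ is the global constrained maximizer, the optimum over $\bar{R}_\epsilon(S_0)$ coincides with the global optimum $f(a^*, x_0)$, so the returned best guess is $\epsilon$-optimal. The step I expect to be the main obstacle is the boundary-matching argument: one must justify carefully that at the fixed point the contained and global reachable sets coincide, so that the hypothesis stated in terms of $\partial \bar{R}_\epsilon(S_0)$ may be substituted for the $\partial \Rcon(\cdot)$ boundary appearing in the definition of $\Rglob$; a secondary check is that the finiteness of $\mathcal{A}$ and $\mathcal{X}_\mu$ genuinely makes the fixed point attained, so these identities hold exactly rather than only in the limit.
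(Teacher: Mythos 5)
Your proof is correct and takes essentially the same route as the paper: use the hypothesis together with the definition of $\Rglob$ to conclude $(a^*,x_0)\in\bar{R}_\epsilon(S_0)$, then invoke Theorem~\ref{thm:glob_optimality} so that the optimum within $\bar{R}_\epsilon(S_0)$ coincides with the global constrained optimum. Your fixed-point argument showing $\Rcon(\bar{R}_\epsilon(S_0))=\Rglob(\bar{R}_\epsilon(S_0))=\bar{R}_\epsilon(S_0)$, and hence that the hypothesis stated with $\partial\bar{R}_\epsilon(S_0)$ matches the $\partial\Rcon(\cdot)$ boundary in the definition of $\Rglob$, is a careful elaboration of a step the paper's one-line proof leaves implicit, but the underlying approach is identical.
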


\begin{figure*}
     \centering
     \subfloat[GP posterior mean of $f(x)$. \label{sfig:posterior}]{
         \includegraphics[width=0.32\textwidth]{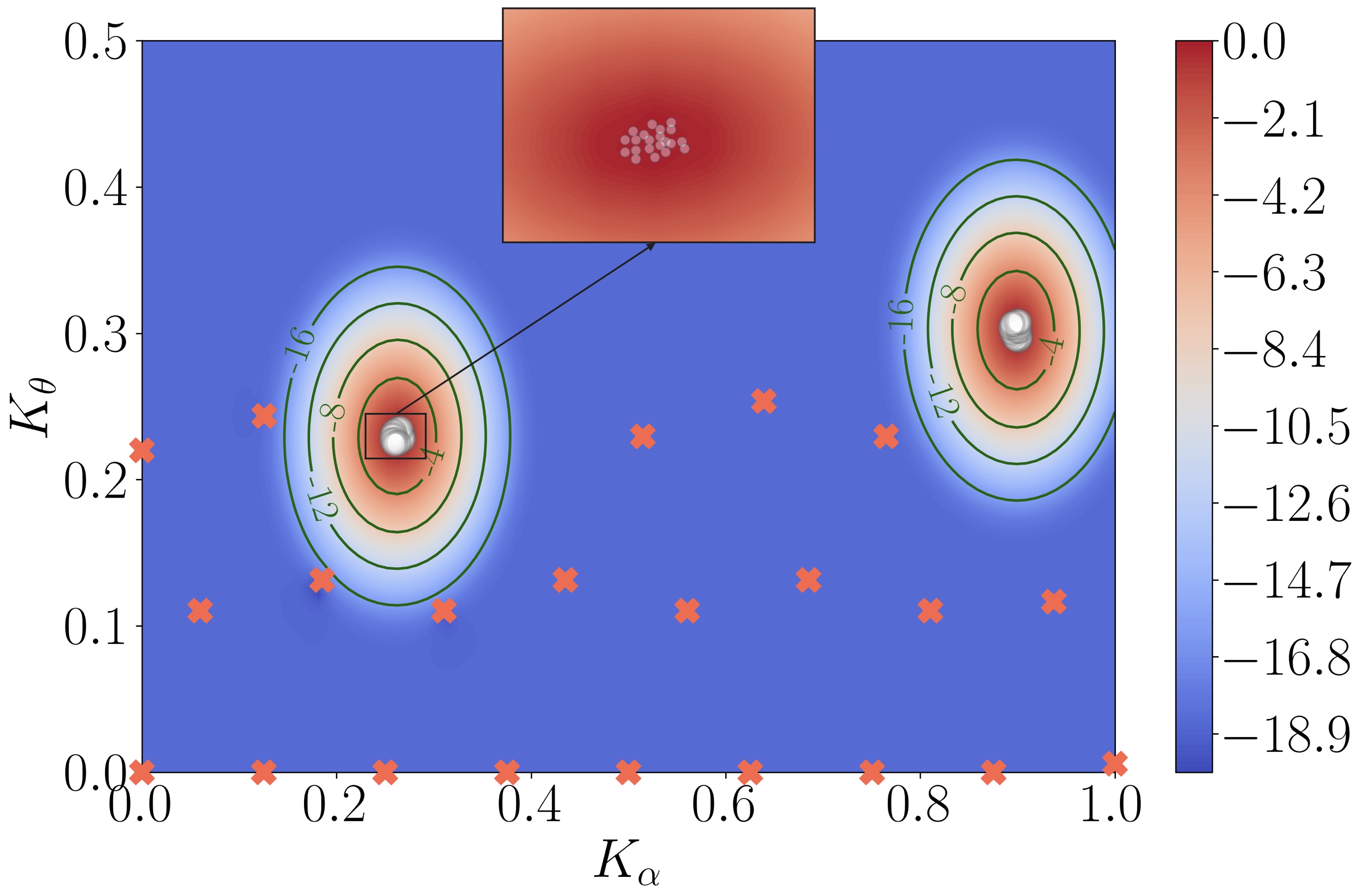}
         }
     \subfloat[Reward evolution over iterations. \label{sfig:regret}]{
         \includegraphics[width=0.32\textwidth]{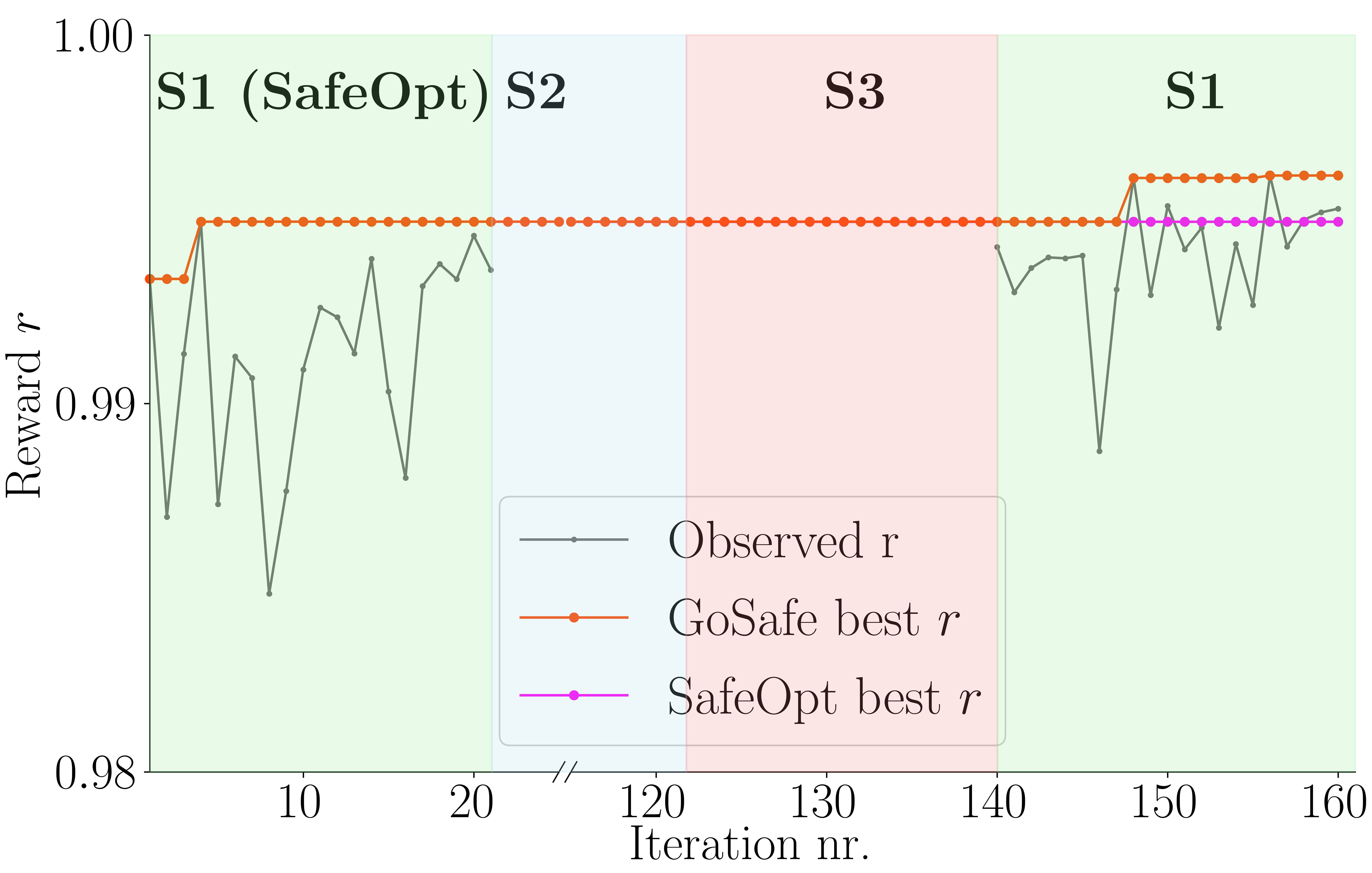}
         }
     \subfloat[Best guess comparison. \label{sfig:best_guess}]{
         \includegraphics[width=0.30\textwidth]{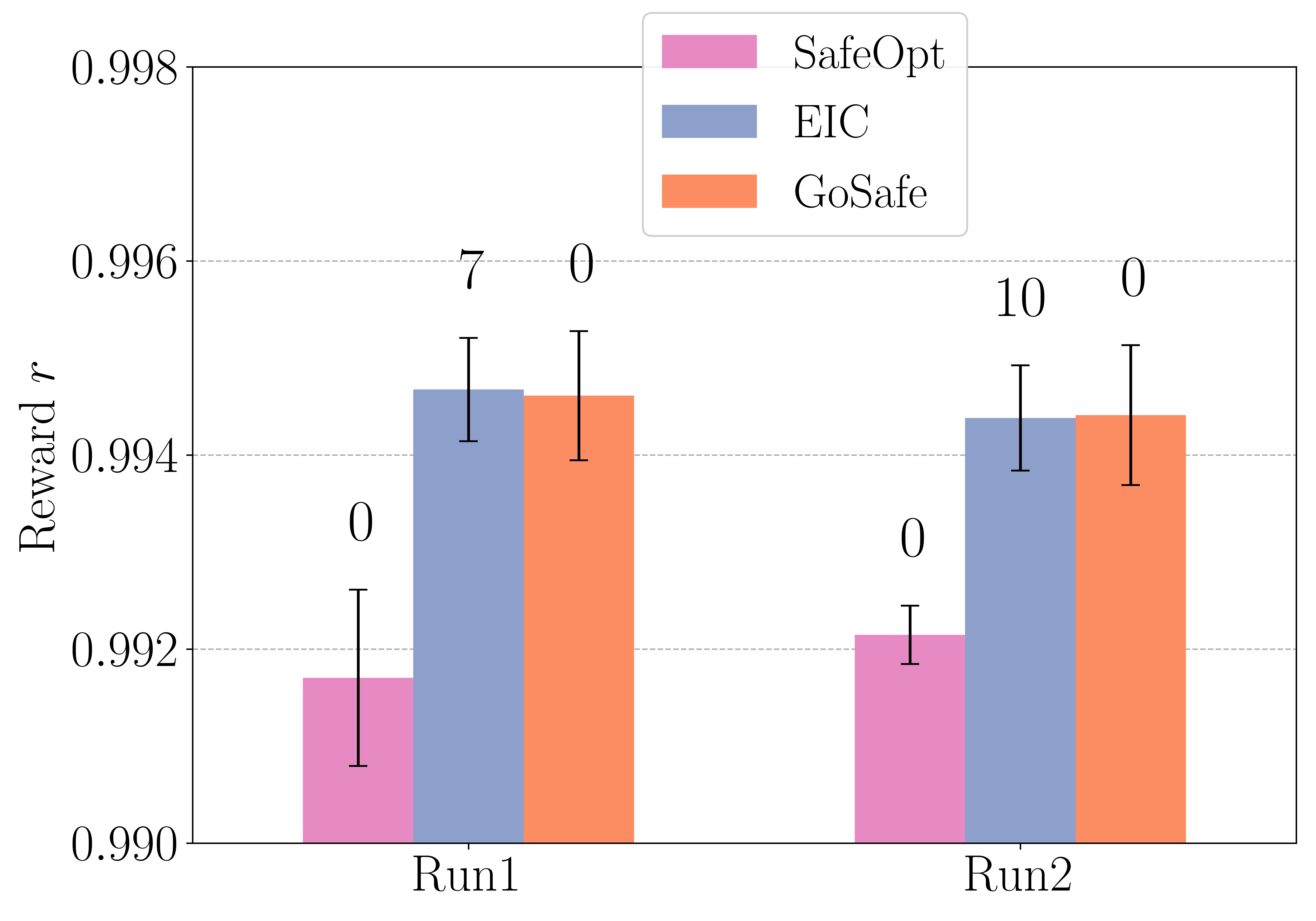}
         }
        \caption{Evaluation of \ourmethod. (a) \capt{The algorithm identifies distinct safe areas. The contours show the posterior mean of the GP on a logarithmic scale, red crosses mark stopped experiments, and white circles successful ones. (b) Evolution of the reward over iterations during the different stages, showing the reward of the current evaluation and the best observation so far of \safeopt and \ourmethod. (c) Mean and standard deviation of three evaluations of the best controller found with \ourmethod, \safeopt, and EIC in two independent runs, as well as the number of failures written on top of the bar plots.}}
        \label{fig:eval}
\end{figure*}

\subsection{Practical Implementation}
\label{sec:practical_implementation}

The guarantees stated above come at a cost: the safe set needs to be expanded as much as possible, resulting in many experiments.
Further, we assume that we can continuously monitor the state.
In this section, we discuss how this can be dealt with in practical applications.

\fakepar{Reducing the search space}
The safe set $S_n$ contains pairs of policy parameters and initial conditions.
While we fix the initial condition to $x_0$ when searching for the optimum, we consider both $a$ and $\tilde{x}_0$ as free parameters when expanding in the joint initial condition and parameter space.
Instead, one can also choose fixed policy parameters when expanding in initial condition space.
The chosen parameters should then be likely to yield a large safe area, which could, for instance, be the parameters with the best performance obtained so far.

\fakepar{Reducing the number of experiments}
Expanding the initial condition space of the safe set may be terminated earlier to reduce the number of experiments.
Termination can be triggered by a stopping criterion that determines when the safe set is \enquote{big enough} or by executing \ref{item:S2} and \ref{item:S3} for a fixed amount of iterations.
Then, one can fix the initial condition to $x_0$ and directly search for the global optimum using, \eg 
\begin{align}
\label{eqn:global_max_next_point}
a_n = \argmax_{a\in \mathcal{A}}\max_{i\in\mathcal{I}}w_n(a,x_0,i).
\end{align}

\fakepar{Sampling interval}
In practice, we only receive sampled measurements of $x(t)$.
Thus, we introduce a parameter $\eta$ and interrupt experiments if $\inf_{y\in \partial{S}_n}\norm{x(t)-y} < \eta$.
The threshold must be chosen in accordance with the dynamics of the system and the sampling rate.

\section{Evaluation}
\label{sec:eval}

We evaluate \ourmethod on a real Furuta pendulum~\cite{furuta1992swing}.
The Furuta pendulum is a fast and underactuated, unstable system and, therefore, a challenging platform for \ourmethod.

\fakepar{Setting}
We consider stabilization of the pendulum using static state feedback $u(t)=K x(t)$, where the entries of $K$ that multiply the pole angle ($K_\theta$) and the angle of the rotatory arm ($K_\alpha$) are learned in the range $[0,1]$, while the rest are fixed.
We model $f$ and $g$ with independent GPs with Mat\'ern 3/2 kernels with length scale $0.01$ and variance $5$ for the parameter space, and $0.1$ and $0.2$ for the initial condition space.
We choose $\beta_n\equiv 3$ and constraints are given by $\abs{\alpha}<\sfrac{\pi}{2}$ and $\abs{\theta}<\sfrac{\pi}{2}$.
Accordingly, $g_1$ is defined as the minimum distance to either of these constraints during an experiment.
The reward function in case of a successful experiment, $R = \frac{1}{T}\sum_{t=0}^{T-1}(1-\frac{0.8\abs{\theta(t)} + 0.2\abs{\alpha(t)}}{\pi})$, is strictly positive.
If the backup controller needs to kick in, we return a reward and constraint value of 0 and include both in the GP.
During experiments, we receive measurements at a frequency of \SI{50}{\hertz}, \ie considerably slower as the \SI{1}{\kilo\hertz} typically used in robotics.
We switch to the backup controller whenever $l_n(a,x(t),1)<1$.
Our implementation builds upon the original \safeopt code\footnote{\url{https://github.com/befelix/SafeOpt}}.

\looseness=-1
\fakepar{Results}
We initialize the algorithm in the safe area on the right in \figref{sfig:posterior} and then execute \ref{item:S1} for 20 iterations.
As highlighted in \figref{sfig:regret}, \ref{item:S1} corresponds to standard \safeopt. 
Thus, the algorithm stays inside the initial safe region.
While the controller found at the end of \ref{item:S1} is the best \safeopt can find, \ourmethod can now progress to \ref{item:S2} and \ref{item:S3} to discover new safe areas, which naturally comes at the cost of more iterations.
Thus, we next execute \ref{item:S2} for 100 iterations.
While the state space is of higher dimension, the acquisition function of \ref{item:S2} is substantially easier than that of \ref{item:S1}.
Therefore, computing the next sample location is even faster in \ref{item:S2} as compared to \ref{item:S1}.
Then, \ourmethod enters \ref{item:S3}, where we globally search for new safe areas.
We adopt the modifications proposed in \secref{sec:practical_implementation}.
In particular, we use~\eqref{eqn:global_max_next_point} to search for new safe areas for the nominal initial conditions only.
Most of these experiments are unsafe and terminated early since the system gets too close to $\partial S_n$. 
In such cases, the backup policy was triggered to keep the pole in balance.
Eventually, a second safe area (left in \figref{sfig:posterior}) is detected and subsequently expanded by revisiting \ref{item:S1} (\cf \figref{sfig:regret}).
That is, we execute \safeopt in the newly found safe region. However, the original \safeopt algorithm would not have been able to find a second safe area.
In this second area, the algorithm finds a controller parameterization with a slightly better reward than the optimum in the initial safe area.
Thus, \ourmethod reveals a better optimum than could have been found using \safeopt, as can also be confirmed over multiple runs, as shown in \figref{sfig:best_guess}, at the cost of the additional iterations in \ref{item:S2} and \ref{item:S3} (\cf \figref{sfig:regret}).
During multiple runs of the whole algorithm, the pole of the Furuta pendulum never dropped.
The results demonstrate that \ourmethod can find additional safe areas and, thus, find optima that the original \safeopt algorithm would miss, while still providing safety guarantees\footnote{Video available at \url{https://youtu.be/YgTEFE_ZOkc}}.

In addition to the above, we also compared \ourmethod against expected improvement with constraints (EIC) \cite{Gelbart2014}. This is a popular BOC algorithm that suggests data points where the objective is expected to improve the best observed reward so far while discouraging regions of potential constraint violation.
We adopt the hyperparameters from \ourmethod, however, choose a length scale of 0.1 for the parameter space GP to enable faster exploration.
As can be inferred from \figref{sfig:best_guess}, EIC finds a similar optimum as \ourmethod.
While EIC, due to the increased length scale, reaches that optimum significantly faster than \ourmethod, needing only 28 and 18 iterations in two independent runs to find the best guesses shown in \figref{sfig:best_guess}, it incurs in a significant amount of failures (10 and 7, respectively) during exploration.
This would cause severe hardware damage on a more fragile robotic system.

The comparison with EIC and \safeopt reveals that \ourmethod retains the best of both worlds: it finds the global optimum \emph{and} guarantees safety during exploration.

\section{Conclusions}
\label{sec:concl}

We proposed \ourmethod, an extension of the well-known \safeopt algorithm.
Both algorithms aim at optimizing a policy while guaranteeing safety during exploration.
By considering safety not only in parameter but also in state space, \ourmethod can, in contrast to \safeopt, explore globally, switching to a safe backup policy in case of potential constraint violation.
Experiments on a Furuta pendulum demonstrate that \ourmethod finds the global optimum also in case it is outside an initially given safe area, while \safeopt remains stuck at the local optimum inside the initial region.
\bibliographystyle{IEEEtran}
\bibliography{IEEEabrv,ref}

\newpage
\appendix

In this appendix, we provide extended proofs for Theorems~\ref{thm:glob_safety_init_cond} and~\ref{thm:glob_optimality}, and Corollary~\ref{cor:glob_optimality}.
For ease of notation, we denote by $\xi_{(t_0,\x0,a)}$ the trajectory of $x(t)$ starting at $x(t_0) = \x0$ with parameters $a$, \ie $\xi_{(t_0,\x0,a)}\coloneqq\{x_0 + \int_{t_0}^t z(x(\tau),\pi(x(\tau);a)\diff\tau\, \forall t\geq t_0\}$.
All notation is summarized in \tabref{tab:notation}.

\begin{table}
    \centering
    \caption{Summary of main variables and operators.}
    \label{tab:notation}
    \begin{tabular}{ll}
        \toprule
        Symbol & Explanation\\
        \midrule
        $t$ & time\\
        $z$ & System dynamics \\
        $x$ & System state \\
        $a$ & Policy parameters\\
        $f$ & Reward function \\
        $g_i$ & Constraint function $i$\\
        $\bar{g}_i$ & Immediate constraint function $i$\\
        $h_i$ & Surrogate selector function for index $i$\\
        $\mathcal{I}$ & Set of all indices\\
        $\mathcal{I}_\mathrm{g}$ & Set of all indices pertaining to constraints (\ie $i>0$)\\
        $\mathcal{A}$ & Parameter space\\
        $\mathcal{X}$ & Continuous state space\\
        $\mu$ & Discretization parameter\\
        $\mathcal{X}_\mu$ & Discretized state space\\
        $[x]_\mu$ & $x'\in\mathcal{X}_\mu$ with smallest $L_1$-distance to $x\in\mathcal{X}$\\
        $u_n(a,\x0,i)$ & upper bound of the confidence interval at iteration $n$ \\
        $l_n(a, \x0, i)$ & lower bound of the confidence interval at iteration $n$\\
        $S_n$ & Safe set at iteration $n$\\
        $\partial S_n$ & Border of safe set at iteration $n$\\
        $\Rcon$ & Reachability operator for \textbf{S1} and \textbf{S2}\\
        $\Rglob$ & Reachability operator for \textbf{S3}\\
        $\bar{R}$ & Closure of $R$\\
        $\traj{t_0}{\x0}{a}$ & Trajectory of $x$ starting in $\x0$ with parameters $a$\\
        $L_\mathrm{a},L_\mathrm{x}$ & Lipschitz constants for parameter and state space\\
        \bottomrule
    \end{tabular}
\end{table}

The proofs largely depend on the fact that we have accurate uncertainty estimates of the reward function $f$ and the constraint function $g$.
Both are contained in the surrogate selector function $h$ (\cf~\eqref{eqn:gp_surrogate}).
Under the assumption that $h$ has bounded RKHS norm, we can state:
\begin{lem}
\label{lem:uncertainty_bound}
Assume that the RKHS norm of $h(a,\x0, i)$ is bounded by $B$ and that measurements are corrupted by $\sigma$-sub-Gaussian noise.
If $\beta_n^{\sfrac{1}{2}}=B=4\sigma\sqrt{\gamma_{n-1}\abs{I}+1+\ln(1/\delta)}$, where $\gamma$ is the information capacity (see~\cite{berkenkamp2016bayesian} for a discussion), then the following holds for all parameters $a\in\mathcal{A}$, initial conditions $\x0\in\mathcal{X}_\mu$, function indices $i\in\mathcal{I}$, and iterations $n\ge 1$ jointly with probability at least $1-\delta$:
\begin{align}
    \label{eqn:uncertainty_bound}
    \abs{h(a, \x0, i)-\mu_{n-1}(a,\x0,i)}\le \beta_n^{\sfrac{1}{2}}\sigma_{n-1}(a,\x0,i).
\end{align}
\end{lem}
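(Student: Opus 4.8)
The plan is to reduce the statement to the standard well-calibrated confidence result for Gaussian process regression of functions with bounded RKHS norm, applied to the single surrogate $h$ on the augmented domain $\mathcal{A}\times\mathcal{X}_\mu\times\mathcal{I}$. The crucial observation is that the selector construction in~\eqref{eqn:gp_surrogate} lets me treat the reward $f$ and all constraints $g_i$ as one scalar function of the extended input $(a,\x0,i)$. Since this function is assumed to have RKHS norm bounded by $B$ and the observations are corrupted by $\sigma$-sub-Gaussian noise, the setting is exactly that of the concentration results in~\cite{Chowdhury2017OnKM, srinivas2012information}, as already exploited in~\cite{berkenkamp2016bayesian}.

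First I would invoke the self-normalized concentration inequality for RKHS-valued regression (Chowdhury--Gorur, building on Abbasi-Yadkori): with probability at least $1-\delta$, the true value lies within $\beta_n^{\sfrac{1}{2}}\sigma_{n-1}$ of the posterior mean, with $\beta_n^{\sfrac{1}{2}}=B+4\sigma\sqrt{\gamma+1+\ln(\sfrac{1}{\delta})}$, where $\gamma$ is the information capacity after the data available at iteration $n$. The $B$ term controls the deterministic approximation error coming from the prior, while the $4\sigma\sqrt{\,\cdot\,}$ term controls the stochastic fluctuation of the noise through a martingale argument; together they yield precisely the bound~\eqref{eqn:uncertainty_bound}.

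Next I would account for the information-capacity index. Because every successful rollout supplies an observation of $f$ and of each $g_i$, after $n-1$ iterations the combined GP has been conditioned on $(n-1)\abs{\mathcal{I}}$ data points, so the relevant mutual-information term is $\gamma_{(n-1)\abs{\mathcal{I}}}$ rather than $\gamma_{n-1}$. Substituting this into the concentration bound gives exactly the $\beta_n$ prescribed in the statement, and the inequality then holds simultaneously for every $a\in\mathcal{A}$, $\x0\in\mathcal{X}_\mu$, and $i\in\mathcal{I}$ because the concentration bound is uniform over the whole augmented domain.

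The main obstacle I anticipate is making the guarantee hold jointly over all iterations $n\ge 1$ rather than at a single fixed $n$. A naive union bound over $n$ would introduce an extra $\ln n$ (or worse) penalty; the clean statement here instead relies on the fact that the self-normalized inequality is already uniform in time, the stopping-time structure underlying the Abbasi-Yadkori bound delivering an anytime guarantee. That is exactly why $\beta_n$ may grow only through $\gamma_{(n-1)\abs{\mathcal{I}}}$ together with the fixed $\ln(\sfrac{1}{\delta})$ term. Establishing this anytime property, and verifying that augmenting the input with the discrete index $i$ and the initial condition $\x0$ preserves both the RKHS-norm bound and the sub-Gaussian noise assumption, is where the real (if largely citational) care is needed; the remainder follows by direct substitution.
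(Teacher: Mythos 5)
Your proposal is correct and follows essentially the same route as the paper: the paper's proof likewise invokes the self-normalized concentration result of~\cite{Chowdhury2017OnKM} for the surrogate $h$ on the augmented domain, noting only that $\abs{\mathcal{I}}$ measurements are obtained per iteration, so the information capacity grows as $\gamma_{(n-1)\abs{\mathcal{I}}}$ (\cf~\cite[Lem.~1]{berkenkamp2016bayesian}). Your additional remarks on the anytime (stopping-time) nature of the bound and on preserving the RKHS and noise assumptions under the index augmentation are exactly the citational care the paper leaves implicit.
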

\begin{proof}
Directly follows from~\cite{Chowdhury2017OnKM}, with the difference that we obtain $\abs{\mathcal{I}}$ measurements per iteration, causing a faster growth of the information capacity $\gamma$ (\cf~\cite[Lem.~1]{berkenkamp2016bayesian}). 
\end{proof}
In the following proofs, we implicitly assume that assumptions of Lemma~\ref{lem:uncertainty_bound} hold when needed and that $\beta_n$ is defined as therein.
\begin{cor}
\label{cor:uncertainty_bound}
For $\beta_n$ as in Lemma~\ref{lem:uncertainty_bound}, we have with probability at least $1-\delta$ and for all $n\ge1, i\in\mathcal{I}, a\in\mathcal{A}, \x0\in\mathcal{X}_\mu$ that $h(a,x,i)\in C_n(a,\x0,i)$.
\end{cor}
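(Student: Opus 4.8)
The plan is to exploit the recursive structure of the contained set. Unrolling the definition $C_n(a,\x0,i) = C_{n-1}(a,\x0,i)\cap Q_n(a,\x0,i)$ gives $C_n(a,\x0,i) = C_0(a,\x0,i)\cap\bigcap_{m=1}^{n} Q_m(a,\x0,i)$, so on a fixed event it suffices to verify two things: that the true value $h(a,\x0,i)$ lies in every confidence interval $Q_m(a,\x0,i)$, and that it lies in the initialization $C_0(a,\x0,i)$. I would then close the argument by a one-line induction on $n$ using $C_n = C_{n-1}\cap Q_n$, so membership at step $n-1$ together with membership in $Q_n$ propagates to step $n$.

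First I would dispatch the confidence intervals. By construction $Q_m(a,\x0,i) = [\mu_{m-1}(a,\x0,i)\pm\beta_m^{\sfrac{1}{2}}\sigma_{m-1}(a,\x0,i)]$, so the membership $h(a,\x0,i)\in Q_m(a,\x0,i)$ is exactly a restatement of the bound~\eqref{eqn:uncertainty_bound} in Lemma~\ref{lem:uncertainty_bound}. The essential point is that this bound is asserted \emph{jointly} over all $a\in\mathcal{A}$, $\x0\in\mathcal{X}_\mu$, $i\in\mathcal{I}$, and $m\ge 1$ on a single event of probability at least $1-\delta$; consequently no union bound over iterations is incurred, and on that event $h\in Q_m$ holds for every $m$ simultaneously.

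Next I would treat the base case $C_0$. For $(a,\x0)\notin S_0$ we have $C_0(a,\x0,i)=\R$, so membership is automatic, and for the reward index $i=0$ there is no safety constraint on the objective, hence again $C_0=\R$ and nothing must be checked. The only nontrivial case is a safe-seed point $(a,\x0)\in S_0$ with a constraint index $i\in\mathcal{I}_g$, where $C_0(a,\x0,i) = [L_\mathrm{x}\mu,\infty)$; here the deterministic safe-seed assumption $g_i(a,\x0)>L_\mathrm{x}\mu$ yields $h(a,\x0,i)=g_i(a,\x0)\ge L_\mathrm{x}\mu$, i.e.\ $h\in C_0$. Combining the two parts, the induction is immediate: on the event of Lemma~\ref{lem:uncertainty_bound} we have $h\in C_0$ and $h\in Q_n$ for all $n$, whence $h\in C_n$ for all $n\ge 1$.

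I do not expect a genuine obstacle: the statement is essentially bookkeeping on the recursive definition of $C_n$ fed by Lemma~\ref{lem:uncertainty_bound}. The only two points deserving care are reading off that $Q_n$-membership coincides verbatim with~\eqref{eqn:uncertainty_bound}, and ensuring the $1-\delta$ probability budget is consumed once inside the lemma rather than re-spent per iteration, so that the conclusion genuinely holds for all $n$ jointly. This also records the fact used downstream, namely that the maintained bounds $[l_n,u_n]$ always bracket the true reward and constraint values.
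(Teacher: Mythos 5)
Your proof is correct and follows essentially the same route as the paper's: both reduce the claim to Lemma~\ref{lem:uncertainty_bound}'s joint (over all $n\ge1$, $a\in\mathcal{A}$, $\x0\in\mathcal{X}_\mu$, $i\in\mathcal{I}$) high-probability containment of $h$ in every confidence interval $Q_n$, so the $1-\delta$ budget is spent once rather than per iteration, and then pass to the intersection defining $C_n$. The only difference is that you make the base case explicit, checking $h\in C_0$ via the safe-seed assumption $g_i(a,\x0)>L_\mathrm{x}\mu$ on $S_0$ (and noting the reward index carries no nontrivial initialization), a step the paper leaves implicit by deferring to Corollary~1 of the original \safeopt analysis.
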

\begin{proof}
The proof straightforwardly follows from~\cite[Cor.~1]{berkenkamp2016bayesian}.
By Lemma~\ref{lem:uncertainty}, we have that the true functions are contained in $Q_n(a,\x0,i)$ with probability at least $1-\delta$.
Thus, the true function is also contained in the intersection of these sets with the same probability.
\end{proof}

\subsection{Extended Proof of Theorem~\ref{thm:glob_safety_init_cond}}

To prove Theorem~\ref{thm:glob_safety_init_cond}, we first restate our assumptions on the Lipschitz-continuity. 
\begin{assume}
\label{ass:Lipschitz}
The dynamics $z$ of the system, as well as the constraint function $g$ and the reward function $f$ are Lipschitz-continuous with Lipschitz parameters $L_\mathrm{x}$ and $L_\mathrm{a}$.
\end{assume}
We further make a few assumptions on pairs $(a,\x0)$ that belong to the safe set $S_n$.
Intuitively, we assume that all those points are safe with high probability and that for all pairs we have policy parameters that ensure that also all continuous states $x$ that are $\mu$-close to $[\x0]_\mu$ are safe.
\begin{assume}
\label{ass:safe_set}
Let $S_n\neq\varnothing$. 
The following properties hold for all $i\in\mathcal{I}_\mathrm{g}$ with probability at least $1-\delta$:
\begin{renum}
    \item $\forall (a,\x0) \in S_n: g_i(a,\x0)\ge0$;
    \item $\forall (a,\x0)\in S_n: \exists a'\in\mathcal{A}:(a',\x0)\in S_n \land g_i(a',\x0)\ge L_\mathrm{x}\mu$ ($a$ and $a'$ may be identical);
    \item $\forall \x0\in\partial S_n, a\in\mathcal{A}:(a,\x0)\in S_n$: $g_i(a,\x0)\ge L_\mathrm{x}\mu$.
\end{renum}
\end{assume}
For the initial $S_0$, this assumption is satisfied since we have $\forall (a,\x0)\in S_0: g_i(a,\x0)\ge L_\mathrm{x}\mu$ for all $i\in\mathcal{I}_\mathrm{g}$.
We first show that under Assumption~\ref{ass:safe_set}, we can guarantee safety during the global search in \textbf{S3}.
We then show that the update rules guarantee that Assumption~\ref{ass:safe_set} is satisfied for all $n$ given $S_0$.

During the global search, we may evaluate both safe and unsafe policy parameters.
To guarantee safety, we thus need to show that \emph{(i)} before leaving the safe set during an experiment, we trigger a safe backup policy, \emph{(ii)} if we do not trigger a backup policy, the evaluated pair $(a, \x0)$ is safe with high probability, \emph{(iii)} triggering a backup policy guarantees constraint satisfaction with high probability.
We start by proving that before leaving the safe set, we necessarily visit a state that belongs to the border $\partial S_n$ of the safe set.
\begin{lem} 
\label{lem:safety_switching}
Consider $\x0\in\mathcal{X}_\mu$ such that $\exists a'\in\mathcal{A}:(a',\x0)\in S_n$.
Let $x(t)$ denote the state of the system at time $t$ for the trajectory $\traj{t_0}{\x0}{a}$ for all $t\ge t_0\ge 0$ and with $a\in\mathcal{A}$.
Define $T_\mathrm{unsafe}\coloneqq \{t\in[t_0,\infty):\nexists a\in\mathcal{A} \text{ such that }(a,[x(t)]_\mu)\in S_n$\}.
Assume $T_\mathrm{unsafe}\neq \varnothing$ and let $t_2\coloneqq \min(T_\mathrm{unsafe})$.
Then $\exists t_1 \in [t_0, t_2)\text{ such that } [x(t_1)]_\mu\in\partial S_n\text{ and } \forall t'\in[t_0,t_1]\;\exists a'\in\mathcal{A} \text{ such that } (a',[x(t')]_\mu)\in S_n$.
\end{lem}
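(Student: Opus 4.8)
The plan is to exploit the continuity of the trajectory $\traj{t_0}{\x0}{a}$ together with the $2\mu$ slack built into the definition of $\partial S_n$, which is exactly what absorbs the discontinuity of the quantization map $x\mapsto[x]_\mu$. First I would introduce the predicate $P(t)$, which holds iff there is some $a\in\mathcal{A}$ with $(a,[x(t)]_\mu)\in S_n$. From the definition of $T_\mathrm{unsafe}$ and the choice $t_2=\min(T_\mathrm{unsafe})$ it follows that $P(t)$ holds for every $t\in[t_0,t_2)$ while $P(t_2)$ fails. I would also note that $t_2>t_0$: since $\x0\in\mathcal{X}_\mu$ we have $[x(t_0)]_\mu=\x0$, and the hypothesis $\exists a':(a',\x0)\in S_n$ gives $P(t_0)$, so $t_0\notin T_\mathrm{unsafe}$ and the interval $[t_0,t_2)$ is nonempty.

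Next I would use that the dynamics are Lipschitz (Assumption~\ref{ass:Lipschitz}), so $x(\cdot)$ is continuous, and choose $t_1\in[t_0,t_2)$ close enough to $t_2$ that $\norm{x(t_1)-x(t_2)}<\mu$. The claim is then $[x(t_1)]_\mu\in\partial S_n$, which I would verify directly against the definition of the border. First, $[x(t_1)]_\mu$ admits a safe parameter because $t_1<t_2$ gives $P(t_1)$. Second, I take the continuous witness $x=x(t_2)$: its quantization $[x(t_2)]_\mu$ has no safe parameter since $P(t_2)$ fails, and it lies within $2\mu$ of $[x(t_1)]_\mu$ by the triangle inequality $\norm{[x(t_1)]_\mu-x(t_2)}\le\norm{[x(t_1)]_\mu-x(t_1)}+\norm{x(t_1)-x(t_2)}<\mu+\mu=2\mu$, using the quantization bound $\norm{[x(t_1)]_\mu-x(t_1)}\le\mu$. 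The remaining part of the conclusion, that for all $t'\in[t_0,t_1]$ some $a'$ satisfies $(a',[x(t')]_\mu)\in S_n$, is just $P(t')$ on $[t_0,t_1]\subseteq[t_0,t_2)$, which already holds by the first step, so it requires no extra argument.

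I expect the main difficulty to be conceptual rather than computational: the quantization $x\mapsto[x]_\mu$ is not continuous, so one cannot argue that $[x(t)]_\mu$ crosses $\partial S_n$ by a direct continuity statement on the grid. The resolution is to reason entirely in continuous state space, control $\norm{x(t_1)-x(t_2)}$ via continuity of $x(\cdot)$, and only pass to the grid at the end, letting the $2\mu$ tolerance in $\partial S_n$ absorb both the rounding error at $t_1$ and the jump into the unsafe cell at $t_2$. A secondary technicality is the existence of $\min(T_\mathrm{unsafe})$, i.e. that $T_\mathrm{unsafe}$ is closed; this hinges on the quantization cells being essentially closed and is taken as given by the statement, but the same argument goes through with $\inf(T_\mathrm{unsafe})$ and a witness time just past it.
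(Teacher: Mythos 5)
Your proposal is correct and follows essentially the same route as the paper's proof: use continuity of $x(\cdot)$ (via Lipschitz dynamics and Picard--Lindel\"of) to pick $t_1<t_2$ with $\norm{x(t_1)-x(t_2)}<\mu$, combine this with the quantization bound $\norm{x(t_1)-[x(t_1)]_\mu}\le\mu$ via the triangle inequality to place $x(t_2)$ within $2\mu$ of $[x(t_1)]_\mu$, and invoke the minimality of $t_2$ to certify both membership conditions in the definition of $\partial S_n$. The only differences are cosmetic: the paper splits off the case $x(t_0)\in\partial S_n$ separately, whereas your observation that $P(t_0)$ holds (hence $t_2>t_0$) lets the argument go through uniformly, and you additionally flag the $\min$ versus $\inf$ technicality that the paper leaves implicit.
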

\begin{proof}
First, consider the case $x(t_0)=\x0\in\partial S_n$.
Then, we have $t'=t_0$ and $\exists a'\in\mathcal{A}:(a',\x0)\in S_n$ by the choice of $\x0$.
Now, assume $x(t_0)\notin\partial S_n$.
Then, we have, by definition of $\partial S_n$, $\norm{x(t_0)-x(t_2)}>2\mu.$
By definition of $t_2$, we further have that
\begin{align}
    \label{eqn:def_t2}
    \forall t' <t_2: \exists a'\text{ such that } (a',[x(t')]_\mu)\in S_n.
\end{align}
As $z$ is Lipschitz-continuous, there exists a unique solution to~\eqref{eqn:sysdyn} and continuity of $x(t)$ follows from the Picard-Lindel\"{o}f theorem.
Thus, $\exists t_1<t_2$ such that $\norm{x(t_2)-x(t_1)}<\mu$.
The discretization further implies that $\norm{x(t_1)-[x(t_1)]_\mu}\le \mu$.
It follows that
\begin{align}
    \label{eqn:dist_unsafe}
    \begin{split}
    &\norm{x(t_2)-[x(t_1)]_\mu}\\ 
    \le &\norm{x(t_2)-x(t_1)}+\norm{x(t_1)-[x(t_1)]_\mu}\\
    < &2\mu.
    \end{split}
\end{align}
Since $t_1<t_2$,~\eqref{eqn:def_t2} implies that $\exists a^{\mathrm{t_1}}\in\mathcal{A}$ such that $(a^{\mathrm{t_1}},[x(t_1)]_\mu)\in S_n.$
Taking this and~\eqref{eqn:dist_unsafe} yields $[x(t_1)]_\mu\in\partial S_n$.
This together with~\eqref{eqn:def_t2} completes the proof.
\end{proof}
Thus, before leaving the safe set, we necessarily visit a state that belongs to $\partial S_n$.

Next, we show that for experiments for whose entire duration we stay inside the safe set, the immediate constraint is satisfied with high probability.
For this, we first establish that satisfying the constraint function implies satisfaction of the immediate constraint function. 
\begin{lem}
\label{lem:relation_constr_funcs}
For any $x\in\mathcal{X}$, if there exists $a\in\mathcal{A}$ such that $g_i(a, x)\ge 0\,\forall i\in\mathcal{I}_\mathrm{g}$, then $\bar{g}_i(x)\ge 0\,\forall i\in\mathcal{I}_\mathrm{g}$.
\end{lem}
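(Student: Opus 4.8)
The plan is to unfold the definition of $g_i$ and to observe that the immediate constraint value at the starting state $x$ is itself one of the quantities over which the minimum defining $g_i$ is taken; hence $\bar{g}_i(x)$ cannot be smaller than that minimum, and the claim follows at once.

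First I would fix an arbitrary $x\in\mathcal{X}$ together with the parameter vector $a\in\mathcal{A}$ supplied by the hypothesis, for which $g_i(a,x)\ge 0$ holds for every $i\in\mathcal{I}_\mathrm{g}$. I would then consider the trajectory $\traj{0}{x}{a}$, writing $x(t)$ for its state at time $t$, so that $x(0)=x$. Recalling the definition $g_i(a,x)=\min_{t\ge 0}\bar{g}_i(x(t))$ introduced in the problem setting, the minimization ranges over all $t\ge 0$ and in particular includes the initial time $t=0$.

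The key step is the elementary fact that a minimum over a set is at most any single element of that set: evaluating at $t=0$ gives $g_i(a,x)\le \bar{g}_i(x(0))=\bar{g}_i(x)$. Chaining this with the hypothesis $g_i(a,x)\ge 0$ yields $\bar{g}_i(x)\ge g_i(a,x)\ge 0$ for every $i\in\mathcal{I}_\mathrm{g}$, which is exactly the assertion.

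I do not expect any genuine obstacle here: the statement is essentially a definitional consequence of the fact that $g_i$ compresses an entire trajectory, \emph{including its starting state}, into a single worst-case value via the minimum. The only point requiring minor care is that the minimum over $t\ge 0$ is well defined and attained, which is already built into the way $g_i$ was defined; should one prefer an infimum, the very same inequality $g_i(a,x)\le\bar{g}_i(x)$ still holds and the argument goes through unchanged.
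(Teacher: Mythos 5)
Your proof is correct and is essentially identical to the paper's: both arguments rest on the single observation that $x$ itself lies on the trajectory $\traj{0}{x}{a}$ (at $t=0$), so the minimum defining $g_i(a,x)$ is bounded above by $\bar{g}_i(x)$, giving $\bar{g}_i(x)\ge g_i(a,x)\ge 0$. No gaps; the remark about infimum versus minimum is a reasonable aside but not needed, since the paper's definition takes the minimum as given.
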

\begin{proof}
By definition of the immediate constraint function, it holds that
\begin{align*}
    \bar{g}_i(x) &\ge \min_{x'\in\traj{0}{x}{a}\bar{g}_i(x)} \bar{g}_i(x)\\
    &= g_i(a,x)\\
    &\ge 0
\end{align*}
for all $i\in\mathcal{I}_\mathrm{g}$.
\end{proof}
This lets us conclude that during experiments for which we stay inside the safe set, the immediate constraint function is satisfied with high probability.
\begin{lem}
\label{lem:safety_safe_set_mu}
For all $x\in\mathcal{X}$ such that $\exists a\in\mathcal{A}:(a,[x]_\mu)\in S_n$, we have with probability at least $1-\delta$, $\bar{g}_i(x)\ge 0$ for all $i\in\mathcal{I}_\mathrm{g}$. 
\end{lem}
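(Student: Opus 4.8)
The plan is to bridge the discretized safe-set membership to the continuous immediate-constraint guarantee by combining the probabilistic lower bound from the confidence intervals (Corollary~\ref{cor:uncertainty_bound}) with the Lipschitz margin built into the safe-set definition~\eqref{eqn:safeset_update} and with Lemma~\ref{lem:relation_constr_funcs}. The key observation is that membership $(a,[x]_\mu)\in S_n$ was established precisely so as to leave a margin of $L_\mathrm{x}\mu$ in the constraint value, which is exactly what is needed to absorb the discretization gap between $[x]_\mu$ and the true continuous state $x$.

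First I would fix an arbitrary $x\in\mathcal{X}$ and some $a\in\mathcal{A}$ with $(a,[x]_\mu)\in S_n$, and invoke Assumption~\ref{ass:safe_set}(ii) to obtain (possibly different) parameters $a'$ with $(a',[x]_\mu)\in S_n$ and $g_i(a',[x]_\mu)\ge L_\mathrm{x}\mu$ for all $i\in\mathcal{I}_\mathrm{g}$, all holding with probability at least $1-\delta$. The point of passing to $a'$ is that the safe-set construction only guarantees the stronger $L_\mathrm{x}\mu$ margin for an appropriately chosen parameter, not necessarily for the nominal $a$. Second, I would use the Lipschitz continuity of $g_i$ in the state argument (Assumption~\ref{ass:Lipschitz}) together with the quantization bound $\norm{x-[x]_\mu}\le\mu$ to transfer this margin from the discrete point $[x]_\mu$ to the continuous state $x$:
\begin{align*}
g_i(a',x)\ge g_i(a',[x]_\mu)-L_\mathrm{x}\norm{x-[x]_\mu}\ge L_\mathrm{x}\mu-L_\mathrm{x}\mu=0.
\end{align*}
Thus there exists a parameter, namely $a'$, for which $g_i(a',x)\ge 0$ for every $i\in\mathcal{I}_\mathrm{g}$.

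Third, I would apply Lemma~\ref{lem:relation_constr_funcs} to this $x$ and $a'$: since we have just exhibited $a'\in\mathcal{A}$ with $g_i(a',x)\ge 0$ for all $i\in\mathcal{I}_\mathrm{g}$, the lemma immediately yields $\bar{g}_i(x)\ge 0$ for all $i\in\mathcal{I}_\mathrm{g}$. The probability bound $1-\delta$ is inherited from the single application of Assumption~\ref{ass:safe_set} (itself ultimately grounded in Corollary~\ref{cor:uncertainty_bound}), and since no further high-probability event is introduced, the overall statement holds with probability at least $1-\delta$, completing the argument.

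The main obstacle I anticipate is the bookkeeping around which parameter is used where: the statement quantifies over existence of \emph{some} $a$ with $(a,[x]_\mu)\in S_n$, but the $L_\mathrm{x}\mu$ margin needed to kill the discretization error is only available for the witness $a'$ from Assumption~\ref{ass:safe_set}(ii), and Lemma~\ref{lem:relation_constr_funcs} only needs existence of a safe parameter rather than safety of a fixed one. Keeping these three potentially distinct parameters ($a$, $a'$, and the one hidden inside the definition of $g_i$ via the trajectory) cleanly separated is the delicate step; the Lipschitz and triangle-inequality estimates themselves are routine once the correct parameter is selected.
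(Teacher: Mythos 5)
Your proposal is correct and follows essentially the same route as the paper: invoke Assumption~\ref{ass:safe_set}(ii) to obtain a witness $a'$ with margin $g_i(a',[x]_\mu)\ge L_\mathrm{x}\mu$, absorb the discretization error via Lipschitz continuity to get $g_i(a',x)\ge 0$, and conclude with Lemma~\ref{lem:relation_constr_funcs}. Your write-up is in fact slightly more careful than the paper's (which elides the step transferring the bound from $[x]_\mu$ to $x$), so no changes are needed.
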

\begin{proof}
By Assumption~\ref{ass:safe_set}(ii), for all $(a,[x]_\mu)\in S_n$ we have some $a'\in\mathcal{A}$ such that $g_i(a',[x]_\mu)\ge L_\mathrm{x}\mu$ for all $i\in\mathcal{I}_\mathrm{g}$ with probability at least $1-\delta$.
Thus, we have for all $i\in\mathcal{I}_\mathrm{g}$ that $g_i(a', [x]_\mu)\ge 0$ by Lipschitz-continuity and $\bar{g}_i(x) \ge 0$ by Lemma~\ref{lem:relation_constr_funcs}.
\end{proof}
Lastly, we need to prove that switching to a safe backup policy when evaluating unsafe policy parameters guarantees safety.
\begin{lem}
\label{lem:safety_S3}
With probability at least $1-\delta$, we have $\bar{g}_i(x(t))\ge 0\,\forall i\in\mathcal{I}_\mathrm{g},\forall t >0$ during experiments in \textbf{S3}.
\end{lem}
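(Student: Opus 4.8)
The plan is to split the analysis of an \ref{item:S3} experiment into the two cases produced by the \texttt{rollout} procedure: either we never intervene (the experiment is marked safe), or at some first time the monitored state triggers a switch to a backup policy. Throughout, I would condition on the high-probability event of Lemma~\ref{lem:uncertainty_bound}/Corollary~\ref{cor:uncertainty_bound}, on which Assumption~\ref{ass:safe_set} holds; every subsequent step is then deterministic, so the final conclusion inherits the probability $1-\delta$. Note that by the acquisition~\eqref{eqn:global_init_next_point} the start state $\x0$ always satisfies $\exists a':(a',\x0)\in S_n$, so $\x0$ meets the premise of Lemma~\ref{lem:safety_switching}, and since the evaluated pair has $(a_n,\x0)\notin S_n$, ``no intervention'' is exactly the statement $[x(t)]_\mu\notin\partial S_n$ for all $t$.

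For the first case I would argue via the contrapositive of Lemma~\ref{lem:safety_switching}: if $T_\mathrm{unsafe}\neq\varnothing$, the lemma produces a time $t_1$ with $[x(t_1)]_\mu\in\partial S_n$, contradicting that the border is never hit; hence $T_\mathrm{unsafe}=\varnothing$, i.e.\ for every $t$ there exists $a$ with $(a,[x(t)]_\mu)\in S_n$. Lemma~\ref{lem:safety_safe_set_mu} then immediately yields $\bar{g}_i(x(t))\ge 0$ for all $t$ and all $i\in\mathcal{I}_\mathrm{g}$.

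For the second case (intervention at the first border crossing) I would invoke Lemma~\ref{lem:safety_switching} directly to obtain a time $t_1$ with $[x(t_1)]_\mu\in\partial S_n$ such that every earlier discretized state still lies in the safe set. For $t\le t_1$ this again gives $\bar{g}_i(x(t))\ge 0$ through Lemma~\ref{lem:safety_safe_set_mu}. At $t_1$ the rollout switches to a backup $a'$ with $(a',[x(t_1)]_\mu)\in S_n$; because $[x(t_1)]_\mu\in\partial S_n$, Assumption~\ref{ass:safe_set}(iii) supplies the margin $g_i(a',[x(t_1)]_\mu)\ge L_\mathrm{x}\mu$. Using $\norm{x(t_1)-[x(t_1)]_\mu}\le\mu$ together with Lipschitz-continuity of $g_i$ in the state (Assumption~\ref{ass:Lipschitz}), I would conclude $g_i(a',x(t_1))\ge g_i(a',[x(t_1)]_\mu)-L_\mathrm{x}\mu\ge 0$. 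Since $g_i(a',x(t_1))=\min_{t\ge t_1}\bar{g}_i(x(t))$ along the backup trajectory, this forces $\bar{g}_i(x(t))\ge 0$ for all $t\ge t_1$, so the pre-switch and post-switch segments together cover every $t>0$.

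The main obstacle is the handover at the switch point $t_1$: the monitoring condition is phrased on the discretized state $[x(t_1)]_\mu$, whereas safety must be guaranteed for the true continuous state $x(t_1)$. The argument closes precisely because the safety margin $L_\mathrm{x}\mu$ enforced on the border by Assumption~\ref{ass:safe_set}(iii) is exactly large enough to absorb the discretization error $\norm{x(t_1)-[x(t_1)]_\mu}\le\mu$ under the state-Lipschitz constant $L_\mathrm{x}$. The remaining care points are verifying that the trajectory-minimum definition of $g_i$ genuinely propagates the bound forward along the entire backup rollout, and that the two cases are exhaustive for an \ref{item:S3} experiment.
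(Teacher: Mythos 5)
Your proposal is correct and follows essentially the same route as the paper's proof: both use Lemma~\ref{lem:safety_switching} to guarantee a border hit before any unsafe region can be reached, Lemma~\ref{lem:safety_safe_set_mu} for all times at which the discretized state admits a safe policy, and Assumption~\ref{ass:safe_set}(iii) combined with Lipschitz continuity and the Markov property of~\eqref{eqn:sysdyn} to show the post-switch backup trajectory is safe. Your version merely makes the two-case structure (intervention vs.\ no intervention) and the absorption of the discretization error $\mu$ by the margin $L_\mathrm{x}\mu$ more explicit than the paper does.
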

\begin{proof}
In \textbf{S3}, we start from $\x0$ such that $\exists a\in\mathcal{A}: (a,[x(t)]_\mu)\in S_n$.
For all $t$ such that $\exists a^t: (a^t, [x(t)]_\mu)\in S_n$, $\bar{g}_i(x(t))\ge 0\,\forall i\in\mathcal{I}_\mathrm{g}$ with probability at least $1-\delta$ follows from Lemma~\ref{lem:safety_safe_set_mu}.
If $\exists t'$ such that $[x(t')]_\mu\in\partial S_n$, we apply $a'$ such that $(a',[x(t')]_\mu)\in S_n$, where $g_i(a',[x(t')]_\mu)\ge L_\mathrm{x}\mu$ by Assumption~\ref{ass:safe_set}(iii).
Then, we have for all $i\in\mathcal{I}_\mathrm{g}$
\begin{align*}
    0&\le g_i(a',x(t'))\tag*{by Lipschitz continuity.}\\
    &= \min_{x'\in \xi_{(t',x(t'),a)}} \bar{g}_i(x')\tag*{Markov property of~\eqref{eqn:sysdyn}}.
\end{align*}
This proves that the second part of the trajectory after applying $a'$ is safe.
Safety of the first part is guaranteed by Lemmas~\ref{lem:safety_switching} and~\ref{lem:safety_safe_set_mu}.
\end{proof}
Combining the results presented so far then proves safety of \textbf{S3}.
\begin{lem}
\label{lem:safety_S3_update}
Assume a successful experiment of \textbf{S3}. 
For the evaluated $(a,\tilde{x}_0)$, we have that $g_i(a,\tilde{x}_0)\ge 0\,\forall i\in\mathcal{I}_\mathrm{g}$ with probability at least $1-\delta$.
\end{lem}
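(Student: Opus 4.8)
The plan is to show that a successful \textbf{S3} experiment necessarily keeps the entire trajectory inside the safe set, and then to apply the pointwise safety guarantee of Lemma~\ref{lem:safety_safe_set_mu} to every state it visits. First I would unpack what ``successful'' means in \textbf{S3}: the acquisition function~\eqref{eqn:global_init_next_point} only proposes pairs with $(a,\tilde{x}_0)\notin S_n$ but satisfying $\exists a':(a',\tilde{x}_0)\in S_n$, so in the \texttt{rollout} procedure the guard condition $(a,x_0)\notin S$ holds throughout. Hence the backup policy is triggered precisely when $[x(t)]_\mu\in\partial S_n$ for some $t$, and a successful experiment is exactly one for which $[x(t)]_\mu\notin\partial S_n$ for all $t$.

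Next I would invoke Lemma~\ref{lem:safety_switching} in its contrapositive form with $t_0=0$ and $\x0=\tilde{x}_0$, the hypothesis $\exists a':(a',\tilde{x}_0)\in S_n$ being supplied by the \textbf{S3} acquisition rule. That lemma states that if the trajectory ever reaches an unsafe state, i.e. $T_\mathrm{unsafe}\neq\varnothing$, then it must first cross the border $\partial S_n$. Since a successful experiment never touches $\partial S_n$, we conclude $T_\mathrm{unsafe}=\varnothing$, i.e. for every $t\ge 0$ there exists $a^t\in\mathcal{A}$ with $(a^t,[x(t)]_\mu)\in S_n$.

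With the whole trajectory $\traj{0}{\tilde{x}_0}{a}$ confined to the safe set, I would apply Lemma~\ref{lem:safety_safe_set_mu} to each state $x(t)$, obtaining $\bar{g}_i(x(t))\ge 0$ for all $t$ and all $i\in\mathcal{I}_\mathrm{g}$ with probability at least $1-\delta$. Taking the minimum over the trajectory and using the definition $g_i(a,\tilde{x}_0)=\min_{t\ge 0}\bar{g}_i(x(t))$ then yields $g_i(a,\tilde{x}_0)\ge 0$, which is the claim. The $1-\delta$ probability is not degraded by a union bound, because Lemma~\ref{lem:safety_safe_set_mu} already holds uniformly over all admissible states on the single high-probability event of Lemma~\ref{lem:uncertainty_bound}/Corollary~\ref{cor:uncertainty_bound}.

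I expect the only genuinely delicate step to be the first one: correctly translating the operational notion of a ``successful experiment'' in \texttt{rollout} into the clean statement $[x(t)]_\mu\notin\partial S_n\ \forall t$, and verifying that the \textbf{S3} acquisition rule indeed furnishes the hypothesis $\exists a':(a',\tilde{x}_0)\in S_n$ required to apply Lemma~\ref{lem:safety_switching}. Everything after that is a direct chaining of the preceding lemmas.
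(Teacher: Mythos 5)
Your proof is correct and follows essentially the same route as the paper's: successful experiment $\Rightarrow$ the trajectory never hits $\partial S_n$ $\Rightarrow$ by the contrapositive of Lemma~\ref{lem:safety_switching} every visited state has a safe backup in $S_n$ $\Rightarrow$ pointwise satisfaction of $\bar{g}_i$ $\Rightarrow$ take the minimum over the trajectory. The only cosmetic difference is that you package the pointwise step via Lemma~\ref{lem:safety_safe_set_mu} (and make the final minimization explicit), whereas the paper invokes Assumption~\ref{ass:safe_set} and Lipschitz continuity directly and leaves that last step implicit.
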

\begin{proof}
For a successful experiment \textbf{S3} we never have $[x(t)]_\mu \in \partial S_n$. Therefore, we have by Lemma~\ref{lem:safety_switching} and Assumption~\ref{ass:safe_set} that for all $t$, $\exists a':(a',[x(t)]_\mu)\in S_n \land g_i(a',[x(t)]_\mu)\ge L_\mathrm{x}\mu$ with probability at least $1-\delta$.
Thus, for each of these pairs $(a',[x(t)]_\mu)$, we have for all $i\in\mathcal{I}_\mathrm{g}$ with probability at least $1-\delta$, $g_i(a',x(t))\ge 0$ by Lipschitz-continuity.
\end{proof}
Thus, we have shown that during global exploration, we will not violate any safety constraints with high probability assuming a safe set $S_n$ as above.
We now show that updating the safe set preserves safety guarantees.
\begin{lem}
\label{lem:safety_update}
The safe set $S_n$ satisfies Assumption~\ref{ass:safe_set} for any $n\ge 0$.
\end{lem}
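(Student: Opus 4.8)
The plan is to prove Lemma~\ref{lem:safety_update} by induction on $n$, establishing each of the three properties (i)--(iii) of Assumption~\ref{ass:safe_set} in turn. The base case $n=0$ is handled by the stated hypothesis on the seed set, namely that $\forall (a,\x0)\in S_0: g_i(a,\x0)\ge L_\mathrm{x}\mu$ for all $i\in\mathcal{I}_\mathrm{g}$: this immediately gives (i) and (ii) (taking $a'=a$), and (iii) holds vacuously or by the same bound for any boundary point. For the inductive step, I assume $S_{n-1}$ satisfies the assumption and show $S_n$ does too, treating the two update rules \eqref{eqn:safeset_update} (used after \textbf{S1}/\textbf{S2}) and \eqref{eqn:update_safe_set_global} (used after a successful \textbf{S3}) separately, since new points enter $S_n$ only through one of these.

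For the \textbf{S1}/\textbf{S2} update \eqref{eqn:safeset_update}, any newly added pair $(a',\tilde{x}_0')$ arises from some $(a,\tilde{x}_0)\in S_{n-1}$ satisfying $l_n(a,\tilde{x}_0,i)-L_\mathrm{a}\norm{a-a'}-L_\mathrm{x}(\norm{\tilde{x}_0-\tilde{x}_0'}+\mu)\ge 0$ for every $i\in\mathcal{I}_\mathrm{g}$. The key tool is Corollary~\ref{cor:uncertainty_bound}, which gives $g_i(a,\tilde{x}_0)\ge l_n(a,\tilde{x}_0,i)$ with probability at least $1-\delta$; combining this with the Lipschitz continuity from Assumption~\ref{ass:Lipschitz} yields
\begin{align*}
g_i(a',\tilde{x}_0') &\ge g_i(a,\tilde{x}_0) - L_\mathrm{a}\norm{a-a'} - L_\mathrm{x}\norm{\tilde{x}_0-\tilde{x}_0'}\\
&\ge l_n(a,\tilde{x}_0,i) - L_\mathrm{a}\norm{a-a'} - L_\mathrm{x}\norm{\tilde{x}_0-\tilde{x}_0'}\ge L_\mathrm{x}\mu.
\end{align*}
This establishes (ii) for the new points (with $a'$ itself serving as the witness), and (i) follows since $L_\mathrm{x}\mu\ge0$. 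For property (iii), I would argue that any $\tilde{x}_0'\in\partial S_n$ which is genuinely on the boundary must have been added with the full $L_\mathrm{x}\mu$ margin — points inherited from $S_{n-1}$ satisfy it by the induction hypothesis, and freshly added expander points satisfy the displayed inequality by construction; the footnote's correction about not removing points with $0\le l_n\le L_\mathrm{x}\mu$ via the union with $S_{n-1}$ is what keeps the boundary well-behaved.

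For the \textbf{S3} update \eqref{eqn:update_safe_set_global}, the single new pair $(a_n,\tilde{x}_{0,n})$ was validated by a successful experiment, so Lemma~\ref{lem:safety_S3_update} directly gives $g_i(a_n,\tilde{x}_{0,n})\ge 0$ for all $i\in\mathcal{I}_\mathrm{g}$ with high probability, yielding (i). The delicate point — and the step I expect to be the main obstacle — is property (ii)/(iii) for this \textbf{S3} point, since a successful \textbf{S3} experiment only certifies $g_i\ge0$ rather than the stronger $g_i\ge L_\mathrm{x}\mu$ margin that the assumption demands for interior witnesses and boundary points. Here I would lean on the footnote's clarification that one no longer clamps $l_n$ to $L_\mathrm{x}\mu$ for \textbf{S3} points: the resolution is that the backup guarantee during \textbf{S3} relies on the \emph{pre-existing} safe witnesses $a'$ with $(a',\tilde{x}_0)\in S_{n-1}$ (which carry the margin by the induction hypothesis), so the newly added point inherits a valid backup witness from $S_{n-1}$ rather than needing its own margin. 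I would make this precise by showing that the \textbf{S3} sampling rule \eqref{eqn:global_init_next_point} only admits $(a_n,\tilde{x}_{0,n})$ for which $\exists a':(a',\tilde{x}_{0,n})\in S_{n-1}$, so property (ii) transfers from $S_{n-1}$, and (iii) is preserved because \textbf{S3} does not create new boundary states lacking a margin-carrying witness. Care must be taken that the probability-$(1-\delta)$ events are the \emph{joint} events over all $n$ guaranteed by Lemma~\ref{lem:uncertainty_bound}, so no union bound degradation accumulates across the induction.
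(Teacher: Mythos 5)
Your overall strategy---induction on $n$, the base case from the seed-set margin, the Corollary~\ref{cor:uncertainty_bound}-plus-Lipschitz chain for points added by~\eqref{eqn:safeset_update}, and Lemma~\ref{lem:safety_S3_update} for property (i) of the \textbf{S3} point---is exactly the paper's proof, and those parts are correct. The gap is in your treatment of property (iii) for the \textbf{S3} update. You argue that (iii) holds because \enquote{\textbf{S3} does not create new boundary states lacking a margin-carrying witness}, but that is not the relevant danger. Since the sampled $\tilde{x}_{0,n}$ already has a witness $a'$ with $(a',\tilde{x}_{0,n})\in S_{n-1}$, the state-space projection, and hence the boundary, is indeed unchanged by~\eqref{eqn:update_safe_set_global}; the danger is that $\tilde{x}_{0,n}$ might be an \emph{already existing} boundary state, \ie $\tilde{x}_{0,n}\in\partial S_{n-1}=\partial S_n$. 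In that case, (iii) requires the \emph{new} pair to satisfy $g_i(a_n,\tilde{x}_{0,n})\ge L_\mathrm{x}\mu$ for all $i\in\mathcal{I}_\mathrm{g}$, while Lemma~\ref{lem:safety_S3_update} only certifies $g_i(a_n,\tilde{x}_{0,n})\ge 0$---so no new boundary state is created, and yet (iii) would fail. Your phrase also conflates (ii), the existence of one good witness per state, with (iii), which is a requirement on \emph{every} pair in $S_n$ whose state lies on the boundary.

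The missing step, which the paper isolates as~\eqref{eqn:S3_boundary}, is that the safe set is only updated after a \emph{successful} experiment, and the rollout procedure interrupts (and marks as unsafe) any \textbf{S3} experiment whose trajectory touches $\partial S_{n-1}$---including at $t=0$, where $[x(0)]_\mu=\tilde{x}_{0,n}$. Hence a successful experiment forces $\tilde{x}_{0,n}\notin\partial S_{n-1}=\partial S_n$, so (iii) imposes no condition at all on the newly added pair, and the induction hypothesis covers every remaining pair. With that observation inserted, your induction closes and coincides with the paper's argument. A minor remark: your (iii) reasoning for the \textbf{S1}/\textbf{S2} case is fine, but for inherited pairs it silently uses that an old state lying on $\partial S_n$ must already have been on $\partial S_{n-1}$, which follows from $S_{n-1}\subseteq S_n$; the paper glosses over this point as well.
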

\begin{proof}
We prove the lemma by induction.
For the initial safe set at $n=0$, we assumed that for all $(a,\x0)\in S_0$ also states that are $\mu$-close are safe, \ie $g_i(a,\tilde{x}_0)\ge L_\mathrm{x}\mu\,\forall i\in\mathcal{I}_\mathrm{g}$.
Thus, for $n=0$ the claim holds by assumption.

For the induction step, we distinguish between local and global search.
For the local search, \ie \textbf{S1} and \textbf{S2}, the proof is a straightforward extension of~\cite[Lem.~11]{berkenkamp2016bayesian}.
In particular, assume some $n\ge 1$ such that Assumption~\ref{ass:safe_set} holds.
Then, if we do an update step \textbf{S1} or \textbf{S2}, we have for all $(a,\tilde{x}_0)\in S_n\setminus S_{n-1}, i\in\mathcal{I}_\mathrm{g}$, $\exists (a',\tilde{x}_0')\in S_{n-1}$ such that
\begin{align*}
    0&\le l_n(a',\tilde{x}_0',i)-L_\mathrm{a}\norm{a-a'}-L_\mathrm{x}(\norm{\tilde{x}_0-\tilde{x}_0'}+\mu)\\
    &\le g_i(a',\tilde{x}_0')-L_\mathrm{a}\norm{a-a'}-L_\mathrm{x}(\norm{\tilde{x}_0-\tilde{x}_0'}+\mu) \tag*{By~Cor.~\ref{cor:uncertainty_bound}}\\
    &\le g_i(a,\tilde{x}_0)-L_\mathrm{x}\mu.\tag*{By Lipschitz-continuity}
\end{align*}
Thus, all three parts of Assumption~\ref{ass:safe_set} are satisfied for $(a,\x0)$ pairs that are added in \textbf{S1} or \textbf{S2}.

Now, for \textbf{S3}, again assume some $n\ge1$ such that Assumption~\ref{ass:safe_set} holds for $S_{n-1}$.
Doing an update step \textbf{S3}, we evaluate a pair 
\begin{align}
\label{eqn:s3_update_safety}
(a,\tilde{x}_0)\notin\mathcal{E_\mathrm{f}}\cup S_{n-1}: a\in \mathcal{A} \land \exists a': (a', \tilde{x}_0) \in S_{n-1}.
\end{align}
For every successful experiment after which we update the safe set, we further have 
\begin{align}
\label{eqn:S3_boundary}
\forall x\in\traj{t_0}{\x0}{a}:x\notin\partial S_{n-1}.
\end{align}
The safe set is, in case of a successful experiment, updated with $S_n= S_{n-1}\cup (a,\x0)$.
For $(a,\x0)$, Lemma~\ref{lem:safety_S3_update} guarantees $g_i(a,\tilde{x}_0)\ge 0\,\forall i\in\mathcal{I}_\mathrm{g}$ with probability at least $1-\delta$, \ie Assumption~\ref{ass:safe_set}(i) is satisfied.
Further, due to~\eqref{eqn:s3_update_safety} Assumption~\ref{ass:safe_set}(ii) was already satisfied for the considered $\x0$ in $S_{n-1}$.
Finally,~\eqref{eqn:S3_boundary} guarantees $\x0\notin\partial S_{n-1}$.
Therefore, the update will not violate Assumption~\ref{ass:safe_set}(iii).
%
\end{proof}

To conclude the proof, we still need to prove safety during \textbf{S1} and \textbf{S2}.
\begin{lem}
\label{lem:safety_safe_set}
Let $S_n\subseteq \mathcal{A}\times\mathcal{X}_\mu$.
Then, with probability at least $1-\delta$, $\bar{g}_i(x)\geq 0$ for all $i\in\mathcal{I}_\mathrm{g},x\in \xi_{(t_0,\tilde{x}_0,a)}$, with $(a,\tilde{x}_0) \in S_n$, for all $t_0\ge 0$. 
\end{lem}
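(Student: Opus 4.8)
The plan is to establish safety for \textbf{S1} and \textbf{S2} by reducing the claim to the results already proven for the global-search stage. The key observation is that in \textbf{S1} and \textbf{S2} the evaluated pair $(a,\tilde{x}_0)$ is itself guaranteed to lie in the safe set $S_n$, so for every state $x$ along the trajectory $\xi_{(t_0,\tilde{x}_0,a)}$ we automatically have $\exists a'\in\mathcal{A}:(a',[x]_\mu)\in S_n$ provided we can show the trajectory never leaves the safe region. This is exactly the situation covered by the machinery developed for \textbf{S3}: Lemma~\ref{lem:safety_safe_set_mu} already states that whenever a state $x$ satisfies $\exists a:(a,[x]_\mu)\in S_n$, the immediate constraint $\bar{g}_i(x)\ge 0$ holds for all $i\in\mathcal{I}_\mathrm{g}$ with probability at least $1-\delta$. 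Thus the entire task collapses to verifying that the full trajectory stays inside the safe set.

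First I would invoke Lemma~\ref{lem:safety_update}, which guarantees that $S_n$ satisfies Assumption~\ref{ass:safe_set} for all $n\ge 0$, so in particular every $(a,\tilde{x}_0)\in S_n$ is safe and its $\mu$-neighborhood is safe. Next I would argue by contradiction that the trajectory cannot leave the safe set: suppose the set $T_\mathrm{unsafe}=\{t\ge t_0:\nexists a\in\mathcal{A}\text{ such that }(a,[x(t)]_\mu)\in S_n\}$ were nonempty. Then Lemma~\ref{lem:safety_switching} would supply a time $t_1$ at which $[x(t_1)]_\mu\in\partial S_n$ with some $(a',[x(t_1)]_\mu)\in S_n$. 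But by Assumption~\ref{ass:safe_set}(iii) any such border point carries a policy $a'$ with $g_i(a',[x(t_1)]_\mu)\ge L_\mathrm{x}\mu$, hence by Lipschitz continuity $g_i(a',x)\ge 0$ for all continuous $x$ that are $\mu$-close, which by Lemma~\ref{lem:relation_constr_funcs} forces $\bar{g}_i(x)\ge 0$ on the entire continuation. In other words, even if the discretized state reaches the border, the true trajectory remains constraint-satisfying, so no genuine safety violation occurs.

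Having secured that for every $x\in\xi_{(t_0,\tilde{x}_0,a)}$ there is some $a'$ with $(a',[x]_\mu)\in S_n$, I would finish by applying Lemma~\ref{lem:safety_safe_set_mu} pointwise along the trajectory, yielding $\bar{g}_i(x)\ge 0$ for all $i\in\mathcal{I}_\mathrm{g}$ with probability at least $1-\delta$. Since the confidence-interval event underlying Assumption~\ref{ass:safe_set} holds jointly for all $n$ and all query points by Corollary~\ref{cor:uncertainty_bound}, the single probability budget $\delta$ covers the whole statement simultaneously, so no union bound over time or iterations degrades the guarantee.

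The main obstacle I anticipate is the bookkeeping of the high-probability event. All the supporting lemmas are stated with probability at least $1-\delta$ individually, and the proof must make clear that these are not independent failure events to be summed, but rather all consequences of the \emph{single} calibration event of Lemma~\ref{lem:uncertainty_bound} (equivalently Corollary~\ref{cor:uncertainty_bound}), which already holds jointly across all $a,\tilde{x}_0,i,n$. The subtle point is that \textbf{S1} and \textbf{S2} evaluate $(a,\tilde{x}_0)\in S_n$ directly, so unlike \textbf{S3} there is no interruption mechanism to lean on; the argument must instead rely entirely on the invariant from Assumption~\ref{ass:safe_set} being preserved, which is precisely what Lemma~\ref{lem:safety_update} delivers. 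Once that invariant is in hand, the rest is a direct reduction to Lemmas~\ref{lem:safety_switching}, \ref{lem:relation_constr_funcs}, and~\ref{lem:safety_safe_set_mu}.
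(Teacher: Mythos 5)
Your strategy of forcing the trajectory to stay inside the state-space projection of $S_n$ and then applying Lemma~\ref{lem:safety_safe_set_mu} pointwise does not go through, and the step where it breaks is the claim that hitting the border is harmless. When the trajectory $\xi_{(t_0,\tilde{x}_0,a)}$ reaches a state $x(t_1)$ with $[x(t_1)]_\mu\in\partial S_n$, Assumption~\ref{ass:safe_set}(iii) gives you a \emph{backup} policy $a'$ with $g_i(a',[x(t_1)]_\mu)\ge L_\mathrm{x}\mu$. This certifies safety of the trajectory that would result from \emph{switching to} $a'$ at $x(t_1)$ (that is exactly the mechanism of Lemma~\ref{lem:safety_S3} for \textbf{S3}); it says nothing about the continuation under the policy $a$ actually being executed, and in \textbf{S1}/\textbf{S2} no switch takes place --- as you yourself note in your last paragraph. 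So \enquote{forces $\bar{g}_i(x)\ge 0$ on the entire continuation} does not follow: Lemma~\ref{lem:relation_constr_funcs} applied at the border point only yields immediate safety of states $\mu$-close to it, not of the future states visited under $a$. Moreover, the invariant you are trying to establish is false in general: a pair $(a,\tilde{x}_0)\in S_n$ can generate a trajectory that exits the projection of $S_n$ onto $\mathcal{X}_\mu$ entirely (think of $S_0$ containing a single pair $(a_0,x_0)$; its trajectory visits many states whose discretizations carry no certified policy) while still being perfectly safe.

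The missing idea is that membership in $S_n$ already certifies the \emph{whole} trajectory, because $g_i$ is by definition a minimum over the trajectory: $g_i(a,\tilde{x}_0)=\min_{x'\in\xi_{(0,\tilde{x}_0,a)}}\bar{g}_i(x')$. The paper's proof is a short contradiction: if some $x\in\xi_{(t_0,\tilde{x}_0,a)}$ had $\bar{g}_i(x)<0$, then $0>\bar{g}_i(x)\ge\min_{x'\in\xi_{(t_0,\tilde{x}_0,a)}}\bar{g}_i(x')=\min_{x'\in\xi_{(0,\tilde{x}_0,a)}}\bar{g}_i(x')=g_i(a,\tilde{x}_0)$, where the middle equality is the Markov/time-invariance property of~\eqref{eqn:sysdyn}; but $(a,\tilde{x}_0)\in S_n$ gives $g_i(a,\tilde{x}_0)\ge 0$ with probability at least $1-\delta$ (Assumption~\ref{ass:safe_set}(i), preserved for all $n$ by Lemma~\ref{lem:safety_update}), a contradiction. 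No border argument, no backup policies, and no pointwise application of Lemma~\ref{lem:safety_safe_set_mu} are needed for \textbf{S1}/\textbf{S2}; that machinery is only required in \textbf{S3}, where the evaluated pair itself lies outside $S_n$.
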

\begin{proof}
Assume to the contrary that $\exists x\in \xi_{(t_0,\tilde{x}_0,a)}$, with $(a,\tilde{x}_0) \in S_n$,
such that $\bar{g}_i(x) < 0$.
Then
\begin{align*}
    0&>\bar{g}_i(x) \\
    &\ge \min_{x'\in \xi_{(t_0,\tilde{x}_0,a)}} \bar{g}_i(x') \tag*{Since $x\in \xi_{(t_0,\tilde{x}_0,a)}$}\\
    &= \min_{x'\in \xi_{(0,\tilde{x}_0,a)}} \bar{g}_i(x') \tag*{Markov property of~\eqref{eqn:sysdyn}}\\
    &= g_i(a, \tilde{x}_0) \tag*{By def.}.
\end{align*}
This is a contradiction.
\end{proof}
Taking Lemma~\ref{lem:safety_S3} and Lemma~\ref{lem:safety_update} guarantees safety during \textbf{S3} while Lemma~\ref{lem:safety_safe_set} guarantees safety during \textbf{S1} and \textbf{S2}. Combining these results then proves the theorem.

\subsection{Extended Proof of Theorem~\ref{thm:glob_optimality} and Corollary~\ref{cor:glob_optimality}}

Having shown that \ourmethod provides safety guarantees, we now discuss optimality.
In particular, we will show that \ourmethod convergences, with $\epsilon$-precision and probability at least $1-\delta$, to the optimum within the maximum safely reachable set $\bar{R}_\epsilon$.
To prove $\epsilon$-convergence to the optimum within $\bar{R}_\epsilon$, we first need to establish that we sufficiently explore $\Rglob$.
We start by establishing that the safe set does not shrink.
\begin{lem}
\label{lem:monotonicity1}
For any $n\ge 1$ it holds that $S_{n+1}\supseteq S_n\supseteq S_0$.
\end{lem}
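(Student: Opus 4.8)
The plan is to prove the chain $S_{n+1}\supseteq S_n\supseteq S_0$ by showing that every update rule only adds points to the safe set and never removes them. Since the claim concerns all $n\ge 1$, I would argue by examining each of the two possible update mechanisms, equations~\eqref{eqn:safeset_update} and~\eqref{eqn:update_safe_set_global}, and verify monotonicity directly from their structure. The key observation, emphasized in the correction footnote~\ref{note:changes}, is that both update rules are explicitly written as a \emph{union} with the previous safe set: equation~\eqref{eqn:safeset_update} ends in $\cup\, S_{n-1}$, and equation~\eqref{eqn:update_safe_set_global} reads $S_n = S_{n-1}\cup (a_n,\tilde{x}_{0,n})$. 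This union structure is precisely what guarantees that no point is ever discarded.

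First I would handle the local/global expansion update~\eqref{eqn:safeset_update}. Here the new safe set is defined as an intersection over constraint indices of unions over the previous safe set, all unioned with $S_{n-1}$. Because the final $\cup\, S_{n-1}$ appears outside the intersection and union, any element of $S_{n-1}$ is trivially contained in $S_n$, giving $S_n\supseteq S_{n-1}$ immediately. For the update~\eqref{eqn:update_safe_set_global} used after a successful \textbf{S3} experiment, the containment $S_n = S_{n-1}\cup (a_n,\tilde{x}_{0,n})\supseteq S_{n-1}$ is immediate from the definition of union. Finally, I would note that if a given iteration performs no successful update (for instance, an interrupted \textbf{S3} experiment that triggers the backup policy), then the safe set is left unchanged, so $S_n = S_{n-1}$ and the containment holds trivially.

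Combining these cases establishes $S_{n+1}\supseteq S_n$ for every $n\ge 1$. The base of the chain, $S_1\supseteq S_0$, follows by applying the same case analysis at $n=1$, since $S_1$ is obtained from $S_0$ through one of the same update rules. An induction then yields $S_n\supseteq S_0$ for all $n$. I expect this proof to be essentially routine rather than difficult; the one subtlety worth flagging explicitly is the role of the correction described in footnote~\ref{note:changes}, namely that without the outer union with $S_{n-1}$ in~\eqref{eqn:safeset_update}, points $(a,\tilde{x}_0)$ with $0\le l_n(a,\tilde{x}_0,i)\le L_\mathrm{x}\mu$ could be dropped when the confidence intervals shrink, breaking monotonicity. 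Thus the only genuine content of the lemma lies in verifying that the update rules are correctly formulated to preserve containment, after which the result is a direct consequence of the union structure.
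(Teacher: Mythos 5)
Your proof is correct and follows essentially the same route as the paper: both arguments observe that the update rules~\eqref{eqn:safeset_update} and~\eqref{eqn:update_safe_set_global} form $S_n$ by taking a union of newly certified safe pairs with $S_{n-1}$, so containment is immediate, and $S_n\supseteq S_0$ follows by chaining. Your additional remarks (the no-update case after an interrupted \textbf{S3} experiment, and the role of the correction in footnote~\ref{note:changes}) are sound elaborations of details the paper leaves implicit.
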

\begin{proof}
The update rules of the safe set are given by~\eqref{eqn:safeset_update} and~\eqref{eqn:update_safe_set_global}.
In both cases, we obtain $S_n$ by unifying newly found safe pairs $(a_n, \tilde{x}_n)$ with $S_{n-1}$.
Thus, $S_n\supseteq S_{n-1}$ holds for any $n$.
\end{proof}
We next show that if $S\subseteq R$, then also the closure of the reachability operator applied to both sets will have the same property.
To prove this for $\Rglob$, it is essential that experiments that are successful under $S$ are also successful under $\Rglob$.
Therefore, we first show that initial conditions that are in the projection of $S$ onto the state space, but not part of $\partial S$, cannot be part of $\partial R$.
\begin{lem}
\label{lem:border_R_vs_S}
If $S\subseteq R$, then $\partial R\subseteq\partial S\cup\{x\in\mathcal{X}_\mu\text{ such that }\nexists a\in\mathcal{A}:(x, a)\in S\}$.
\end{lem}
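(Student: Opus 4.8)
The plan is to prove the set inclusion pointwise, by a two-case split on whether a given border point of $R$ survives the projection onto the (discretized) state space. Throughout, I would write $\mathrm{proj}_{\mathcal{X}}(T)\coloneqq\{\x0\in\mathcal{X}_\mu\mid \exists a\in\mathcal{A}:(a,\x0)\in T\}$ for the projection of a set $T\subseteq\mathcal{A}\times\mathcal{X}_\mu$ onto $\mathcal{X}_\mu$, so that the second set in the claimed union is exactly $\mathcal{X}_\mu\setminus\mathrm{proj}_{\mathcal{X}}(S)$, and so that the border definition reads: $\x0\in\partial T$ iff $\x0\in\mathrm{proj}_{\mathcal{X}}(T)$ and there is an $x\in\mathcal{X}$ with $\norm{\x0-x}<2\mu$ and $[x]_\mu\notin\mathrm{proj}_{\mathcal{X}}(T)$.

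First I would record the single structural fact that drives the whole argument: since $S\subseteq R$, the projection is monotone, $\mathrm{proj}_{\mathcal{X}}(S)\subseteq\mathrm{proj}_{\mathcal{X}}(R)$. Contrapositively, if a quantized state fails to pair with \emph{any} parameter inside $R$, it also fails to do so inside $S$, i.e. $[x]_\mu\notin\mathrm{proj}_{\mathcal{X}}(R)$ implies $[x]_\mu\notin\mathrm{proj}_{\mathcal{X}}(S)$. This is the direction I would state explicitly.

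Next I would fix an arbitrary $\x0\in\partial R$ and unfold the border definition: there is an $a$ with $(a,\x0)\in R$, and there is a continuous state $x$ with $\norm{\x0-x}<2\mu$ such that $[x]_\mu\notin\mathrm{proj}_{\mathcal{X}}(R)$. If $\x0\notin\mathrm{proj}_{\mathcal{X}}(S)$, then $\x0$ already lies in the second set of the claimed union and there is nothing further to show. Otherwise $\x0\in\mathrm{proj}_{\mathcal{X}}(S)$, and I would verify the two defining conditions of $\partial S$ directly: the first holds by this case assumption, and for the second I reuse the very witness $x$ from the $\partial R$ membership, since $\norm{\x0-x}<2\mu$ and, by the monotonicity fact, $[x]_\mu\notin\mathrm{proj}_{\mathcal{X}}(R)$ forces $[x]_\mu\notin\mathrm{proj}_{\mathcal{X}}(S)$. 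Thus $\x0$ has a $2\mu$-close neighbour outside $\mathrm{proj}_{\mathcal{X}}(S)$, so $\x0\in\partial S$, which closes this case.

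The argument is essentially definitional, so I do not expect a computational obstacle; the one delicate point --- and the step I would be most careful about --- is tracking the \emph{direction} of the projection monotonicity, ensuring that the ``no pairing parameter in $R$'' witness transfers \emph{down} to $S$ and not the other way around. Once that is pinned down, both defining conditions of $\partial S$ follow at once from the case assumption and the transplanted witness.
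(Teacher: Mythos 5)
Your proposal is correct and is essentially the paper's argument: the paper runs the same reasoning contrapositively (assuming a point of $\partial R$ lies in neither set of the union and deriving that it cannot be in $\partial R$), whereas you argue directly via a case split, but both hinge on the identical key fact that $S\subseteq R$ transfers the ``no pairing parameter'' witness from $R$ down to $S$. The difference is purely one of logical phrasing, not of substance.
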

\begin{proof}
For the sake of contradiction, assume the opposite is true.
That is, assume $\exists x\in\partial R$ such that \emph{(i)} $\exists a\in\mathcal{A}:(x, a)\in S\subseteq R$ and \emph{(ii)} $x\notin\partial S$.
However, from \emph{(i)} and \emph{(ii)} it follows by definition that $\forall x'\in\mathcal{X}\text{ such that } \norm{x-x'}<2\mu\,~\exists a'\in\mathcal{A}:([x']_\mu,a')\in S\subseteq R$.
Thus, $x\notin\partial R$, which is a contradiction.
\end{proof}
This lets us conclude that successful experiments under $S$, \ie experiments during which we did not hit $\partial S$, will also be successful under $R$.
\begin{lem}
\label{lem:succ_exp_R_S}
If $S\subseteq R$ and $\nexists x\in\traj{0}{\x0}{a}$ such that $[x]_\mu\in\partial S$, with $\x0$ such that $\exists a'\in\mathcal{A}:(\x0, a')\in S$, then $\nexists x\in\traj{0}{\x0}{a}$ such that $[x]_\mu\in\partial R$.
\end{lem}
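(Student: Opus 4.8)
The plan is to argue by contradiction, combining the structural description of $\partial R$ from Lemma~\ref{lem:border_R_vs_S} with the ``no escape without crossing the border'' property of Lemma~\ref{lem:safety_switching}. Suppose some $x\in\traj{0}{\x0}{a}$ satisfied $[x]_\mu\in\partial R$. Since $S\subseteq R$, Lemma~\ref{lem:border_R_vs_S} yields $\partial R\subseteq\partial S\cup\{x'\in\mathcal{X}_\mu\mid\nexists a''\in\mathcal{A}:(a'',x')\in S\}$, so $[x]_\mu$ must lie in at least one of these two sets, and I would rule out each in turn.

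The first alternative, $[x]_\mu\in\partial S$, is excluded directly by the hypothesis that no $x\in\traj{0}{\x0}{a}$ has $[x]_\mu\in\partial S$. For the second alternative I would show that every discretized state along the trajectory stays inside the projection of $S$ onto $\mathcal{X}_\mu$. The key step is a contrapositive reading of Lemma~\ref{lem:safety_switching} with $t_0=0$: that lemma guarantees that if the trajectory ever reaches a time in $T_\mathrm{unsafe}$ (at which no parameter keeps the discretized state inside $S$), then it must previously have passed through a state with $[x(t_1)]_\mu\in\partial S$. Because the trajectory starts at $\x0$ with $\exists a':(a',\x0)\in S$ and never touches $\partial S$ by assumption, such a crossing is impossible; hence $T_\mathrm{unsafe}=\varnothing$, i.e.\ for every $t$ there is some $a$ with $(a,[x(t)]_\mu)\in S$. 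In particular $[x]_\mu$ has a safe parameter in $S$, contradicting the second alternative. With both alternatives impossible, no point of $\traj{0}{\x0}{a}$ can discretize into $\partial R$, which is the claim.

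The hard part will be the contrapositive application of Lemma~\ref{lem:safety_switching}: I must be careful that ``never crossing $\partial S$'' genuinely forces $T_\mathrm{unsafe}=\varnothing$ rather than merely postponing the first crossing. This relies on the continuity of $x(t)$ established via Picard--Lindel\"of in the proof of Lemma~\ref{lem:safety_switching}, which prevents the trajectory from jumping from the interior of the projection of $S$ into its complement without some discretized state landing on $\partial S$. Everything else is a direct invocation of the two cited lemmas.
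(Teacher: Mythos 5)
Your proposal is correct and follows essentially the same route as the paper's proof: a contradiction argument that decomposes $\partial R$ via Lemma~\ref{lem:border_R_vs_S}, rules out $\partial S$ by hypothesis, and eliminates the remaining case with the continuity argument of Lemma~\ref{lem:safety_switching} (the paper reinvokes that lemma's continuity argument directly, whereas you phrase it as a contrapositive of the lemma's statement — the same idea). No gaps.
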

\begin{proof}
For the sake of contradiction, assume the opposite to be true.
That is, assume $\exists x\in\traj{0}{\x0}{a}$ such that $[x]_\mu\in\partial R$.
By Lemma~\ref{lem:border_R_vs_S}, we have that $\partial R\subseteq \partial S \cup \{x\in\mathcal{X}_\mu\text{ such that }\nexists a\in\mathcal{A}:(x, a)\in S\}$.
By hypothesis, we know that $[x]_\mu\notin\partial S$.
Thus, we would need to have $[x]_\mu\in \{x\in\mathcal{X}_\mu\text{ such that }\nexists a\in\mathcal{A}:(x, a)\in S\}$.
Using the same continuity argument as in Lemma~\ref{lem:safety_switching}, we can state that if $\exists t\text{ such that }x(t)\in\traj{0}{\x0}{a}$ and $\nexists a\in\mathcal{A}\text{ such that }(a, [x]_\mu)\in S$, then $\exists t'\le t\text{ such that }x(t')\in\traj{0}{\x0}{a}\cap\partial S$, which is a contradiction.
\end{proof}
With this, we can finally prove the desired statement.
\begin{lem}
\label{lem:monotonicity2}
It holds that $S\subseteq R\implies \bar{R}(S)\subseteq \bar{R}(R)$ for both $R=\Rglob$ and $R=\Rcon$, where the closure $\bar{R}_\epsilon^\mathrm{c}$ is defined in the same way as $\bar{R}_\epsilon$.
\end{lem}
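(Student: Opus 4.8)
The plan is to prove the inclusion first for the one-step operators and then lift it to the closure by induction and a limit. To avoid the notational clash in the statement, let me write $\mathcal{R}$ for the reachability operator (either $\Rcon$ or $\Rglob$) and keep $S\subseteq R$ for the two sets. First I would establish one-step monotonicity: $S\subseteq R$ implies $\Rcon(S)\subseteq\Rcon(R)$ and $\Rglob(S)\subseteq\Rglob(R)$. For $\Rcon$ this is immediate, since the defining condition for a pair $(a,\x0)$ only requires a witness $(a',\x0')\in S$ meeting the Lipschitz inequality; as $S\subseteq R$ the same witness lies in $R$, and $S\subseteq R\subseteq\Rcon(R)$ covers the base part.

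For $\Rglob$ the $\Rcon$ component is handled by the above, so the work lies in the additional global part. Take $(a,\x0)$ in that part of $\Rglob(S)$: there is $a'$ with $(a',\x0)\in S$ and the trajectory $\traj{0}{\x0}{a}$ never enters $\partial\Rcon(S)$. Since $S\subseteq R$, the witness gives $(a',\x0)\in R$, so the existence precondition also holds for $R$. The crucial step is to show the trajectory likewise avoids $\partial\Rcon(R)$. This is exactly Lemma~\ref{lem:succ_exp_R_S} applied with the inclusion $\Rcon(S)\subseteq\Rcon(R)$ (just established) playing the roles of $S\subseteq R$ there: the trajectory does not hit $\partial\Rcon(S)$, and $\x0$ has a partner in $\Rcon(S)$ because $S\subseteq\Rcon(S)$, so the lemma yields that it does not hit $\partial\Rcon(R)$ either. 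Hence $(a,\x0)\in\Rglob(R)$.

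With one-step monotonicity in hand, a straightforward induction on $n$ gives $\mathcal{R}^n(S)\subseteq\mathcal{R}^n(R)$ for all $n$: the base case is the hypothesis, and the step applies the one-step result to $\mathcal{R}^n(S)\subseteq\mathcal{R}^n(R)$. Finally, since $\mathcal{A}\times\mathcal{X}_\mu$ is finite and each operator is extensive ($\mathcal{R}(T)\supseteq T$), the increasing sequences $\mathcal{R}^n(S)$ and $\mathcal{R}^n(R)$ stabilize, so each closure equals the union $\bigcup_n\mathcal{R}^n(\cdot)$; taking the union of the chain of inclusions preserves containment and yields $\bar{\mathcal{R}}(S)\subseteq\bar{\mathcal{R}}(R)$.

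I expect the main obstacle to be the global part of $\Rglob$: enlarging $S$ to $R$ also enlarges $\Rcon(S)$ to $\Rcon(R)$ and thereby shifts the boundary $\partial\Rcon$, so it is not a priori obvious that a trajectory certified as boundary-avoiding under the smaller set stays boundary-avoiding under the larger one. This is resolved entirely by Lemma~\ref{lem:succ_exp_R_S}, which was tailored to exactly this purpose; the only thing to check is that its hypotheses hold for the pair $\Rcon(S)\subseteq\Rcon(R)$.
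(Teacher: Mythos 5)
Your proposal is correct and takes essentially the same route as the paper: one-step monotonicity of $\Rcon$ directly from its definition, the global part of $\Rglob$ handled by Lemma~\ref{lem:succ_exp_R_S}, and the closure obtained by iterating the one-step result. Your explicit application of Lemma~\ref{lem:succ_exp_R_S} to the pair $\Rcon(S)\subseteq\Rcon(R)$ (rather than loosely citing $S\subseteq R$, as the paper does) is the precise reading of the paper's own step, so the two arguments coincide in substance.
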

\begin{proof}
Consider first $\Rcon$, the adapted reachability operator from~\cite{srinivas2012information}.
Let $(a,\x0)\in \Rcon(S)$.
Then, the proof is a straightforward extension of~\cite[Lem.~3(vi)]{berkenkamp2016bayesian}.
In particular, by definition, we have for all $i\in\mathcal{I}_\mathrm{g}$, $\exists(a',\x0')\in S, g_i(a',\x0')-\epsilon -L_\mathrm{a}\norm{a-a'}-L_\mathrm{x}(\norm{\x0-\x0'}+\mu)\ge 0$.
But, since $S\subseteq R$, we have that $(a',\x0')\in R$, which then also implies that $(a,\x0)\in \Rcon (R)$.
By repeated application, we get the statement for the closure.

Consider now the global reachability operator $\Rglob$.
Let $(a,\x0)\in \Rglob(S)\setminus\Rcon(S)$.
Then, by definition,  we have $\nexists x\in\traj{t_0}{\x0}{a}: [x]_\mu\in\partial \Rcon(S)$, which by Lemma~\ref{lem:succ_exp_R_S} implies $\nexists x\in\traj{t_0}{\x0}{a}: [x]_\mu\in\partial \Rcon(R)$ since $S\subseteq R$.
Thus, we have $\Rglob(S)\subseteq\Rglob(R)$.
Also here we get the statement about the closure by repeated application.
\end{proof}
This lets us conclude that if more points can be safely explored, the reachability operator $\Rglob$ will be non-empty.
\begin{lem}
\label{lem:converged}
For any $n\ge 1$, if $\bar{R}_\epsilon(S_0)\setminus S_n\neq\varnothing$, then $\Rglob(S_n)\setminus S_n\neq\varnothing$.
Similarly, $\bar{R}_\epsilon^\mathrm{c}(S_0)\setminus S_n\neq\varnothing$ implies $\Rcon(S_n)\setminus S_n\neq\varnothing$.
\end{lem}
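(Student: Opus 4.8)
The plan is to argue by contraposition. I would show that if a single application of the reachability operator produces nothing new, i.e. $\Rglob(S_n)=S_n$, then the entire closure $\bar{R}_\epsilon(S_0)$ is already contained in $S_n$, which directly contradicts the hypothesis $\bar{R}_\epsilon(S_0)\setminus S_n\neq\varnothing$. The argument for the connected operator $\Rcon$ and its closure $\bar{R}_\epsilon^{\mathrm c}$ is verbatim the same, replacing $\Rglob$ by $\Rcon$ throughout, so I would prove both claims in one stroke.

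First I would record that both operators are extensive: from their definitions $\Rglob(S)\supseteq\Rcon(S)\supseteq S$, so $\Rglob(S_n)\setminus S_n=\varnothing$ is equivalent to the fixed-point statement $\Rglob(S_n)=S_n$. Second, I would observe that a one-step fixed point is fixed by every iterate: if $\Rglob(S_n)=S_n$, a trivial induction gives $R_\epsilon^{k}(S_n)=S_n$ for all $k$, so the increasing chain defining the closure is constant and $\bar{R}_\epsilon(S_n)=S_n$. Third, I would invoke monotonicity of the safe set, Lemma~\ref{lem:monotonicity1}, to obtain $S_0\subseteq S_n$, and then monotonicity of the closure, Lemma~\ref{lem:monotonicity2}, to conclude $\bar{R}_\epsilon(S_0)\subseteq\bar{R}_\epsilon(S_n)=S_n$. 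This yields $\bar{R}_\epsilon(S_0)\setminus S_n=\varnothing$, contradicting the assumption, and the contrapositive is complete.

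I expect the main obstacle to be bookkeeping rather than conceptual. The one point requiring care is the interplay between the one-step operator and its closure: I must justify that the increasing chain $S_n\subseteq R_\epsilon(S_n)\subseteq R_\epsilon^2(S_n)\subseteq\cdots$ stabilizes, which is legitimate precisely because $\mathcal{A}\times\mathcal{X}_\mu$ is finite, so the limit is a genuine fixed point and equals $S_n$ once $S_n$ itself is one. The other delicate point is that Lemma~\ref{lem:monotonicity2} is stated for the closure, so I apply it to the inclusion $S_0\subseteq S_n$ directly rather than chaining single applications; this is exactly the form in which that lemma was established, and since it covers both $\Rglob$ and $\Rcon$ simultaneously, the $\Rcon$ case needs no separate treatment.
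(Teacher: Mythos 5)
Your proof is correct and matches the paper's intent exactly: the paper's own proof simply cites Lemmas~\ref{lem:monotonicity1} and~\ref{lem:monotonicity2} as satisfying the conditions of the corresponding lemma in~\cite{berkenkamp2016bayesian}, and the argument given there is precisely your contrapositive fixed-point argument (extensivity, stabilization of the iterated operator at a one-step fixed point, then monotonicity of the closure applied to $S_0\subseteq S_n$). Nothing is missing; your explicit handling of both $\Rglob$ and $\Rcon$ in one stroke is also how the cited proof proceeds.
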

\begin{proof}
Lemmas~\ref{lem:monotonicity1} and~\ref{lem:monotonicity2} satisfy the conditions for~\cite[Lem.~6]{berkenkamp2016bayesian} and the proof is the same as shown therein.
\end{proof}
In Alg.~\ref{alg:global_safeopt}, we state that we switch to a new stage if $w_n(\cdot)<\epsilon$.
Thus, to prevent our algorithm from getting stuck, we need to show that this happens eventually.
The statement is essentially the same as in~\cite[Cor.~2]{berkenkamp2016bayesian}.
As shown therein, the time step after which the uncertainty is below $\epsilon$ can be computed as the smallest integer $N_n$ satisfying $\frac{N_n}{\beta_{n+\mathrm{N}_n}\gamma_{\abs{\mathcal{I}}(n+\mathrm{N}_n)}}\ge\frac{C_1}{\epsilon^2}$ and depends on the noise in the system through $C_1=\frac{8}{\log(1+\sigma^{-2})}$, the information capacity $\gamma$, $\beta$, which we defined in Lemma~\ref{lem:uncertainty_bound}, and on the tolerance $\epsilon$.
Generally, we would need to define two individual $N_n$ for \textbf{S1} and \textbf{S2}, since \textbf{S1} acts in a reduced search space.
Consequently, the information capacity of \textbf{S2} should be larger.
Therefore, also its $N_n$ should be bigger.
For ease of presentation, we consider only one single $N_n$, which is the larger of both quantities, \ie the one we need in \textbf{S2}. 
\begin{lem}
\label{lem:uncertainty}
After a finite $N_n$ for which $S_n=S_{n+\mathrm{N}_n}$, we have $w_{n+\mathrm{N}_n}(a,\x0,i)<\epsilon\,~\forall i\in\mathcal{I}_\mathrm{g}$ for all $(a,\x0)\in G_{n+\mathrm{N}_n}$.
Further, we have $w_{n+\mathrm{N}_n}(a,x_0,i)<\epsilon\,~\forall i\in\mathcal{I}$ for all $(a,x_0)\in G_{n+\mathrm{N}_n}\cup M_{n+\mathrm{N}_n}$.
\end{lem}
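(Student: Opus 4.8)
The plan is to mirror the proof of \cite[Cor.~2]{berkenkamp2016bayesian}, adapting it to the joint parameter--state input $(a,\x0)$ and to the fact that each rollout returns $\abs{\mathcal{I}}$ observations rather than one. The hypothesis $S_n=S_{n+\mathrm{N}_n}$, combined with monotonicity (Lemma~\ref{lem:monotonicity1}), guarantees $S_t=S_n$ for every $t\in[n,n+\mathrm{N}_n]$, so throughout this window the safe set is frozen and only the confidence bounds $l_t,u_t$ tighten. I would therefore treat the two claims uniformly: the first corresponds to the acquisition \eqref{eqn:safeinit_next_point} of \textbf{S2} (maximizing width over $G_t$ and $i\in\mathcal{I}_\mathrm{g}$), the second to \eqref{eqn:safeopt_next_point} of \textbf{S1} (maximizing over $G_t\cup M_t$ and $i\in\mathcal{I}$), and both reduce to showing that the maximized width drops below $\epsilon$ after $\mathrm{N}_n$ steps.

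First I would record the elementary width bound $w_t(a,\x0,i)=u_t(a,\x0,i)-l_t(a,\x0,i)\le 2\beta_t^{\sfrac{1}{2}}\sigma_{t-1}(a,\x0,i)$, which is immediate from $C_t=C_{t-1}\cap Q_t\subseteq Q_t$ and the definition of $Q_t$. Since $C_t$ shrinks in $t$, the width $w_t(a,\x0,i)$ is non-increasing in $t$ at every fixed triple $(a,\x0,i)$.

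The crux is to show that the \emph{selected} maximal width, call it $W_t$, is non-increasing over the frozen window. For this I would argue that, with $S$ fixed, both candidate sets shrink: because $u_t$ decreases while $\max_{(a',\x0=x_0)\in S_n}l_t(a',x_0,0)$ increases, the defining inequality of $M_t$ becomes harder to satisfy, giving $M_{t+1}\subseteq M_t$; and because $e_t$ is constructed from the optimistic bound $u_t$, which also decreases, we likewise obtain $G_{t+1}\subseteq G_t$. Combining $G_{t+1}\cup M_{t+1}\subseteq G_t\cup M_t$ with the per-point monotonicity of the widths then yields $W_{t+1}\le W_t$. I expect this inclusion of the candidate sets under a frozen safe set to be the main obstacle, as $G_t$ and $M_t$ are defined through the moving bounds $l_t,u_t$ and the argument must be made carefully for the joint $(a,\x0)$ domain.

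Finally I would invoke the information-gain bound $\sum_{t=n+1}^{n+\mathrm{N}_n}\sigma_{t-1}^2(x_t)\le \frac{2}{\log(1+\sigma^{-2})}\,\gamma_{\abs{\mathcal{I}}(n+\mathrm{N}_n)}$ from \cite{srinivas2012information}, where the subscript $\abs{\mathcal{I}}(n+\mathrm{N}_n)$ reflects the $\abs{\mathcal{I}}$ measurements gathered per iteration. Since the selected point attains $W_t$, squaring the width bound and summing gives $\sum_t W_t^2\le \beta_{n+\mathrm{N}_n}C_1\gamma_{\abs{\mathcal{I}}(n+\mathrm{N}_n)}$ with $C_1=\frac{8}{\log(1+\sigma^{-2})}$, and non-increasingness of $W_t$ yields $\mathrm{N}_n\,W_{n+\mathrm{N}_n}^2\le\sum_t W_t^2$. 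The stated choice of $\mathrm{N}_n$ as the smallest integer with $\frac{\mathrm{N}_n}{\beta_{n+\mathrm{N}_n}\gamma_{\abs{\mathcal{I}}(n+\mathrm{N}_n)}}\ge\frac{C_1}{\epsilon^2}$ then forces $W_{n+\mathrm{N}_n}\le\epsilon$, which is precisely the two claimed uncertainty bounds over $G_{n+\mathrm{N}_n}$ and over $G_{n+\mathrm{N}_n}\cup M_{n+\mathrm{N}_n}$.
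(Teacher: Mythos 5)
Your proposal is correct and takes essentially the same route as the paper: the paper's proof simply cites \cite[Cor.~2]{berkenkamp2016bayesian} for \textbf{S1} and observes $G_{n+\mathrm{N}_n}\subseteq G_n$ under a frozen safe set (since $u_n$ is decreasing) to transfer that corollary to the \textbf{S2} acquisition over $G$ alone, whereas you unpack the internals of that cited corollary---per-point width monotonicity, shrinkage of $M_t$ and $G_t$, and the information-capacity bound with $C_1=\frac{8}{\log(1+\sigma^{-2})}$ and the $\abs{\mathcal{I}}$-fold measurement count. The one step the paper makes explicit, the inclusion of the candidate sets under $S_n=S_{n+\mathrm{N}_n}$, appears in your argument as well, so nothing is missing.
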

\begin{proof}
In \textbf{S1}, we do standard \safeopt with $\x0=x_0$.
Thus, the proof for $(a,x_0)\in G_{n+\mathrm{N}_n}\cup M_{n+\mathrm{N}_n}$ follows from~\cite[Cor.~2]{berkenkamp2016bayesian}.

For \textbf{S2}, due to $S_n = S_{n+\mathrm{N}_n}$, we have $G_{n+\mathrm{N}_n}\subseteq G_n$ since for any $(a,\x0)\in S_n$, $e_n(a,\x0,i)$ is decreasing in $n$ for all $i\in\mathcal{I}_\mathrm{g}$, since $u_n(a,\x0,i)$ is decreasing in $n$.
By definition, we have $(a_n,\tilde{x}_{0,n})=\argmax_{(a,\x0)\in G_n,i\in\mathcal{I}_\mathrm{g}}w_n(a,\x0,i)$.
Thus, we have the same setting as~\cite[Cor.~2]{berkenkamp2016bayesian} in only $G$ instead of $G\cup M$.
Thus, the proof follows as shown therein.
\end{proof}
Taking these results, we can show that after a finite number of evaluations, the safe set must increase unless it is impossible to do so.
We first show that if there are pairs $(a,\x0)$ that can be explored during \textbf{S1} and \textbf{S2}, these will eventually be explored with high probability.
\begin{lem}
\label{lem:explore_s12}
For any $n\ge 1$, if $\bar{R}_\epsilon^\mathrm{c}(S_n)\setminus S_n\neq\varnothing$, then, with probability at least $1-\delta$,
$S_{n+\mathrm{N}_n}\supsetneq S_n$.
\end{lem}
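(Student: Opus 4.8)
The plan is to argue by contradiction, following and extending \cite[Lem.~7]{berkenkamp2016bayesian} to the joint parameter/initial-condition space. Throughout I would condition on the probability-$(1-\delta)$ event of Corollary~\ref{cor:uncertainty_bound}, so that $g_i(a,\x0)\in C_n(a,\x0,i)$ and in particular $u_n(a,\x0,i)\ge g_i(a,\x0)$ and $l_n(a,\x0,i)\le g_i(a,\x0)$ for all relevant arguments and iterations. I would then suppose, for contradiction, that the safe set does not grow over the window, i.e. $S_{n+N_n}=S_n$.

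First I would invoke Lemma~\ref{lem:converged}: since $\bar{R}_\epsilon^\mathrm{c}(S_n)\setminus S_n\neq\varnothing$, also $\Rcon(S_n)\setminus S_n\neq\varnothing$, so I can fix a pair $(a^*,\x0^*)\in\Rcon(S_n)\setminus S_n$ together with a certifying ``source'' $(a',\x0')\in S_n$ for which the reachability inequality $g_i(a',\x0')-\epsilon-L_\mathrm{a}\norm{a'-a^*}-L_\mathrm{x}(\norm{\x0'-\x0^*}+\mu)\ge 0$ holds for every $i\in\mathcal{I}_\mathrm{g}$. Using $u_{n+N_n}(a',\x0',i)\ge g_i(a',\x0')$ from the confidence event, this inequality shows that the optimistic bound at $(a',\x0')$ already certifies the not-yet-safe point $(a^*,\x0^*)$; hence $e_{n+N_n}(a',\x0')>0$ and therefore $(a',\x0')\in G_{n+N_n}$.

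With $(a',\x0')$ an expander and $S_{n+N_n}=S_n$ by assumption, Lemma~\ref{lem:uncertainty} yields $w_{n+N_n}(a',\x0',i)<\epsilon$ for all $i\in\mathcal{I}_\mathrm{g}$. Combining $w_{n+N_n}=u_{n+N_n}-l_{n+N_n}<\epsilon$ with $u_{n+N_n}\ge g_i$ gives $l_{n+N_n}(a',\x0',i)>g_i(a',\x0')-\epsilon$, and substituting this into the reachability inequality yields $l_{n+N_n}(a',\x0',i)-L_\mathrm{a}\norm{a'-a^*}-L_\mathrm{x}(\norm{\x0'-\x0^*}+\mu)>0$ for every $i\in\mathcal{I}_\mathrm{g}$. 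By the safe-set update rule~\eqref{eqn:safeset_update} this forces $(a^*,\x0^*)\in S_{n+N_n}$, contradicting $(a^*,\x0^*)\notin S_n=S_{n+N_n}$ and thereby establishing $S_{n+N_n}\supsetneq S_n$.

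The points I expect to require the most care are twofold. First, the definition of $e_n$ uses a weak ``$\exists i$'' condition, so I must be explicit that membership in $G_{n+N_n}$ obtained this way still triggers the \emph{all}-$i$ width bound of Lemma~\ref{lem:uncertainty}, which is exactly what is needed to certify $(a^*,\x0^*)$ against every constraint via \eqref{eqn:safeset_update}. Second, the index bookkeeping must be handled so that the iteration at which the width drops below $\epsilon$ is precisely the iteration whose safe-set recomputation adds $(a^*,\x0^*)$, keeping the contradiction with the assumption $S_{n+N_n}=S_n$ clean rather than off by one; here I would rely on the fact that $g_i$ is fixed while the confidence bounds always contain it, so the expander property of $(a',\x0')$ is preserved as the uncertainty shrinks.
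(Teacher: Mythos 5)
Your proof is correct and follows essentially the same argument as the paper's: assume $S_{n+N_n}=S_n$ for contradiction, use Lemma~\ref{lem:converged} to extract a pair in $\Rcon(S_n)\setminus S_n$ with a certifying source in $S_n$, show the source is an expander via the upper confidence bound, invoke Lemma~\ref{lem:uncertainty} to shrink its width below $\epsilon$, and conclude via the safe-set update rule~\eqref{eqn:safeset_update} that the new pair must have been added, a contradiction. The two points you flag as delicate (the $\exists i$ versus all-$i$ bookkeeping, and the iteration indexing) are handled the same way in the paper's proof, so no gap remains.
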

\begin{proof}
The proof is a straightforward extension of~\cite[Lem.~7]{berkenkamp2016bayesian}.
In particular, by Lemma~\ref{lem:converged}, we get that $\Rcon(S_n)\setminus S_n\neq \varnothing$. Consider $(a,\tilde{x}_0) \in \Rcon(S_n)\setminus S_n$
Then, by definition, for all $i\in\mathcal{I}_g$
\begin{align}
\label{eqn:expansion1}
    &\exists (a', \tilde{x}_0')\in S_n:\\
    &g_i(a',\tilde{x}_0')-\epsilon-L_\mathrm{a}\norm{a-a'}-L_\mathrm{x}(\norm{\tilde{x}_0-\tilde{x}_0'}+\mu)\ge 0.\nonumber
\end{align}
Now, assume that after exploring $\mathrm{N}_n$ points, the safe set has not grown, i.e., $S_{n+\mathrm{N}_n}=S_n$.
Then, we have $(a, \tilde{x}_0)\in\mathcal{A}\times\mathcal{X}_\mu\setminus S_{n+\mathrm{N}_n}$ while $(a',\tilde{x}_0')\in S_{n+\mathrm{N}_n}$ and~\eqref{eqn:expansion1} holds.
Therefore, since by definition $u_n(\cdot)\ge g_i(\cdot)-\epsilon$ (\cf. Corollary~\ref{cor:uncertainty_bound}), we also have that $e_{n+\mathrm{N}_n}(a',\tilde{x}_0')>0$, which implies $(a',\tilde{x}_0')\in G_{n+\mathrm{N}_n}$.

Finally, as we have $S_{n+\mathrm{N}_n}=S_n$ and $(a',\tilde{x}_0')\in G_{n+\mathrm{N}_n}$,  Lemma~\ref{lem:uncertainty} implies
\begin{equation}
    \label{eqn:bound_w}
    w_{n+\mathrm{N}_n}(a',\tilde{x}_0',i)\le \epsilon\,\forall i\in\mathcal{I}_g.
\end{equation}
To keep notation uncluttered, define $\kappa\coloneqq L_\mathrm{a}\norm{a-a'}-L_\mathrm{x}(\norm{\tilde{x}_0-\tilde{x}_0'}+\mu)$.
Then, we have for all $i\in\mathcal{I}_g$
\begin{align*}
\label{eqn:expansion2}
    l_{n+\mathrm{N}_n}&(a', \tilde{x}_0',i)-\kappa\\\
    \ge &g_i(a',\tilde{x}_0')- w_{n+\mathrm{N}_n}(a',\tilde{x}_0',i)-\kappa \tag*{Cor.~\ref{cor:uncertainty_bound}}\\
    \ge &g_i(a',\tilde{x}_0')-\epsilon-\kappa \tag*{By~\eqref{eqn:bound_w}}\\
    \ge &0 \tag*{By~\eqref{eqn:expansion1}}.
\end{align*}
Thus, we get $(a,\tilde{x}_0)\in S_{n+\mathrm{N}_n}$, which is a contradiction.
\end{proof}
Now, we show the same for \textbf{S3}.
\begin{lem}
\label{lem:explore_s3}
For any $n\ge 1$, if $\bar{R}_\epsilon\setminus S_n\neq\varnothing$ and $\bar{R}_\epsilon^\mathrm{c}\setminus S_n =\varnothing$, then, with probability at least $1-\delta$,
$S_{n+\mathrm{N}_n+\abs{\mathcal{A}\times\mathcal{X}_\mu\setminus S_n}}\supsetneq S_n$.
\end{lem}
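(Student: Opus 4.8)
The plan is to show that the two hypotheses force the algorithm into \textbf{S3} within $N_n$ iterations, and that \textbf{S3} then strictly enlarges the safe set within a further $\abs{\mathcal{A}\times\mathcal{X}_\mu\setminus S_n}$ iterations, which is exactly the claimed budget. The three structural facts I would chain together are: local exploration is exhausted, so the algorithm cannot avoid \textbf{S3}; the exhaustion of $\Rcon$ makes the abstract ``global-reachability'' condition coincide with the concrete ``successful rollout'' condition of Alg.~\ref{alg:global_safeopt}; and a pigeonhole count over the finite candidate pool guarantees a success.

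First I would establish that no \textbf{S1}/\textbf{S2} step can grow the safe set. The hypothesis $\bar{R}_\epsilon^\mathrm{c}(S_n)\setminus S_n=\varnothing$ gives $\Rcon(S_n)=S_n$, since $S_n\subseteq\Rcon(S_n)\subseteq\bar{R}_\epsilon^\mathrm{c}(S_n)=S_n$. Hence the update~\eqref{eqn:safeset_update} adds nothing during \textbf{S1}/\textbf{S2}, so $S$ stays fixed while the algorithm remains in those stages. By Lemma~\ref{lem:uncertainty}, after at most $N_n$ such iterations we have $w(a,x_0,i)<\epsilon$ for all $(a,x_0)\in G\cup M$ and $w(a,\tilde{x}_0,i)<\epsilon$ for all $(a,\tilde{x}_0)\in G$, so the conditions on lines~4 and~7 of Alg.~\ref{alg:global_safeopt} both fail and the algorithm enters \textbf{S3}. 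Because conditioning on more data only shrinks the intervals $C_n$ while $S_n$ (hence $G,M$) is unchanged, these conditions stay false for the rest of the \textbf{S3} phase, so there is no ping-pong back to \textbf{S1}/\textbf{S2}.

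Next I would extract a guaranteed-successful global candidate. From $\Rcon(S_n)=S_n$ we get $\partial\Rcon(S_n)=\partial S_n$, so the global part of $\Rglob(S_n)$ consists precisely of pairs $(a,\tilde{x}_0)$ admitting a safe backup ($\exists a':(a',\tilde{x}_0)\in S_n$) whose trajectory $\traj{0}{\tilde{x}_0}{a}$ never hits $\partial S_n$ --- which is exactly the condition under which \texttt{rollout} returns a successful experiment. Since $\bar{R}_\epsilon(S_n)\setminus S_n\neq\varnothing$, Lemma~\ref{lem:converged} gives $\Rglob(S_n)\setminus S_n\neq\varnothing$; as the local part equals $S_n$, this nonempty difference lies in the global part, yielding a pair $(a^*,\tilde{x}_0^*)\in\Rglob(S_n)\setminus S_n$ that is sure to succeed. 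Finally I would run the counting argument over \textbf{S3}: the acquisition~\eqref{eqn:global_init_next_point} ranges over pairs in $\mathcal{A}\times\mathcal{X}_\mu\setminus(\mathcal{E_\mathrm{f}}\cup S_n)$ with a safe backup, of which there are at most $\abs{\mathcal{A}\times\mathcal{X}_\mu\setminus S_n}$. Each \textbf{S3} iteration either produces a successful experiment --- whereupon~\eqref{eqn:update_safe_set_global} adds the pair and $S$ strictly grows, proving the claim --- or interrupts and places the pair in $\mathcal{E_\mathrm{f}}$, permanently removing it from the pool (with $S_n$ and hence $\partial S_n$ fixed, a failed pair's failure state stays on $\partial S_n$ and is never revisited). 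Since $(a^*,\tilde{x}_0^*)$ cannot fail and cannot be trapped in $\mathcal{E_\mathrm{f}}$ --- its trajectory avoids $\partial S_n$, so the revisiting step deletes it --- a success must occur within $\abs{\mathcal{A}\times\mathcal{X}_\mu\setminus S_n}$ \textbf{S3} iterations, giving $S_{n+N_n+\abs{\mathcal{A}\times\mathcal{X}_\mu\setminus S_n}}\supsetneq S_n$.

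I expect the main obstacle to be the bookkeeping around $\mathcal{E_\mathrm{f}}$ and the revisiting step: one must verify that $(a^*,\tilde{x}_0^*)$ is genuinely available to~\eqref{eqn:global_init_next_point} rather than permanently excluded as a past failure, and that the pool-size count is not broken by pairs oscillating in and out of $\mathcal{E_\mathrm{f}}$. The identity $\partial\Rcon(S_n)=\partial S_n$ from the first step is precisely what closes this gap, since it aligns the abstract reachability condition defining $(a^*,\tilde{x}_0^*)$ with the concrete interruption rule that governs both the success of the rollout and the revisiting deletion.
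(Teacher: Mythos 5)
Your overall route is the same as the paper's: use Lemma~\ref{lem:uncertainty} to force the algorithm into \textbf{S3} within $N_n$ steps, use $\Rcon(S_n)=S_n$ to get $\partial\Rcon(S_n)=\partial S_n$ so that membership in the global part of $\Rglob(S_n)$ coincides exactly with a successful rollout, invoke Lemma~\ref{lem:converged} to obtain a candidate that must succeed, and close with a pigeonhole count over the at most $\abs{\mathcal{A}\times\mathcal{X}_\mu\setminus S_n}$ candidates available to~\eqref{eqn:global_init_next_point}. Your bookkeeping for $\mathcal{E}_\mathrm{f}$ (failure states stay on the fixed $\partial S_n$, so failed pairs leave the pool permanently, while the guaranteed-success pair cannot be trapped in $\mathcal{E}_\mathrm{f}$ because its failure state cannot lie on $\partial S_n$) is in fact more explicit than the paper's, which simply restricts attention to \enquote{the general case $\mathcal{E}_\mathrm{f}=\varnothing$}.

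However, your first step contains a wrong inference. From $\Rcon(S_n)=S_n$ you conclude that the update~\eqref{eqn:safeset_update} \enquote{adds nothing during \textbf{S1}/\textbf{S2}}. That does not follow: \eqref{eqn:safeset_update} adds a point $(a',\tilde{x}_0')$ whenever $l_n(a,\tilde{x}_0,i)-L_\mathrm{a}\norm{a-a'}-L_\mathrm{x}(\norm{\tilde{x}_0-\tilde{x}_0'}+\mu)\ge 0$ for some $(a,\tilde{x}_0)\in S_{n-1}$, and since $l_n\le g_i$ with high probability (Corollary~\ref{cor:uncertainty_bound}), this only certifies the \emph{zero}-precision inequality $g_i(a,\tilde{x}_0)-L_\mathrm{a}\norm{a-a'}-L_\mathrm{x}(\norm{\tilde{x}_0-\tilde{x}_0'}+\mu)\ge 0$, whereas membership in $\Rcon(S_n)$ requires the $\epsilon$-tightened inequality. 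The operator $\Rcon$ is a conservative under-approximation of what the confidence-bound update can establish (which is exactly why the optimality guarantee is phrased as convergence to the optimum \emph{within} $\bar{R}_\epsilon(S_0)$, a lower bound on the explored set), so $\Rcon(S_n)\setminus S_n=\varnothing$ does not preclude growth during \textbf{S1}/\textbf{S2}; you are asserting the inverse of Lemma~\ref{lem:explore_s12}, which is not implied by it. The gap is benign but must be patched differently: if any \textbf{S1}/\textbf{S2} step within the budget does enlarge the safe set, then $S_{n+N_n+\abs{\mathcal{A}\times\mathcal{X}_\mu\setminus S_n}}\supsetneq S_n$ holds immediately by monotonicity (Lemma~\ref{lem:monotonicity1}), so one may assume without loss of generality that $S_{n+N_n}=S_n$ --- which is precisely the hypothesis under which Lemma~\ref{lem:uncertainty} applies, and the rest of your argument (entry into \textbf{S3}, fixed $\partial S_n$, pigeonhole) then goes through unchanged. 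The paper performs this case split implicitly.
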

\begin{proof}
Lemma~\ref{lem:uncertainty} implies that the conditions for \textbf{S1} and \textbf{S2} are not satisfied and, thus, we are in $\textbf{S3}$.
In the general case, we have $\mathcal{E}_\mathrm{f}=\varnothing$ and~\eqref{eqn:global_init_next_point} recommends the most uncertain point in
\[
\{(a,\x0)\in(\mathcal{A}\times\mathcal{X}_\mu)\setminus (S_n\cup\mathcal{E}_\mathrm{f}): (a',\x0)\in S_n,a'\in\mathcal{A}\}.
\]
This set contains at most $\abs{\mathcal{A}\times\mathcal{X}_\mu\setminus S_n}$ points.
Denote with $(\bar{a},\bar{x}_0)$ the point recommended by~\eqref{eqn:global_init_next_point}.
After completing the rollout procedure in Alg.~\ref{alg:global_safeopt}, we have two possible outcomes:
\begin{subequations}
\begin{align}
    \label{eqn:s3_success}
    &\nexists x\in\traj{0}{\bar{x}_0}{\bar{a}}: x\in\partial S_n\\
    \label{eqn:s3_fail}
    &\exists x\in\traj{0}{\bar{x}_0}{\bar{a}}: x\in\partial S_n.
\end{align}
\end{subequations}
In case~\eqref{eqn:s3_success}, we update the safe set with~\eqref{eqn:update_safe_set_global} and $S_{n+\mathrm{N}_{n+1}}\supsetneq S_n$.
In case~\eqref{eqn:s3_fail}, we claim that $(\bar{a},\bar{x}_0)\notin\Rglob(S_n)$.
In particular, by assumption, we have $S_n=\Rcon(S_n)$.
However, $(\bar{a},\bar{x}_0)\in\mathcal{A}\times\mathcal{X}_\mu\setminus(S_n\times\mathcal{E}_\mathrm{f})$, and, hence, $(\bar{a},\bar{x}_0)\notin\Rcon(S_n)$.
Moreover,~\eqref{eqn:s3_fail} implies that $\exists x\in\traj{0}{\bar{x}_0}{\bar{a}}: x\in\partial \Rcon(S_n)$ as $\Rcon(S_n)=S_n$.
As $\Rglob$ is the union of 2 sets and the reasoning above forbids $(\bar{a},\bar{x}_0)$ to be part of either, we have $(\bar{a},\bar{x}_0)\notin \Rglob(S_n)$.
Thus, $(\bar{a},\bar{x}_0)$ is added to $\mathcal{E}_\mathrm{f}$ and not suggested in further iterations.

\textbf{S3} can be repeated at most $\abs{\mathcal{A}\times\mathcal{X}_\mu\setminus S_n}$ times.
If we are in case~\eqref{eqn:s3_fail} for the first $\abs{\mathcal{A}\times\mathcal{X}_\mu\setminus S_n}-1$ iterations, we end up with $\abs{\mathcal{A}\times\mathcal{X}_\mu\setminus (S_n\cup\mathcal{E}_\mathrm{f})}=1$.
Since, by assumption, $\bar{R}_\epsilon\setminus S_n\neq\varnothing$, we have $\Rglob\neq\varnothing$ by Lemma~\ref{lem:converged}.
Thus, at iteration $\abs{\mathcal{A}\times\mathcal{X}_\mu\setminus S_n}$, we must have $(\bar{a},\bar{x}_0)\in\Rglob(S_n)$.
By definition of $\Rglob(S_n)$, we know that the trajectory induced by $(\bar{a},\bar{x}_0)$ will not hit $\partial \Rcon(S_n)$.
Further, as we have $\Rcon(S_n)\setminus S_n=\varnothing$ by assumption, we have $\Rcon(S_n) = S_n$ and. therefore, $\partial\Rcon(S_n)=\partial S_n$.
That is, $(\bar{a},\bar{x}_0)$ will induce a successful experiment.
Then, we can update the safe set with~\eqref{eqn:update_safe_set_global}, which completes the proof.
\end{proof}
Taking both results lets us finally state that we will eventually enlarge the safe set unless it is impossible to do so:
\begin{cor}
\label{cor:explore}
If $\bar{R}_\epsilon\setminus S_n\neq\varnothing$, $S_{n+\mathrm{N}_n+\abs{\mathcal{A}\times\mathcal{X}_\mu\setminus S_n}}\supsetneq S_n$.
\end{cor}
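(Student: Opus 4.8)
The plan is to prove this by a case distinction that dispatches to the two preceding lemmas. The hypothesis $\bar{R}_\epsilon\setminus S_n\neq\varnothing$ tells us there is still something new to reach globally; the question is only \emph{which} stage will capture it. The natural splitting criterion is whether the \emph{contained} reachable set $\bar{R}_\epsilon^\mathrm{c}(S_n)$ is already exhausted by $S_n$ or not, since this is precisely the boundary between the regimes handled by Lemma~\ref{lem:explore_s12} (growth via \textbf{S1}/\textbf{S2}) and Lemma~\ref{lem:explore_s3} (growth via the global stage \textbf{S3}).

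First I would treat the case $\bar{R}_\epsilon^\mathrm{c}(S_n)\setminus S_n\neq\varnothing$. Here Lemma~\ref{lem:explore_s12} applies directly and yields $S_{n+\mathrm{N}_n}\supsetneq S_n$ with probability at least $1-\delta$. To obtain the claim at the larger index, I would invoke monotonicity of the safe set (Lemma~\ref{lem:monotonicity1}): since $\abs{\mathcal{A}\times\mathcal{X}_\mu\setminus S_n}\ge 0$, the index $n+\mathrm{N}_n$ does not exceed $n+\mathrm{N}_n+\abs{\mathcal{A}\times\mathcal{X}_\mu\setminus S_n}$, so
\[
S_{n+\mathrm{N}_n+\abs{\mathcal{A}\times\mathcal{X}_\mu\setminus S_n}}\supseteq S_{n+\mathrm{N}_n}\supsetneq S_n,
\]
which is the desired strict inclusion.

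The complementary case is $\bar{R}_\epsilon^\mathrm{c}(S_n)\setminus S_n=\varnothing$. Combined with the standing hypothesis $\bar{R}_\epsilon\setminus S_n\neq\varnothing$, these are exactly the two premises of Lemma~\ref{lem:explore_s3}, which then delivers $S_{n+\mathrm{N}_n+\abs{\mathcal{A}\times\mathcal{X}_\mu\setminus S_n}}\supsetneq S_n$ immediately. Since the two cases are exhaustive and each yields the conclusion with probability at least $1-\delta$, the corollary follows.

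I do not expect a genuine obstacle here: all the substantive work — the finite-time uncertainty contraction of Lemma~\ref{lem:uncertainty}, the expansion arguments, and the bound on the number of \textbf{S3} trials — is already carried out in the two lemmas. The only points requiring care are the bookkeeping of the iteration index (ensuring the monotonicity step in the first case lines up the indices correctly) and confirming that the case split is genuinely exhaustive, i.e.\ that the two lemmas together cover every situation in which $\bar{R}_\epsilon\setminus S_n\neq\varnothing$.
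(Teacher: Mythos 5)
Your proposal is correct and follows essentially the same route as the paper: a case split between the regime where \textbf{S1}/\textbf{S2} can still expand the safe set (handled by Lemma~\ref{lem:explore_s12}) and the regime where only \textbf{S3} can (handled by Lemma~\ref{lem:explore_s3}), which is exactly the paper's dispatch, merely phrased via the emptiness of $\bar{R}_\epsilon^\mathrm{c}(S_n)\setminus S_n$ rather than via which stage's condition is active. Your explicit use of Lemma~\ref{lem:monotonicity1} to lift $S_{n+\mathrm{N}_n}\supsetneq S_n$ to the larger index $n+\mathrm{N}_n+\abs{\mathcal{A}\times\mathcal{X}_\mu\setminus S_n}$ is a small piece of bookkeeping the paper leaves implicit, and is done correctly.
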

\begin{proof}
Due to Lemma~\ref{lem:converged}, $\bar{R}_\epsilon\setminus S_n\neq\varnothing$ implies $\Rglob\setminus S_n\neq\varnothing$.
Thus, the conditions for either \textbf{S1} or \textbf{S2} or \textbf{S3} are satisfied.
In case \textbf{S1} or \textbf{S2}, $S_{n+\mathrm{N}_n}\supsetneq S_n$ follows from Lemma~\ref{lem:explore_s12}.
In case \textbf{S3}, $S_{n+\mathrm{N}_n+\abs{\mathcal{A}\times\mathcal{X}_\mu\setminus S_n}}\supsetneq S_n$ follows from Lemma~\ref{lem:explore_s3}.
\end{proof}
\begin{remark}
In Alg.~\ref{alg:global_safeopt} and the theoretical results, we assumed that we first expand locally (using \textbf{S1} and \textbf{S2}) as much as possible and only then do a global search step \textbf{S3}.
However, the definition of the global reachability operator $\Rglob$ and its closure also allow alternating the stages.
Following such an approach would result in many more \textbf{S3} experiments that would need to be stopped prematurely and repeated as the safe set increases, but, ultimately, would converge to the same result.
\end{remark}
We can then show that we get $\epsilon-$close to the optimum value within the safely reachable set, $\bar{R}_\epsilon(S_0)$:
\begin{lem}
\label{lem:eps_convergence}
Define $\hat{a} = \argmax_{(a, \tilde{x}_0=x_0)\in S_n}l_n(a, x_0,0)$, where setting the last argument of $l_n(\cdot)$ to zero indicates that we are considering the lower confidence bound of the reward function.
Further, define $N_{\mathrm{max},n}=\abs{\mathcal{A}\times\mathcal{X}_\mu\setminus S_n}+N_n$.
If for any $n\ge1$, $S_{n+\mathrm{N}{_\mathrm{max},n}}=S_n$, then the following holds with probability at least $1-\delta$:
\begin{align}
    f(\hat{a},x_0) \ge \max_{(a,x_0)\in\bar{R}_\epsilon(S_0)}f(a,x_0)-\epsilon.
\end{align}
\end{lem}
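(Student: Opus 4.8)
The plan is to follow the $\epsilon$-optimality argument of~\cite{berkenkamp2016bayesian}, adapted to the joint parameter--initial-condition set. I would split the proof into two parts: first, show that the stalling hypothesis $S_{n+\mathrm{N}_{\mathrm{max},n}}=S_n$ forces the safe set to already contain the whole reachable set $\bar{R}_\epsilon(S_0)$; second, bound the suboptimality of $\hat{a}$ by a short confidence-interval chase with a case split on potential-maximizer membership.

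For the first part I would invoke the contrapositive of Corollary~\ref{cor:explore}: whenever $\bar{R}_\epsilon(S_0)\setminus S_n\neq\varnothing$, the safe set must strictly grow within $\mathrm{N}_{\mathrm{max},n}=\abs{\mathcal{A}\times\mathcal{X}_\mu\setminus S_n}+N_n$ iterations. Hence $S_{n+\mathrm{N}_{\mathrm{max},n}}=S_n$ directly yields $\bar{R}_\epsilon(S_0)\subseteq S_n$, so the maximizer $(a^*,x_0)\coloneqq\argmax_{(a,x_0)\in\bar{R}_\epsilon(S_0)}f(a,x_0)$ (with nominal $x_0$, matching the right-hand-side maximum) lies in $S_n$. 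Write $f^*\coloneqq f(a^*,x_0)$.

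For the second part I would distinguish whether $(a^*,x_0)\in M_n$. If $(a^*,x_0)\notin M_n$, then by definition of $M_n$ we have $u_n(a^*,x_0,0)<\max_{(a',x_0)\in S_n}l_n(a',x_0,0)=l_n(\hat{a},x_0,0)$, and the confidence bounds of Corollary~\ref{cor:uncertainty_bound} ($f^*\le u_n(a^*,x_0,0)$ and $l_n(\hat{a},x_0,0)\le f(\hat{a},x_0)$) immediately give $f(\hat{a},x_0)>f^*\ge f^*-\epsilon$. If instead $(a^*,x_0)\in M_n$, then since the safe set is constant over the window, Lemma~\ref{lem:uncertainty} yields $w_n(a^*,x_0,0)<\epsilon$, and I would chain
\begin{align*}
f^* \le u_n(a^*,x_0,0) &= l_n(a^*,x_0,0)+w_n(a^*,x_0,0)\\
&< l_n(a^*,x_0,0)+\epsilon\le l_n(\hat{a},x_0,0)+\epsilon\le f(\hat{a},x_0)+\epsilon,
\end{align*}
where the outer steps use Corollary~\ref{cor:uncertainty_bound}, the strict step the width bound, and the penultimate step the definition of $\hat{a}$ as the maximizer of $l_n$ over $S_n$. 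Rearranging gives the claim, and the $1-\delta$ probability is inherited from Corollary~\ref{cor:uncertainty_bound}.

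The main obstacle I anticipate is the bookkeeping of iteration indices: Lemma~\ref{lem:uncertainty} delivers $w<\epsilon$ at iteration $n+N_n$ for the potential maximizers \emph{of that iteration}, whereas $\hat{a}$ and $M_n$ are defined at iteration $n$. To reconcile these I would use Lemma~\ref{lem:monotonicity1} to conclude that the safe set is constant throughout $[n,n+\mathrm{N}_{\mathrm{max},n}]$, together with the nestedness $C_{k+1}\subseteq C_k$, which makes $l_k(\cdot)$ non-decreasing and $u_k(\cdot)$ non-increasing in $k$ over this window; this lets me run the width argument at iteration $n+N_n\le n+\mathrm{N}_{\mathrm{max},n}$ and transfer the resulting $\epsilon$-guarantee back to the reported $\hat{a}$. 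A minor accompanying check is that $M_n$ is defined with the nominal $x_0$ held fixed, which is precisely the quantity over which the right-hand-side maximum ranges, so the two sides are directly comparable.
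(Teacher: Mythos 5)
Your skeleton matches the paper's proof in its first half: the containment $\bar{R}_\epsilon(S_0)\subseteq S_n$ via the contrapositive of Corollary~\ref{cor:explore} is exactly the paper's final step, and your case~A ($(a^*,x_0)\notin M_n$) is a clean, valid shortcut. The genuine gap is in case~B, which is precisely where the width bound is needed. Lemma~\ref{lem:uncertainty} gives $w_{n+N_n}(\cdot)<\epsilon$ only at iteration $n+N_n$ and only for members of $G_{n+N_n}\cup M_{n+N_n}$. Your premise $(a^*,x_0)\in M_n$ does not put $(a^*,x_0)$ in $M_{n+N_n}$: with $S$ frozen, $u_k$ is non-increasing and $\max_{S}l_k$ is non-decreasing in $k$, so $M_{n+N_n}\subseteq M_n$ but not conversely --- points drop out of $M$ over the window. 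Moreover, the nestedness you invoke transfers bounds in the wrong direction: widths shrink with $k$, so $w_{n+N_n}(a^*,x_0,0)<\epsilon$ implies nothing about $w_n(a^*,x_0,0)$, which is what your chain, anchored at iteration $n$, actually requires. Nor can you repair this by running both cases at $m=n+N_n$ and ``transferring back'': that yields $f(\hat{a}_m,x_0)\ge f^*-\epsilon$ for $\hat{a}_m=\argmax_{(a,x_0)\in S_m}l_m(a,x_0,0)$, and there is no inequality relating $f(\hat{a}_n,x_0)$ to $f(\hat{a}_m,x_0)$: the two recommendations maximize \emph{different} lower bounds, and $f(\hat{a}_n,x_0)$ can be strictly worse --- the whole point of the extra $N_n$ samples is that the recommendation improves. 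Concretely, if at iteration $n$ the interval at $a^*$ is still wide ($l_n(a^*,x_0,0)$ small, $u_n(a^*,x_0,0)$ and $f(a^*,x_0)$ large) while $\hat{a}_n$ is a tight but mediocre point, the hypothesis $S_{n+N_{\mathrm{max},n}}=S_n$ can still hold, yet $f(\hat{a}_n,x_0)<f^*-\epsilon$.

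The paper's proof sidesteps both problems simultaneously. First, it carries out the \emph{entire} argument at the terminal iteration $n+N_{\mathrm{max},n}$, including the recommendation: in its displayed chain, $\hat{a}$ is treated as the maximizer of $l_{n+N_{\mathrm{max},n}}$ over the (unchanged) safe set, consistent with Algorithm~\ref{alg:global_safeopt}, which returns the best guess computed with the lower bound current at termination. Second, instead of a case split, it chooses $(a^*,x_0)=\argmax_{(a,x_0)\in S_{n+N_{\mathrm{max},n}}}f(a,x_0)$, the maximizer of the \emph{true} reward over the safe set (not over $\bar{R}_\epsilon(S_0)$, which may be a proper subset of $S_n$); for this choice, $u_k(a^*,x_0,0)\ge f(a^*,x_0)\ge\max_{S}f\ge\max_{S}l_k$ holds at \emph{every} iteration $k$ with $S_k=S_n$, so membership in $M_{n+N_{\mathrm{max},n}}$ is automatic and Lemma~\ref{lem:uncertainty} applies at exactly the iteration where it is stated; the claim about $\bar{R}_\epsilon(S_0)$ then follows at the end from $\max_{\bar{R}_\epsilon(S_0)}f\le\max_{S_{n+N_{\mathrm{max},n}}}f$. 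If you re-anchor your proof so that $\hat{a}$, $M$, and the widths all live at iteration $n+N_{\mathrm{max},n}$ (the index set of the argmax defining $\hat{a}$ is unchanged since $S_{n+N_{\mathrm{max},n}}=S_n$), your case split goes through; anchored at iteration $n$ as written, it does not.
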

\begin{proof}
Let 
\[
(a^*,x_0) \coloneqq \argmax_{(a,x_0)\in S_{n+\mathrm{N}_{\mathrm{max},n}}}f(a,x_0),
\]
which is by definition contained in $M_{n+\mathrm{N}_{\mathrm{max},n}}$.
This implies $w_{n+\mathrm{N}_{\mathrm{max},n}}(a^*,x_0,i) < \epsilon\,\forall i\in\mathcal{I}$ by Lemma~\ref{lem:uncertainty}.
For the sake of contradiction, assume that 
\begin{align}
\label{eqn:opt_contr}
f(\hat{a},x_0) < f(a^*,x_0)-\epsilon.
\end{align}
Then, we have
\begin{align*}
    l_{n+\mathrm{N}_{\mathrm{max},n}}(a^*,x_0,0)&\le l_{n+\mathrm{N}_{\mathrm{max},n}}(\hat{a},x_0,0) \tag*{By Def.\ of $\hat{a}$}\\
    &\le f(\hat{a},x_0)\tag*{Lem.~\ref{lem:uncertainty_bound}}\\
    &< f(a^*,x_0)-\epsilon\tag*{By~\eqref{eqn:opt_contr}}\\
    &\le u_{n+\mathrm{N}_{\mathrm{max},n}}(a^*,x_0,0)-\epsilon\tag*{Lem.~\ref{lem:uncertainty_bound}}\\
    &\le l_{n+\mathrm{N}_{\mathrm{max},n}}(a^*,x_0,0), \tag*{as $a^*\in M_{n+\mathrm{N}_{\mathrm{max},n}}$}
\end{align*}
which is a contradiction.

Finally, by Corollary~\ref{cor:explore}, $S_{n+\mathrm{N}_{\mathrm{max},n}}=S_n$ implies that $\bar{R}_\epsilon(S_0)\subseteq S_n=S_{n+\mathrm{N}_{\mathrm{max},n}}$.
Therefore,
\begin{align}
    \max_{(a,x_0)\in\bar{R}_\epsilon(S_0)}f(a,x_0)-\epsilon &\le \max_{(a,x_0)\in S_{n+\mathrm{N}_{\mathrm{max},n}}}f(a,x_0)-\epsilon\nonumber\\
    &=f(a^*,x_0)-\epsilon\\
    &\le f(\hat{a},x_0).\nonumber
\end{align}
\end{proof}
For Lemma~\ref{lem:eps_convergence}, we need that $S_{n+\mathrm{N}{_\mathrm{max},n}}=S_n$ is satisfied for some $n$.
The fact that this happens follows from~\cite[Lem.~10]{berkenkamp2016bayesian}.
Thus, \ourmethod will converge to the optimum of the safely reachable set with $\epsilon$-precision with probability at least $1-\delta$, which proves Theorem~\ref{thm:glob_optimality}.
If the optimum of the safely reachable set is at the same time the global optimum, \ourmethod converges with $\epsilon$-precision to the global optimum, which proves Corollary~\ref{cor:glob_optimality}.

\end{document}